%% file: main.tex
\documentclass{article}

\PassOptionsToPackage{numbers, compress}{natbib}
 \usepackage[preprint]{neurips_2026}

\usepackage[utf8]{inputenc} 
\usepackage[T1]{fontenc}    
\usepackage{hyperref}       
\usepackage{url}            
\usepackage{booktabs}       
\usepackage{amsfonts}       
\usepackage{nicefrac}       
\usepackage{microtype}      
\usepackage{xcolor}         

\usepackage{dsfont}
\usepackage{amsmath}
\usepackage{amssymb}
\usepackage{mathtools}
\usepackage{amsthm}
\usepackage{enumitem}
\usepackage{pifont}
\newcommand*\diff{\mathop{}\!\mathrm{d}}
\newcommand{\cmark}{\ding{51}}%
\newcommand{\xmark}{\ding{55}}%

\usepackage[normalem]{ulem}
\useunder{\uline}{\ul}{}

\usepackage{keytheorems}
\newkeytheorem{theorem}[name=Theorem, numberwithin=section]
\newkeytheorem{proposition}[name=Proposition, numberlike=theorem]

\usepackage{xspace}

\theoremstyle{plain}

\theoremstyle{definition}

\theoremstyle{remark}

\usepackage[textsize=tiny]{todonotes}
\usepackage{algorithm}
\usepackage[noend]{algorithmic}
\usepackage{multirow}
\usepackage[table,dvipsnames]{xcolor}
\usepackage{wrapfig}
\usepackage{caption}

\usepackage{xspace}
\newcommand{\ourmodel}{GPFlow\xspace}

\title{Variable-Length Generative Protein Design via \\ Generalized Poisson Flow}

\author{%
\makebox[\linewidth][c]{
  Chaoran Cheng$^{*}$, \,
  Zhanghan Ni$^{*}$,  \,
  Yanru Qu$^{*}$,  \,
  Yuxin Chen,  \,
  Ruihan Guo,  \,
  Jiajun Fan, \,
  Ge Liu
}\\
  Siebel School of Computing and Data Science\\ 
  University of Illinois Urbana-Champaign
}

\begin{document}

\maketitle
{
\renewcommand{\thefootnote}{\fnsymbol{footnote}}
\footnotetext[1]{Equal contribution.}
\footnotetext{Correspondence to \texttt{chaoran7@illinois.edu}.}
}

\setcounter{tocdepth}{2}
\addtocontents{toc}{\protect\setcounter{tocdepth}{-10}}

\begin{figure}[htbp]
    \centering
    \makebox[\linewidth][c]{\includegraphics[width=1.15\linewidth]{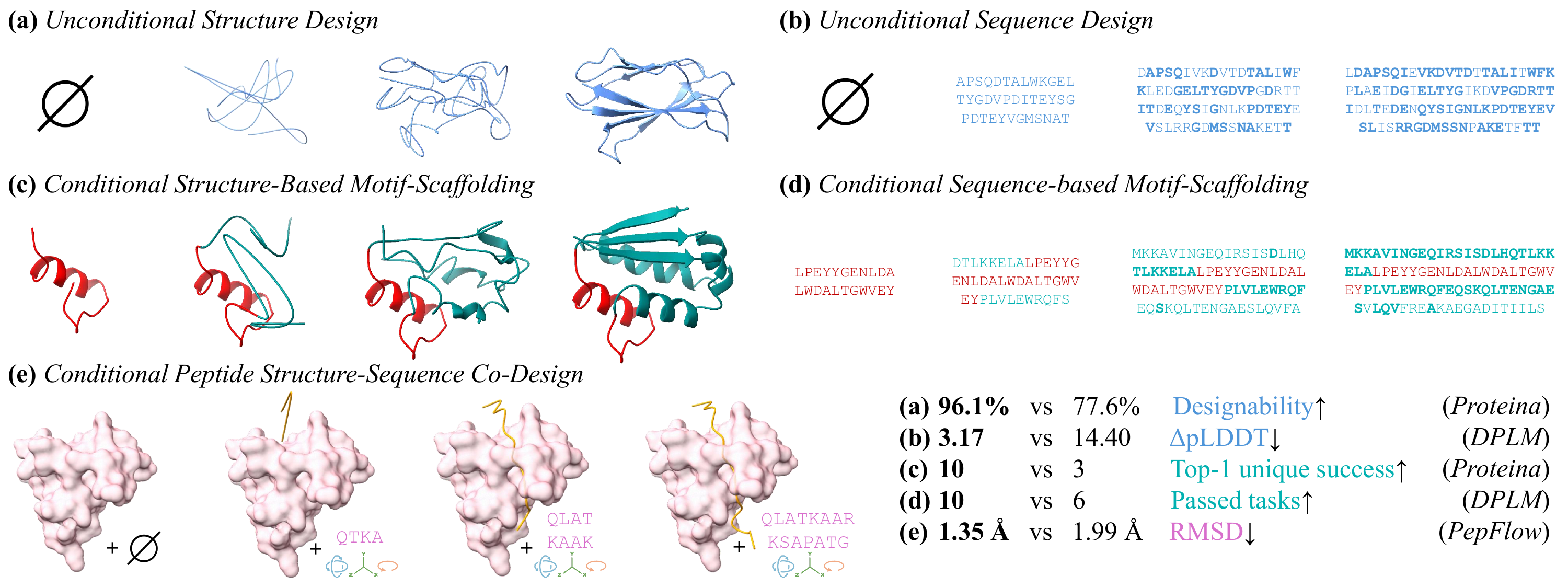}}
\end{figure}

\input{secs/0_abstract}

\input{secs/1_intro}

\input{secs/2_related}

\input{secs/3_method}

\input{secs/pi_background}

\input{secs/4_experiment}

\input{secs/5_conclusion}

\clearpage
\newpage
\section*{Acknowledgments}
This research is supported in part by the Molecule Maker Lab Institute, an AI Research Institutes program supported by NSF under Award No. 2505932, and the DOE Center for Advanced Bioenergy and Bioproducts Innovation (U.S. Department of Energy, Office of Science, Biological and Environmental Research Program under Award Number DESC0018420). Any opinions, findings, and conclusions or recommendations expressed in this publication are those of the author(s) and do not necessarily reflect the views of the U.S. Department of Energy.

\bibliography{ref}
\bibliographystyle{plainnat}

\clearpage
\newpage
\appendix
\onecolumn
\centerline{\Large\bf Supplementary Material}

\addtocontents{toc}{\protect\setcounter{tocdepth}{2}}
\vspace{1em}
\tableofcontents
\bigskip

\input{suppl/A_proof}
\input{suppl/B_algorithm}
\input{suppl/C_exp}

\input{suppl/D_ablation}

\input{suppl/E_result}

\input{secs/6_impact}



\end{document}

%% file: secs/0_abstract.tex
\begin{abstract}
The ability to generate variable-length proteins is crucial in protein design, where the optimal length is often unknown and tightly coupled to designability. Current diffusion- and flow-based generative models typically require the protein length to be specified before sampling, limiting their flexibility in exploring the feasible design space. To address this limitation, we introduce \textbf{Generalized Poisson Flow (\ourmodel)}, a variable-length generative framework that learns the rate function of an inhomogeneous generalized Poisson process by minimizing its negative log-likelihood. We establish population-level guarantees for recovering the joint multimodal distribution and derive an upper bound on the KL divergence between the data and generated distributions. We comprehensively evaluate \ourmodel across structure and sequence design, motif scaffolding, and peptide co-design, spanning Euclidean, categorical, and Riemannian modalities to fully validate its variable-length generation quality. In unconditional design, \ourmodel improves structural designability and achieves the best distributional fitness for sequence design compared to their corresponding fixed-length baselines, while perfectly recovering the length distribution. In conditional motif scaffolding, \ourmodel ranks first on 10 of 16 structure-based design tasks with significantly more unique successes and also achieves more passed tasks in sequence-based design. In peptide co-design, \ourmodel remains competitive even without access to a native-length oracle.
\end{abstract}

%% file: secs/1_intro.tex
\section{Introduction}


Diffusion models~\cite{ho2020denoising,song2019generative} and flow matching~\cite{lipman2022flow} have become widely used frameworks for protein generative modeling and have produced strong results in protein design~\cite{ren2025fast,geffner2025proteina,lin2024out}. However, their standard formulations operate on fixed-dimensional state spaces.
Such a constraint is restrictive for tasks like motif scaffolding, where the optimal length is often unknown \emph{a priori} and is tightly coupled to designability.
Sampling sweep over a pre-specified length range increases inference cost and may miss feasible designs when the selected lengths are incompatible with the conditional constraints.

To bridge this gap, we introduce \textbf{Generalized Poisson Flow (\ourmodel)}, a variable-length generative framework in which length evolves under an inhomogeneous generalized Poisson process. \ourmodel couples this length process to a within-length generator, thereby accommodating continuous, discrete, Riemannian, and mixed length-dependent modalities within the same construction.
Building on a marginalization theorem for the conditional process, we derive a tractable objective from the negative log-likelihood (NLL) of the stochastic process and further extend the construction to multimodal generation with guarantees on the joint distribution recovery. 
Furthermore, we derive a generator-based KL upper bound whose terms align with the additive losses, thereby complementing the population-level recovery guarantee at the optimal marginal generator. 
While the final rate objective aligns with existing work, our formulation provides a common probability-flow construction for continuous, discrete, Riemannian, and mixed modalities, with deeper theoretical connections. 

We consider five diverse protein design scenarios: (a) \textit{unconditional structure design} (Section~\ref{sec:struc_design}), (b) \textit{unconditional sequence design} (Section~\ref{sec:seq_design}), (c) \textit{structure-based motif scaffolding} (Section~\ref{sec:struc_motif}), (d) \textit{sequence-based motif scaffolding} (Appendix~\ref{suppl:seq_motif}), and (e) \textit{peptide structure-sequence co-design} (Section~\ref{sec:pep_design}). 
This task landscape evaluates how the same variable-length construction adapts across modalities and generative goals. By retaining the corresponding fixed-length base architectures, we obtain direct comparisons with their variable-length extensions.
Across these tasks, our experiments provide evidence that dynamic-length modeling can improve corresponding fixed-length systems in the evaluated settings while removing the need to specify a target length. 
For unconditional generation, our small structure model significantly improves designability over the corresponding Proteina base model, while the 642M sequence model more closely matches the UniRef50 statistics and maintains higher diversity than the DPLM base model. On the motif-scaffolding benchmark, the structure model ranks first on 10 of 16 tasks and yields more than a 10-fold increase in unique successes on several challenging targets, while the sequence model also achieves more passed tasks. In peptide co-design, where translation, rotation, residue types, and torsion angles are learned jointly with insertion, \ourmodel remains competitive and outperforms the base PepFlow model even without access to a native-length oracle.
Our contributions are summarized as follows:
\begin{itemize}[noitemsep,leftmargin=*]
    \item \textbf{A unified variable-dimensional probability flow.} We formulate the variable-length generative modeling in \ourmodel with inhomogeneous generalized Poisson probability paths and couple them with continuous, discrete, Riemannian, or mixed within-length dynamics. A single marginalization principle covers these modality types for multimodal generation.
    \item \textbf{Simulation-free likelihood learning with theoretical guarantees.} We derive the rate objective from the generalized Poisson trajectory likelihood while analytically marginalizing its auxiliary insertion times, so training is fully simulation-free. We further prove population-level recovery of the joint multimodal distribution and establish a generator-based KL upper bound.
    \item \textbf{Variable-length generation across five protein-design settings.} We instantiate the framework for protein structure design, sequence design, motif scaffolding, and peptide structure-sequence co-design. Across all tasks, direct comparisons with the fixed-length base model demonstrate \ourmodel's superior performance and flexibility. We achieve superior structural designability, better sequence fitness, more unique successes and passed tasks in motif scaffolding, and stable peptide designs without a native-length oracle.
\end{itemize}

%% file: secs/2_related.tex
\section{Related Work}

\paragraph{Protein Design Models}
Building on diffusion and flow matching, recent generative models for protein design have achieved strong performance. FrameDiff~\cite{yim2023se} adopts Riemannian diffusion to learn the positions and orientations of each amino acid, and FrameFlow~\cite{yim2023fast} further extends it with Riemannian flow matching. Proteina~\cite{geffner2025proteina} is a CA-only flow-based structure generative model that achieves remarkable scaling results. RFDiffusion~\cite{rfdiffusion,butcher2025novo} is a family of diffusion-based models that incorporates the structure-prediction capabilities of RoseTTAFold~\cite{baek2021accurate} into generative modeling. 
Generally, these structure generative models also support motif scaffolding, in which fragments of functionally important protein structures, known as \emph{motifs}, are provided to the model as the conditions. 
There are also dedicated models for motif scaffolding, e.g., Genie2~\cite{lin2024out} is a diffusion-based generative model with a specialized multi-motif framework that designs co-occurring motifs. These models either adopt an inpainting-style sampling that enforces hard constraints by fixing the motif residues and structures, or inject motif information as an explicit but soft conditioning signal.
Additional design models for sequences and peptides are available in Appendix~\ref{suppl:exp}.

\paragraph{Variable-Length Generative Models}
Standard diffusion or flow matching models require a predetermined sampling length, limiting their applicability to variable-length data. Non-diffusion approaches primarily rely on autoregressive modeling~\cite{wang2025comprehensive}.
Two existing works incorporate data length through related stochastic-process constructions. TDDM~\cite{campbell2023trans} models dimension changes with a learnable jump kernel coupled to continuous diffusion dynamics, but does not treat a categorical length-dependent modality. EditFlow~\cite{havasi2025edit} instead derives insertion rates from discrete flow matching~\cite{gat2024discrete}; it applies to categorical modalities but not to continuous coordinates. Importantly, neither of these models has been applied to the protein design domain.
More recently, SCISOR~\cite{baron_diffusion_2025} approaches variable-length protein sequence generation via a deletion-based process, where random residues are inserted during forward noising and removed by the learned reverse process.

\ourmodel differs primarily in scope and probabilistic interpretation. It uses one unified variable-dimensional probability-flow formulation for continuous, discrete, Riemannian, and mixed modalities. Our rate objective follows from the generalized Poisson trajectory likelihood, in which auxiliary insertion times are analytically marginalized to yield a simulation-free objective. In their overlapping regimes, \ourmodel recovers the insertion-only, modality-specific instantiations of TDDM and EditFlow and yields the same rate objective. Our contribution is therefore the unified construction and likelihood interpretation with the new KL bound on the training dynamics. While the original TDDM and EditFlow papers do not evaluate protein design tasks, our experiments instantiate the framework across five comprehensive settings for unconditional and conditional protein design.

%% file: secs/3_method.tex
\section{Generalized Poisson Flow}

To construct a probability path $\{p_t\}_{t\in[0,1]}$ for a stochastic state $Y_t$ using flow matching (or diffusion), one needs (a) $p_1\approx q$, where $q$ is the data distribution, (b) simulation-free sampling from $p_t$ for feasible training, and (c) a tractable loss for learning the marginal flow. We first derive theoretical results for the length-only state $Y_t=X_t\in\mathbb{N}$ and then extend them to the multimodal state $Y_t=(X_t,S_t)\in\mathsf Y$, where $S_t$ may comprise arbitrary discrete or continuous length-dependent modalities or their mixes. We demonstrate how to construct feasible probability paths using the generalized Poisson process.

\subsection{Length-Only Generalized Poisson Flow}\label{sec:length_only}
A \emph{generalized Poisson process} $\{X_t\}_{0\le t\le T}$ is a continuous-time Markov process defined by a \emph{rate function} $\lambda_t:[0,T]\times \mathbb{N}\to\mathbb{R}^{\ge 0}$, such that $\Pr(X_{t+h}=X_t+1)=\lambda_t(X_t)h+o(h)$ and $\Pr(X_{t+h}=X_t)=1-\lambda_t(X_t)h+o(h)$. Consistent with the flow matching convention, we will assume the generalized Poisson process to be defined on the time interval $[0,1]$ with $X_0=0$. We now show one feasible construction of the probability path $p_t(X)$ using the generalized Poisson process with the following theorem:

\begin{theorem}[store=forwardrate]\label{thm:forward_rate}
Let $\kappa:[0,1]\to[0,1]$ be a nondecreasing, absolutely continuous scheduler with $\kappa_0=0$ and $\kappa_1=1$, and define the \emph{completion time} $\tau_\kappa:=\inf\{t:\kappa_t=1\}$. For any fixed $L\ge 1$, assume $\kappa_t<1$ for $t<\tau_\kappa$ and define the insertion \emph{hazard} and \emph{conditional rate} by
\begin{equation}
    h_t:=\begin{cases}
    \dfrac{\dot\kappa_t}{1-\kappa_t}, & t<\tau_\kappa,\\
    0, & t\geq\tau_\kappa,
    \end{cases}
    \qquad
    \lambda_t(X_t|L):=(L-X_t)h_t.
    \label{eqn:cond_rate}
\end{equation}
Then, the resulting process satisfies $X_t\sim B(L,\kappa_t)$ for $t<\tau_\kappa$ and is absorbed at $X_t=L$ for $t\geq\tau_\kappa$.
\end{theorem}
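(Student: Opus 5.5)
The plan is to verify the claimed marginals directly through the Kolmogorov forward equation of the pure-birth chain on $\{0,1,\dots,L\}$; a coupling argument then confirms the same law. Since $\lambda_t(X|L)=(L-X)h_t$ vanishes at $X=L$, the chain never leaves $\{0,\dots,L\}$. Writing $p_t(k)=\Pr(X_t=k)$, the forward equation reads
\begin{equation*}
\dot p_t(k)=(L-k+1)h_t\,p_t(k-1)-(L-k)h_t\,p_t(k),\qquad p_0(k)=\mathds{1}[k=0],
\end{equation*}
with $p_t(-1):=0$. This is a finite linear system with locally integrable coefficients on $[0,\tau_\kappa)$, because $h_t=\dot\kappa_t/(1-\kappa_t)$ is integrable on every $[0,s]$ with $s<\tau_\kappa$. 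Its solution is therefore unique in the Carathéodory (absolutely continuous) sense, and it suffices to exhibit one solution.

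The candidate is $q_t(k)=\binom{L}{k}\kappa_t^k(1-\kappa_t)^{L-k}$, which satisfies $q_0=\delta_0$ because $\kappa_0=0$. Differentiating almost everywhere gives
\begin{equation*}
\dot q_t(k)=\binom{L}{k}\dot\kappa_t\bigl[k\kappa_t^{k-1}(1-\kappa_t)^{L-k}-(L-k)\kappa_t^k(1-\kappa_t)^{L-k-1}\bigr].
\end{equation*}
Substituting $h_t$ and using $\binom{L}{k-1}(L-k+1)=\binom{L}{k}k$, the gain term becomes $\binom{L}{k}k\,\dot\kappa_t\,\kappa_t^{k-1}(1-\kappa_t)^{L-k}$ and the loss term becomes $\binom{L}{k}(L-k)\,\dot\kappa_t\,\kappa_t^{k}(1-\kappa_t)^{L-k-1}$. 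These match the two terms of $\dot q_t(k)$, so $q_t$ solves the forward equation and hence $X_t\sim B(L,\kappa_t)$ for $t<\tau_\kappa$.

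A more transparent cross-check, and the one I would include, is the coupling argument. Assign each of the $L$ slots an independent insertion time $T_i$ with $\Pr(T_i\le t)=\kappa_t$, so that $T_i$ has hazard $h_t$. Then $X_t=\sum_i\mathds{1}[T_i\le t]$ is binomial. At count $X_t$ there remain $L-X_t$ uninserted slots, each firing at hazard $h_t$ independently given the past (by the memoryless property of hazards), so the total jump rate is $(L-X_t)h_t$. This process therefore has the same generator as the one in the statement, and uniqueness of the law of a pure-birth chain with bounded-on-compacts rates identifies the two.

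The main obstacle is the boundary at $t\to\tau_\kappa$, where $h_t$ may blow up. For $t\ge\tau_\kappa$, the coupling gives $\Pr(T_i\le\tau_\kappa)=\kappa_{\tau_\kappa}=1$ by continuity of $\kappa$, so $X_{\tau_\kappa}=L$ almost surely. Since $h_t=0$ afterward, and in any case the rate vanishes at $X=L$, the process stays absorbed at $L$. Equivalently, $B(L,\kappa_t)\to\delta_L$ as $t\uparrow\tau_\kappa$, and the monotone paths of the chain give almost-sure convergence, so no explosion issue arises because there are at most $L$ jumps in total.
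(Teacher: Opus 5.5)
Your proof is correct and follows essentially the paper's route. The paper passes to the death process $L-X_t$ and checks that the binomial PGF $(\kappa_t+z(1-\kappa_t))^L$ solves $\partial_t G=h_t(1-z)\partial_z G$; you instead check the binomial pmf against the forward system term by term, and usefully make explicit the uniqueness the paper leaves implicit, via $\int_0^s h_t\,\mathrm dt=-\log(1-\kappa_s)<\infty$. Your treatment of $t\to\tau_\kappa$ (monotone paths plus $B(L,\kappa_t)\to\delta_L$) is the paper's argument, and the independent-insertion-time coupling is an extra that mirrors the thinning construction the paper uses for training.

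One small slip: $h_t$ is only locally integrable on $[0,\tau_\kappa)$, not bounded on compacts (e.g.\ $\kappa_t=\sqrt t$ gives $h_t\to\infty$ as $t\to 0$). So the uniqueness you invoke to identify the coupled process with the rate-defined one should rest on that integrability, not on boundedness.
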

See the proof in Appendix~\ref{suppl:proof_rate}.
Theorem~\ref{thm:forward_rate} offers a simulation-free approach for sampling $X_t$ and guarantees $X_1=L$ almost surely. 
Similar to standard flow matching for continuous modalities, we then establish the following marginalization theorem:
\begin{theorem}[store=marginal]\label{thm:marginal}
For any scheduler $\kappa_t$ defined above, let $q$ be a target length distribution, define $p_t(k|y):=\Pr(X_t=k|X_1=y)$, and let $p_t(k):=\sum_y p_t(k|y)q(y)$. For every $k$ with $p_t(k)>0$, the posterior probability is $p(X_1=y|X_t=k):={p_t(k|y)q(y)}/{p_t(k)}$.
Then the \emph{marginal rate}
\begin{equation}
    \lambda_t^*(k):=\mathbb{E}_{X_1\sim p(\cdot|X_t=k)}
    [\lambda_t(k|X_1)]
    \label{eqn:marginal}
\end{equation}
generates the marginal path $p_t$ and hence $p_1=q$.
\end{theorem}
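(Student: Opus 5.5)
The plan is to verify that $p_t$ satisfies the Kolmogorov forward equation of a pure-birth chain whose rate is $\lambda^*_t$, and then to invoke uniqueness of solutions to that equation.

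For a pure-birth process with rate $\lambda_t(\cdot)$ started at $X_0=0$, the master equation reads
\begin{equation*}
\frac{\diff}{\diff t}p_t(k) = \lambda_t(k-1)\,p_t(k-1) - \lambda_t(k)\,p_t(k),
\end{equation*}
with $p_t(-1):=0$. By Theorem~\ref{thm:forward_rate}, for each fixed $y$ the conditional path $p_t(\cdot|y)$ is generated by $\lambda_t(\cdot|y)$. Hence, for a.e.\ $t$,
\begin{equation*}
\frac{\diff}{\diff t}p_t(k|y)=\lambda_t(k-1|y)p_t(k-1|y)-\lambda_t(k|y)p_t(k|y).
\end{equation*}
For $t<\tau_\kappa$ this can also be checked directly from $p_t(k|y)=\binom{y}{k}\kappa_t^k(1-\kappa_t)^{y-k}$ and $h_t=\dot\kappa_t/(1-\kappa_t)$. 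For $t\ge\tau_\kappa$ both sides vanish. Absolute continuity of $\kappa$ ensures these identities hold in integrated form.

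Next I multiply by $q(y)$ and sum over $y$. The left side becomes $\frac{\diff}{\diff t}p_t(k)$. On the right, each term has the form $\sum_y \lambda_t(j|y)p_t(j|y)q(y)$ with $j\in\{k-1,k\}$. By the definition of the posterior, this equals $p_t(j)\,\mathbb{E}_{X_1\sim p(\cdot|X_t=j)}[\lambda_t(j|X_1)] = p_t(j)\lambda^*_t(j)$ whenever $p_t(j)>0$. When $p_t(j)=0$, every summand vanishes, since $\lambda_t(j|y)\ge 0$ and $p_t(j|y)q(y)=0$. In that case the term is $0$ regardless of how $\lambda^*_t(j)$ is defined there. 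So $p_t$ satisfies the forward equation with rate $\lambda^*_t$, and it matches the initial condition $p_0=\delta_0$, because $\kappa_0=0$ forces $p_0(\cdot|y)=\delta_0$.

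The main obstacle is concluding that the process generated by $\lambda^*_t$ actually has marginals $p_t$, i.e.\ uniqueness of solutions to the forward system, together with the justification for interchanging sum and derivative when $q$ has unbounded support. My approach to uniqueness is to note that the system is lower-triangular in $k$. Starting from $p_t(0)$, each $p_t(k)$ solves a scalar linear ODE driven by $p_t(k-1)$, so induction on $k$ gives a unique solution. This works as long as $\lambda^*_t(k)$ is locally integrable in $t$. That holds because $\lambda^*_t(k)\le h_t\,\mathbb{E}[X_1-k\mid X_t=k]$ and $\int_0^t h_s\,\diff s=-\log(1-\kappa_t)<\infty$ for $t<\tau_\kappa$.

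For the interchange, I plan to work in integrated form, $p_t(k|y)-p_0(k|y)=\int_0^t(\cdots)\diff s$. Tonelli then applies because all terms are nonnegative and separately integrable, which avoids any need to control $\mathbb{E}_q[X_1]$.

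Finally, $p_1=\sum_y p_1(\cdot|y)q(y)=\sum_y\delta_y q(y)=q$, using $X_1=y$ almost surely from Theorem~\ref{thm:forward_rate}. Beyond $\tau_\kappa$ both the conditional and marginal rates vanish, so the marginals are frozen and $p_1=p_{\tau_\kappa}=q$.
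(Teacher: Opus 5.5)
Your proof follows the paper's argument: you write the conditional Kolmogorov forward equation, average it against $q$, identify $\sum_y q(y)\lambda_t(j|y)p_t(j|y)=p_t(j)\lambda_t^*(j)$ via the posterior, and read off the forward equation of a pure-birth chain with rate $\lambda_t^*$. Your handling of null states, the Tonelli interchange in integrated form, and the triangular uniqueness step add rigor the paper omits, with one correction: the bound $\lambda_t^*(k)\le h_t\,\mathbb{E}[X_1-k\mid X_t=k]$ gives integrability on $[0,t]$ only if the posterior excess stays bounded as $t\to 0$, which fails for heavy-tailed $q$ (e.g.\ $q(y)\propto y^{-\alpha}$ with a linear schedule gives $\lambda_t^*(k)\asymp 1/t$ for $k>\alpha-1$), but uniqueness still holds because the difference $d$ of two absolutely continuous solutions at level $k$ satisfies $\tfrac{\diff}{\diff t}|d|=-\lambda_t^*(k)|d|\le 0$ a.e.\ with $d(0)=0$.
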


See Appendix~\ref{suppl:proof_marginal} for the proof, which relies primarily on the Kolmogorov forward equation as the continuity equation.
Theorem~\ref{thm:marginal} guarantees that any length distribution $q$ can be modeled using a generalized Poisson process and also ensures the data constraint $p_1=q$ for constructing a probability flow $p_t$ over the discrete length space. We therefore refer to our generative framework as \emph{Generalized Poisson Flow}.
To effectively learn the flow, in Proposition~\ref{prop:pp_nll}, we prove that the negative log-likelihood of a realization of a generalized Poisson process can be calculated as $\mathcal{L}_\text{NLL}= \int_0^1\lambda_t\diff t-\sum_{k=1}^L\log \lambda_{t_k}$, where the two terms can be viewed as the NLL for the survival and event terms, respectively. 
The insertion times are auxiliary variables used to construct the probability path, and by the integration formula for a stochastic point process~\citep{bremaud1981point}, the expected sum over insertion times can be marginalized into an integral against the ground-truth marginal rate $\lambda^*_t$.
Combining the marginalization result in Theorem~\ref{thm:marginal} and the conditional rate in Eq.~\ref{eqn:cond_rate}, we have:
\begin{align}
    \mathbb{E}_{\{t_k\}_{k=1}^L,L}\left[-\sum_{k=1}^L\log \lambda_{t_k}\right]=\mathbb{E}_t[-\lambda^*_t\log\lambda_t]
    =\mathbb{E}_{t,X_1\sim q(X)}\left[-h_t(X_1-X_t)\log\lambda_t\right].
\end{align}
Therefore, we arrive at a tractable, simulation-free NLL loss that requires only an endpoint $X_1$, a sampled time $t$, and the corresponding intermediate state $X_t$ obtained from Theorem~\ref{thm:forward_rate}:
\begin{equation}
    \mathcal{L}_\text{GP}=\mathbb{E}_{t,X_1\sim q(X)}\left[\lambda_t-h_t(X_1-X_t)\log\lambda_t\right].\label{eqn:nll}
\end{equation}
 We note that, unlike continuous flow matching, in which the ground-truth marginal vector field is intractable, Theorem~\ref{thm:forward_rate} provides a closed-form conditional rate target. 

\subsection{Multimodal Generalized Poisson Flow}\label{sec:order_agnostic}
To truly unlock variable-length protein structure and sequence design, we now extend the length-only formulation to incorporate other length-dependent modalities such as continuous coordinates or discrete types.
Let $\mathsf S$ be a component space and define the variable-length state space $\mathsf Y:=\bigsqcup_{k\geq0}\{k\}\times\mathsf S^k$. The component space may be continuous (with a Lebesgue or manifold-volume reference measure), discrete (with counting measure), or a product of such spaces. We write $Y_t=(X_t,S_t)\in\mathsf Y$, where $S_t=(S_t^1,\dots,S_t^{X_t})$. An \emph{insertion} of a component can be described by a rate $\lambda_t$ and a sampling distribution $\rho_t$ that selects the new component value (and, in the ordered case, its insertion position). Between insertions, a within-length generator $\mathcal B_t$ refines the existing components. For continuous modalities, $\mathcal B_t f=v_t\cdot\nabla f$ (with an optional diffusion term); for discrete modalities, $\mathcal B_t$ is the corresponding continuous-time Markov-chain generator.

The key observation is that the conditional generators associated with individual clean targets can be marginalized under the posterior of the target given the current state. The following theorem states this construction; a measure-theoretic formulation and proof are provided in Appendix~\ref{suppl:proof_multimodal}.
\begin{theorem}[store=multimodal]\label{thm:multimodal}
Let $Y_1=(X_1,S_1)\sim q$ index conditional probability paths $p_t(\cdot|Y_1)$ on $\mathsf Y$, with $p_1(\cdot|Y_1)=\delta_{Y_1}$. At state $y\in\mathsf Y$, suppose each conditional process has within-length generator $\mathcal B_t^{Y_1}$, insertion rate $\lambda_t(y|Y_1)$, and sampling distribution $\rho_t(\diff m|y,Y_1)$ for the insertion variable $m$. For any $y$ with $p_t(y)>0$, the posterior of the clean target is $p(\diff y_1|Y_t=y):={p_t(y|y_1)q(\diff y_1)}/{p_t(y)}$, where $p_t=\int p_t(\cdot|y_1)q(\diff y_1)$.
Define the marginals by expectations over this posterior:
\begin{gather}
    \mathcal B_t^*f(y)
    :=\mathbb E_{Y_1\sim p(\cdot|Y_t=y)}[\mathcal B_t^{Y_1}f(y)],\quad
    \lambda_t^*(y)
    :=\mathbb E_{Y_1\sim p(\cdot|Y_t=y)}[\lambda_t(y|Y_1)],\\
    \rho_t^*(\diff m|y)
    :=\frac{\mathbb E_{Y_1\sim p(\cdot|Y_t=y)}
    [\lambda_t(y|Y_1)\rho_t(\diff m|y,Y_1)]}
    {\mathbb E_{Y_1\sim p(\cdot|Y_t=y)}[\lambda_t(y|Y_1)]},
\end{gather}
where $\rho_t^*$ can be arbitrary when the marginal rate is zero. Then the process with $\mathcal B_t^*$ and insertion kernel $Q_t^*=\lambda_t^*\rho_t^*$ generates the marginal $p_t$ and hence $p_1=q$.
\end{theorem}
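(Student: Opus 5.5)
The argument follows the proof of Theorem~\ref{thm:marginal}, in weak form. We work with the Kolmogorov forward equation on $\mathsf Y$ tested against bounded test functions $f$ in the domain of the generators. For a process with within-length generator $\mathcal B_t$ and insertion kernel $Q_t(\diff m|y)$, the full generator is
\[
\mathcal A_t f(y)=\mathcal B_t f(y)+\int \bigl(f(y\oplus m)-f(y)\bigr)\,Q_t(\diff m|y),
\]
where $y\oplus m$ is the state obtained by inserting $m$ (value and, if ordered, position), so $X$ increases by one. The claim "generates $p_t$" will mean $\frac{\diff}{\diff t}\int f\,\diff p_t=\int \mathcal A^*_t f\,\diff p_t$ for all such $f$. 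Together with uniqueness of solutions of the forward equation, this identifies $p_t$ as the law of the marginal process started from $p_0$.

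The first step is to record the conditional statement. By hypothesis, each conditional path satisfies
\[
\frac{\diff}{\diff t}\int f(y)\,p_t(\diff y|y_1)=\int \mathcal A_t^{y_1}f(y)\,p_t(\diff y|y_1),
\]
where $\mathcal A_t^{y_1}$ uses $\mathcal B_t^{y_1}$ and $Q_t^{y_1}=\lambda_t(\cdot|y_1)\rho_t(\cdot|\cdot,y_1)$. In the length component this is exactly Theorem~\ref{thm:forward_rate}, combined with whatever within-length flow or CTMC is chosen.

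The second step integrates against $q(\diff y_1)$ and exchanges $\frac{\diff}{\diff t}$ with the $y_1$-integral. This needs a domination condition, namely integrable bounds on $\mathcal A_t^{y_1}f$ uniformly in $t$, which I will state as a regularity assumption. Fubini then gives
\[
\frac{\diff}{\diff t}\int f\,\diff p_t=\iint \mathcal A_t^{y_1}f(y)\,p_t(y|y_1)\,q(\diff y_1)\,\diff y .
\]
Next, disintegrate the joint law of $(Y_1,Y_t)$ as $p_t(\diff y)\,p(\diff y_1|Y_t=y)$, using the stated posterior density with respect to the reference measure on $\mathsf Y$ (counting on $X$ times Lebesgue, manifold or counting on $\mathsf S^k$). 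The inner integral becomes $\mathbb E_{Y_1\sim p(\cdot|Y_t=y)}[\mathcal A_t^{Y_1}f(y)]$.

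The third step is the key algebraic observation: this posterior average equals $\mathcal A_t^*f(y)$. For the drift/CTMC part this holds by linearity and the definition of $\mathcal B_t^*$. For the jump part,
\[
\mathbb E_{Y_1}\Bigl[\lambda_t(y|Y_1)\int (f(y\oplus m)-f(y))\,\rho_t(\diff m|y,Y_1)\Bigr]=\int (f(y\oplus m)-f(y))\,\mathbb E_{Y_1}\bigl[\lambda_t\rho_t(\diff m)\bigr],
\]
and the last measure is exactly $\lambda_t^*\rho_t^*(\diff m|y)=Q_t^*$. Where $\lambda_t^*(y)=0$, the integrand vanishes posterior-a.s., so the choice of $\rho_t^*$ is irrelevant. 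States with $p_t(y)=0$ contribute nothing.

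The main obstacle is the final identification step and the measure-theoretic justification. One needs the posterior to be a genuine regular conditional probability on the disjoint union, measurability of $y_1\mapsto \mathcal B_t^{y_1}f(y)$, and uniqueness for the marginal forward equation, since the marginal generator could be irregular. I would handle uniqueness by assuming it, or by deriving it from bounded total insertion rate. Here the rate is bounded by $L\,h_t$ for $t<\tau_\kappa$, with a local integrability assumption on $h$. Standard well-posedness of the continuous part (Lipschitz $v_t^*$) then applies on each fixed-length slice. Piecing these together via a sequential jump construction gives the result. Finally, $p_1=\int\delta_{y_1}\,q(\diff y_1)=q$ follows from the endpoint condition.
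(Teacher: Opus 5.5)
Your proposal is correct and follows essentially the same route as the paper. Both take the conditional weak forward equation, integrate over $q$, use the disintegration $q(\diff y_1)p_t(\diff y|y_1)=p_t(\diff y)\,p(\diff y_1|Y_t=y)$ to turn the $y_1$-average into a posterior expectation, identify the averaged jump measure as $\lambda_t^*\rho_t^*$, and read off $p_1=q$ from the endpoint condition. The paper stops at the marginal weak forward equation and never addresses uniqueness, so your well-posedness remarks go beyond it; the only fix there is that the rate bound should be $L_{\max}h_t$ under a bounded-support length distribution, since $X_1$ is random under the marginal process.
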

The proof follows the same posterior-marginalization principle as Theorem~\ref{thm:marginal}. In the continuous case, $\mathcal B_t^*$ reduces to the usual marginal vector field; in the discrete case, the same posterior expectation marginalizes the within-length transition rates. 
We learn $\lambda_t^*$ using the Poisson NLL in Eq.~\ref{eqn:nll}. The within-length generator is learned with the appropriate continuous or discrete flow-matching loss $\mathcal{L}_\text{FM}$. As the sampling distribution $\rho_t(\cdot|y,Y_1)$ is a mixture over the $X_1-X_t$ upcoming insertions, the general reconstruction NLL is:
\begin{equation}
    \mathcal{L}_\text{rec}=\mathbb{E}_{Y_1,Y_t}\left[-h_t\sum_{k=1}^{X_1-X_t}\log\rho_t(Y_t,s_k)\right],\label{eqn:loss_rec}
\end{equation}
where the marginal rate term $(X_1-X_t)h_t$ is combined with the marginal length denominator of $1/(X_1-X_t)$ to give the final coefficient. Generally, the final loss for multimodal \ourmodel is
\begin{equation}
    \mathcal{L}=\mathcal{L}_\text{GP}+w_\text{rec}\mathcal{L}_\text{rec}+w_\text{FM}\mathcal{L}_\text{FM},\label{eqn:loss}
\end{equation}
where $w_\text{rec},w_\text{FM}$ are hyperparameters. We describe the parameterization choices for learning the flow and sampling distributions in Appendix~\ref{suppl:flow_param} and \ref{suppl:sample_param}.
Sampling from a multimodal \ourmodel generally follows the procedure discussed above, in which flow sampling for the existing length-dependent modalities is performed using the standard approach with the learned $\mathcal B_t$. An event is then sampled according to the learned rate $\lambda_t$, and if successful, it will draw the new insertion value from the learned sampling distribution $s_\text{new}\sim\rho_t$. We outline the sampling procedure in Algorithm~\ref{alg:sample_set}.

In practice, because proteins have a natural ordering according to the residue index, we need to extend our formulation to an order-preserving multimodal \ourmodel. Intuitively, this can be easily obtained by modeling $X_t+1$ separate rates and sampling distributions to represent the different insertion positions while preserving the order. A more rigorous formulation together with a modified characterization of the sampling distribution is available in Appendix~\ref{suppl:order_preserve}. 

\begin{algorithm}[ht]
\small
\caption{Sampling from Multimodal \ourmodel (Euler)}\label{alg:sample_set}
\begin{algorithmic}[1]
\STATE $Y_0=(X_0=0,S_0=\emptyset)$.
\FOR{$i\gets 1,2,\dots,N$}
    \STATE $t\gets t_i,\Delta t\gets t_{i+1}-t_i$, where $t_{N+1}:=1$.
    \STATE Predict the rate $\lambda_{\theta}(Y_t,t)$, the vector field $v_{\theta}(Y_t,t)$, and the sampling distribution $\rho_{\theta}(Y_t,t)$.
    \STATE Advance the generative flow for each $S_t$ with $v_{\theta}$ to obtain $\hat S_{t+\Delta t}$.
    \STATE Simulate the Poisson process by sampling an event on $X_{t+\Delta t}$ with probability $\lambda_\theta\Delta t$.
    \STATE If an event happens, insert the new feature value $s_\text{new}\sim\rho_{\theta}$ into $\hat S_{t+\Delta t}$ to obtain $S_{t+\Delta t}$.
    \STATE Update $Y_{t+\Delta t}=(X_{t+\Delta t},S_{t+\Delta t})$.
\ENDFOR
\STATE \textbf{Return:} $Y_1=(X_1,S_1)$.
\end{algorithmic}
\end{algorithm}

Adapting to conditional generation is generally straightforward for \ourmodel and does not affect the theoretical results. In the multimodal setup, the condition parts $Y_0=(X_c,S_c)$ are sampled from the clean conditioning data and remain unchanged along the conditional probability path. During sampling, instead of starting from an empty list, we initialize from the provided conditions and their length-dependent modality, mimicking the standard ``inpainting'' setup. Depending on the task, we may or may not allow insertions between the conditions. For example, in the motif scaffolding setup, insertion is not allowed inside a contiguous motif part, but is allowed between different motifs and at the beginning and end.

%% file: secs/pi_background.tex
\section{Theoretical Guarantees on Training Dynamics}\label{sec:background}

In this section, we derive a generator-based upper bound on the KL divergence between the ground-truth and generated distributions and relate its terms to the exact likelihood components of our loss objective.
We consider general Markov processes as a superset of \ourmodel. We first introduce the \emph{generator} as a useful tool for describing the time evolution of their probability densities, and then derive a bound on the KL divergence between the distributions generated by two Markov processes.
According to~\citet{holderrieth2024generator}, under some regularity assumptions, a Markov process $\{X_t\}_{t\ge0}$ can be characterized with its generator
$\mathcal{A}_t f(X_t) = \lim_{h \to 0^+} (\mathbb{E}[f(X_{t+h}) | X_t] - f(X_t))/h$, which can be determined by the velocity field $u_t$, the diffusion coefficient $\Sigma_t$, and the jump measure $Q_t$ as:
\begin{equation}
    \mathcal{A}_t f(x) = u_t(x) \cdot \nabla f(x) + \frac{1}{2} \text{Tr}\left( \Sigma_t(x) \nabla^2 f(x) \right) 
    + \int \left[ f(y) - f(x) \right] Q_t(\mathrm dy | x).
\end{equation}
If all $X_t$'s have a probability density function $p_t$, then $\mathbb{E}_{x\sim p_t} [\mathcal{A}_t f(x)] = \int \dot p_t(x) f(x)\,\mathrm dx$ connects the generator with the time evolution of the density $p_t$.
In common diffusion models, we consider a Markov process where $\Sigma_t=\sigma_t^2 I$ for some scalar function $\sigma_t$.

\subsection{KL Divergence Bound for Markov Processes}

We now derive a generator-based bound on the KL divergence between the distributions generated by two Markov processes, which generalizes the variable-length result in~\citet{campbell2023trans}.
See Appendix~\ref{suppl:kl_bound} for a detailed proof.

\begin{theorem}[store=klbound]\label{thm:kl_bound}
Let $\{X_t\}_{t\ge0}$ and $\{\tilde X_t\}_{t\ge0}$ be two Markov processes with generators $\mathcal{A}_t$ and $\tilde{\mathcal{A}}_t$, respectively. Suppose $\mathcal{A}_t$ is determined by $(u_t, \sigma_t^2, Q_t)$ and $\tilde{\mathcal{A}}_t$ is determined by $(\tilde u_t, \sigma_t^2, \tilde Q_t)$, so they coincide in the diffusion coefficient.
Assume that both processes have the same initial distribution $p_0$ and that sufficient regularity conditions hold. Then, the KL divergence between $p_T$ and $\tilde p_T$ can be bounded by the loss terms $\mathcal L_t(x)$ as
\begin{equation}
    \mathcal D_\text{KL}(p_T \| \tilde p_T) \leq \int_0^T \mathbb{E}_{x \sim p_t} \left[ \mathcal L_t(x) \right] \mathrm dt,\quad \mathcal{L}_t(x) =
    \frac{\|u_t(x)-\tilde u_t(x)\|_2^2}{2\sigma_t^2} + \mathcal D_\text{KL}(Q_t\|\tilde Q_t),
\end{equation}
where the first drift-diffusion term is omitted when no continuous component is present, and
$\mathcal D_\text{KL}(Q_t\|\tilde Q_t)$ denotes the generalized KL divergence between two positive measures:

\begin{align}
    \mathcal D_\text{KL}(Q\|\tilde Q)=\int \tilde Q(\mathrm dy|x)
    + \int\left( \log\frac{\mathrm dQ(y|x)}{\mathrm d\tilde Q(y|x)}-1 \right)Q(\mathrm dy|x).
\end{align}
\end{theorem}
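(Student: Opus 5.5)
Our plan is to bound the marginal KL by the KL between the two path measures (data processing), and then evaluate the path-space KL with a Girsanov-type formula for jump-diffusions. Let $\mathbb P$ and $\tilde{\mathbb P}$ be the laws on Skorokhod path space $D([0,T],\cdot)$ of the processes with generators $\mathcal A_t$ and $\tilde{\mathcal A}_t$, both started from $p_0$. Their time-$T$ marginals are $p_T$ and $\tilde p_T$. Since the evaluation map $\omega\mapsto\omega(T)$ is measurable, the data-processing inequality (or the chain rule for KL) gives $\mathcal D_\text{KL}(p_T\|\tilde p_T)\le \mathcal D_\text{KL}(\mathbb P\|\tilde{\mathbb P})$. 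It therefore suffices to show that the path-space KL equals $\int_0^T\mathbb E_{p_t}[\mathcal L_t]\,\mathrm dt$.

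\textbf{Radon--Nikodym derivative.} Both processes have the same initial law and the same diffusion coefficient $\sigma_t^2 I$. Under the regularity assumptions (absolute continuity $Q_t(\cdot|x)\ll\tilde Q_t(\cdot|x)$, finite total jump intensities, Novikov-type integrability), Girsanov's theorem for semimartingales with jumps gives the density. It factors into a continuous part and a jump part:
\begin{equation*}
\log\frac{\mathrm d\mathbb P}{\mathrm d\tilde{\mathbb P}}=\int_0^T\frac{(u_t-\tilde u_t)(X_{t^-})}{\sigma_t^2}\cdot\mathrm dW_t+\int_0^T\frac{\|u_t-\tilde u_t\|^2}{2\sigma_t^2}(X_{t^-})\,\mathrm dt+\sum_{s\le T:\,\Delta X_s\neq0}\log\frac{\mathrm dQ_s(\cdot|X_{s^-})}{\mathrm d\tilde Q_s(\cdot|X_{s^-})}(X_s)-\int_0^T\!\!\int\bigl(Q_t-\tilde Q_t\bigr)(\mathrm dy|X_t)\,\mathrm dt,
\end{equation*}
where $W$ is the $\mathbb P$-Brownian motion. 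In the purely discrete or jump case, the stochastic-integral and drift terms vanish. This is the point-process likelihood ratio, consistent with Proposition~\ref{prop:pp_nll} (survival integral minus the sum of log-intensities at event times).

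\textbf{Taking expectations under $\mathbb P$.} The Brownian integral is a martingale and has zero mean. For the jump sum we use the compensator (integration) formula for marked point processes, already used in Section~\ref{sec:length_only} via \citep{bremaud1981point}:
\[
\mathbb E_{\mathbb P}\sum_s g_s(X_{s^-},X_s)=\int_0^T\mathbb E_{x\sim p_t}\int g_t(x,y)\,Q_t(\mathrm dy|x)\,\mathrm dt .
\]
Collecting the terms, the jump contribution at time $t$ and state $x$ is $\int\log\frac{\mathrm dQ}{\mathrm d\tilde Q}\,\mathrm dQ-\int\mathrm dQ+\int\mathrm d\tilde Q$. This is exactly the generalized KL $\mathcal D_\text{KL}(Q_t\|\tilde Q_t)$ in the statement. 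The diffusion contribution is $\|u_t-\tilde u_t\|^2/(2\sigma_t^2)$. Hence $\mathcal D_\text{KL}(\mathbb P\|\tilde{\mathbb P})=\int_0^T\mathbb E_{p_t}[\mathcal L_t]\,\mathrm dt$, and the bound follows. Because the state space is variable-dimensional, we apply the argument on the disjoint union $\mathsf Y$: jumps change the dimension, and the diffusion acts only on the current component. This matches the setting of \citet{campbell2023trans}, which we generalize.

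\textbf{Main obstacle.} The main difficulty is justifying the Girsanov formula rigorously. This requires the martingale property of the exponential density, which needs integrability of $\|u-\tilde u\|^2/\sigma^2$ and of the jump intensities, together with absolute continuity of $Q$ with respect to $\tilde Q$. These are absorbed into the ``sufficient regularity conditions'' of the statement.

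As a fallback that avoids path measures, we would differentiate $\mathcal D_\text{KL}(p_t\|\tilde p_t)$ in time using the adjoint relation $\mathbb E_{p_t}[\mathcal A_t f]=\int\dot p_t f$. Writing $r=p_t/\tilde p_t$, this gives $\frac{\mathrm d}{\mathrm dt}\mathrm{KL}=\mathbb E_{p_t}[(\mathcal A_t-\tilde{\mathcal A}_t)\log r]-(\text{Fisher/jump dissipation})$. We would then bound the drift cross term by Young's inequality, $a\cdot b\le\frac{\|a\|^2}{2\sigma^2}+\frac{\sigma^2\|b\|^2}{2}$, whose second part cancels against the Fisher term. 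The jump term would be bounded by the elementary convexity inequality $\log\tfrac{a}{b}\le\ldots$, which yields the generalized KL. Integrating from $0$ to $T$ with $\mathrm{KL}(p_0\|\tilde p_0)=0$ then gives the same bound.
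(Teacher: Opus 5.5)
Your main argument is correct but takes a genuinely different route from the paper. Your fallback is essentially the paper's proof.

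\textbf{The paper's route.} The paper works only with the marginals $p_t$ and $\tilde p_t$, in four steps:
\begin{itemize}
\item It differentiates $\mathcal D_\text{KL}(p_t\|\tilde p_t)$ using the generator identity, obtaining $\mathbb E_{p_t}[\mathcal A_t\log f - f^{-1}\tilde{\mathcal A}_t f]$ with $f=p_t/\tilde p_t$.
\item It completes the square in the drift term, which is your Young's inequality.
\item It bounds the jump term pointwise by $r\log a+1-a\le r\log r-r+1$, where $a=f(y)/f(x)$ and $r=\mathrm dQ/\mathrm d\tilde Q$. The difference of the two sides is $a\,\phi(r/a)$ with $\phi(z)=z\log z-z+1\ge 0$.
\item It integrates from $0$, using $\mathcal D_\text{KL}(p_0\|\tilde p_0)=0$.
\end{itemize}

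\textbf{Your route.} You instead prove the stronger path-space identity $\mathcal D_\text{KL}(\mathbb P\|\tilde{\mathbb P})=\int_0^T\mathbb E_{p_t}[\mathcal L_t]\,\mathrm dt$. The marginal inequality then comes solely from data processing.

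\textbf{What each buys.} Your route identifies the slack exactly: it is the expected KL between the path laws conditioned on $X_T$. It needs no regularity of the marginal densities, such as positivity, smoothness, or vanishing boundary terms. Its $\tilde Q$-dependent jump term is exactly the point-process NLL of Proposition~\ref{prop:pp_nll}, which makes the link between training loss and bound more transparent. The price is heavier machinery:
\begin{itemize}
\item Girsanov for jump-diffusions on a variable-dimensional Skorokhod space;
\item the true-martingale property of the density, or a finite-entropy localization argument;
\item uniqueness in law for the $\tilde{\mathcal A}_t$ martingale problem, so that the reweighted measure really is $\tilde{\mathbb P}$.
\end{itemize}
The paper's route is elementary and exhibits the exact marginal-level slack, e.g.\ $\frac{\sigma_t^2}{2}\|\nabla\log f-(u_t-\tilde u_t)/\sigma_t^2\|^2$. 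Its cost is the density-regularity assumptions it lists.

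\textbf{One small slip.} With $\mathrm dX_t=u_t\,\mathrm dt+\sigma_t\,\mathrm dW_t$ and $W$ a standard $\mathbb P$-Brownian motion, the stochastic integrand in your log-density should be $(u_t-\tilde u_t)/\sigma_t$, not $(u_t-\tilde u_t)/\sigma_t^2$. This term has zero mean, so the result is unaffected.
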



\subsection{Generator View of \ourmodel}

We now discuss how to interpret \ourmodel as a special case of a continuous-time generator on the joint state space $\mathsf Y$. The marginal within-length generator $\mathcal B_t^*$ that refines existing components constitutes the first drift-diffusion term in the continuous case or the generalized KL term of the transition kernel for the finite Markov chain in the discrete case~\cite{gat2024discrete}. 
On the other hand, the marginal jump measure across dimensions factorizes as $Q_t^*=\lambda_t^*\rho_t^*$: at state $y$, an event occurs at rate $\lambda_t^*(y)$, after which the new component (and insertion position, when applicable) is sampled from $\rho_t^*(\diff m|y)$. The learned counterpart is the jump kernel $Q_{\theta}=\lambda_\theta\rho_\theta$, factorized into rate $\lambda_\theta(y,t)$ and sampling distribution $\rho_\theta(\diff m|y,t)$. See Appendix~\ref{suppl:generator} for the precise state-space construction and the explicit generator definition.
One crucial observation is that the gradient of our NLL-based rate loss objective coincides with the gradient of the expected generalized KL divergence:
\begin{equation}
    \nabla_\theta\mathcal{L}_\text{GP}
    =\nabla_\theta\mathbb E_{t,Y_t\sim p_t}
    \left[\mathcal{D}_\text{KL}\bigl(\lambda_t^*(Y_t)\|\lambda_\theta(Y_t,t)\bigr)\right].
\end{equation}
In Appendix~\ref{suppl:generator}, we describe how the jump contribution decomposes exactly into a generalized KL for the rate and a rate-weighted KL for the sampling distribution.
Therefore, Theorem~\ref{thm:kl_bound} is applicable and implies that the additive \ourmodel objective in Eq.~\ref{eqn:loss} minimizes the corresponding upper-bound terms, thereby providing a finer-grained distributional guarantee on the training dynamics.

Additionally, our theoretical framework offers a natural explanation of why the generalized KL divergence is chosen over other Bregman divergences, such as MSE. Importantly, existing works' choice of the generalized KL divergence as the rate loss is purely empirical. In contrast, our choice is derived directly from the exact point process NLL, thereby connecting variable-length generative loss to the underlying, more principled stochastic process NLL minimization.

%% file: secs/4_experiment.tex
\section{Experimental Results}
We instantiate our \ourmodel across five diverse protein design tasks across continuous, discrete, and Riemannian modalities. Additional details on datasets, models, and pipelines can be found in Appendix~\ref{suppl:exp}. Additional ablation studies and experimental results are in Appendix~\ref{suppl:ablation} and \ref{suppl:result}.

\subsection{Unconditional Protein Structure Generation}\label{sec:struc_design}
We first instantiate \ourmodel for unconditional structure generation, in which the CA coordinates serve as the continuous length-dependent modality. We follow the setup in Proteina~\cite{geffner2025proteina}, a CA-only protein generative model. We build our order-preserving \ourmodel on the 60M Proteina model, adding additional heads to predict rates and reconstruct sampling distributions for the continuous coordinates, bringing the total to 65M trainable parameters. 
The datasets also follow the Proteina setup, including PDB~\cite{berman2000protein} and AFDB~\cite{jumper2021highly}. For PDB, we retain only single-chain entries with lengths between 50 and 256, yielding 27k data samples in total. For AFDB, we follow Proteina's clustering instructions to obtain 713k data samples.
We train two \ourmodel variants on the two datasets from scratch. For the Proteina baseline, we use its official 60M checkpoint trained on AFDB for inference and retrain a model on PDB using the same hyperparameters.

\begin{wraptable}{r}{.54\textwidth}
\vspace{-1.em}
\input{tabs/uncond}
\vspace{-.5em}
\end{wraptable}
For the fixed-length models, we follow the standard Proteina benchmark to sample 100 structures at each length in $\{50,100,150,200,250\}$. For \ourmodel, we generate 500 structures without a length input; their lengths are generated from the learned rate. 
We report \textbf{Designability} as the proportion of \emph{designable} structures averaged across all generated samples. A designable structure is defined as having a best self-consistent RMSD (scRMSD) less than 2 Å to the 8 ProteinMPNN~\cite{dauparas2022robust} redesigned and ESMFold~\cite{lin2023evolutionary} refolded candidates.
Additional diversity-based metrics and secondary-structure ratios are also reported. \textbf{Diversity} is defined as the proportion of the designable clusters in all designable samples. \textbf{Novelty} is defined as the maximum TM-score~\cite{zhang2004tmscore} via exhaustive search against the PDB or AFDB databases, averaged across all samples. Therefore, a higher diversity score indicates more designable clusters, and a lower novelty score indicates more novel generations.

\begin{wrapfigure}{r}{.54\textwidth}
\vspace{-1.em}
\centering
\includegraphics[width=\linewidth]{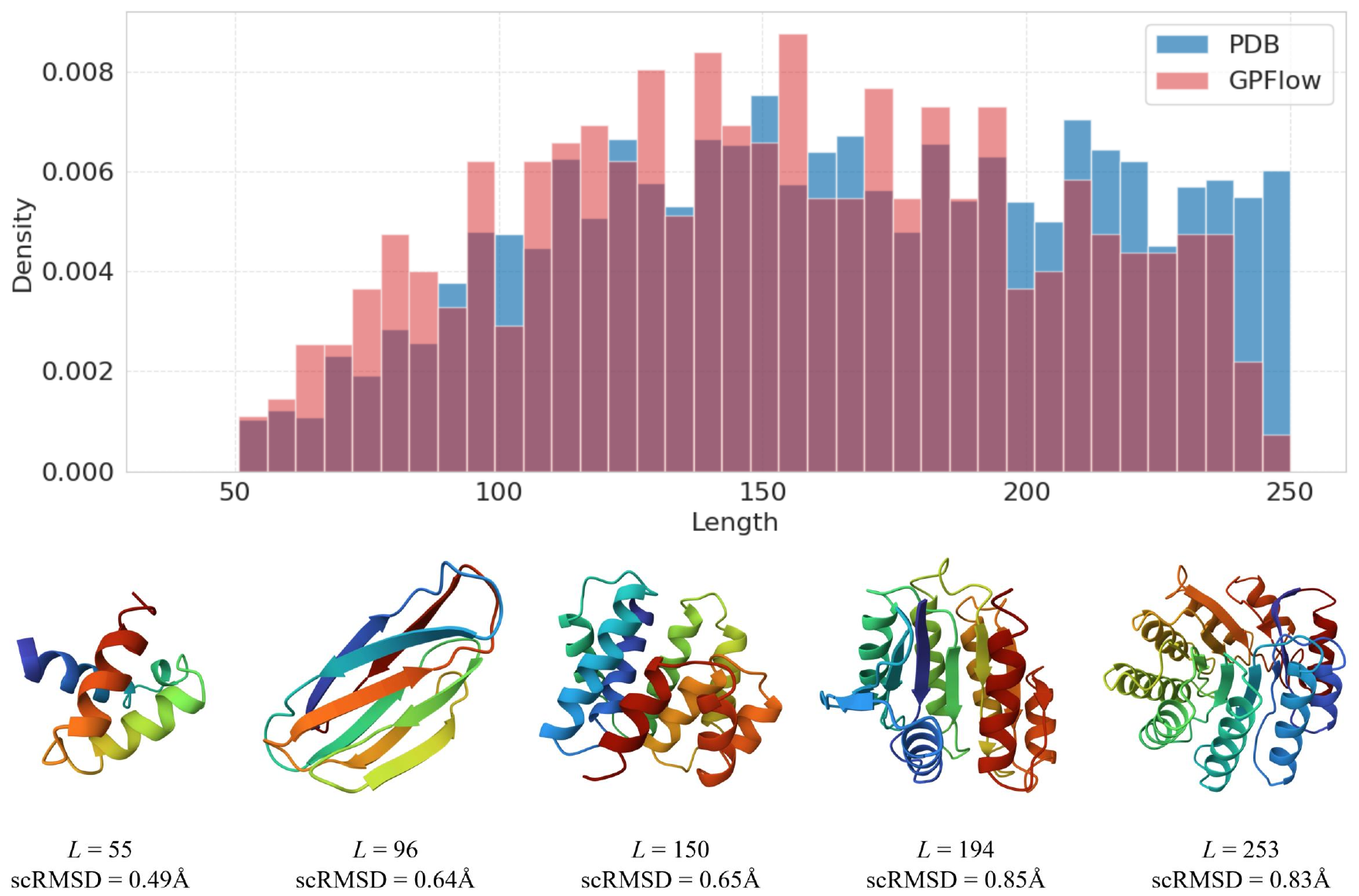}
\vspace{-1.5em}
\caption{Length distributions and representative examples of \ourmodel-generated structures.}
\label{fig:length_dist}
\vspace{-1em}
\end{wrapfigure}
The results are summarized in Table~\ref{tab:uncond}, where \ourmodel achieves a 3.9\%--18.5\% improvement in designability over the corresponding Proteina base model. Figure~\ref{fig:length_dist} serves as an intrinsic calibration check for the learned length marginal: \ourmodel's generated distribution follows the empirical training profile with a mean length of 153.7, whereas the fixed-length benchmark is 150. Appendix~\ref{suppl:result} further reports designability across \ourmodel's generated length bins, providing more concrete evidence that our superior designability is not due to the unmatched length distribution but to its better capture of the joint distribution.
In terms of diversity, we note that \ourmodel achieves a lower TM-score than the corresponding Proteina model, indicating greater novelty, although both models fall behind other baselines. The generally lower diversity of the Proteina-family models, including \ourmodel, is due to the inductive bias of low-temperature sampling, further ablated in Appendix~\ref{suppl:tradeoff}.
The motif scaffolding benchmark provides a more concrete evaluation of the diversity of \ourmodel through the number of unique successes after clustering.


\subsection{Unconditional Protein Sequence Generation}\label{sec:seq_design}

\begin{figure}[htbp]
    \centering
    \includegraphics[width=\linewidth]{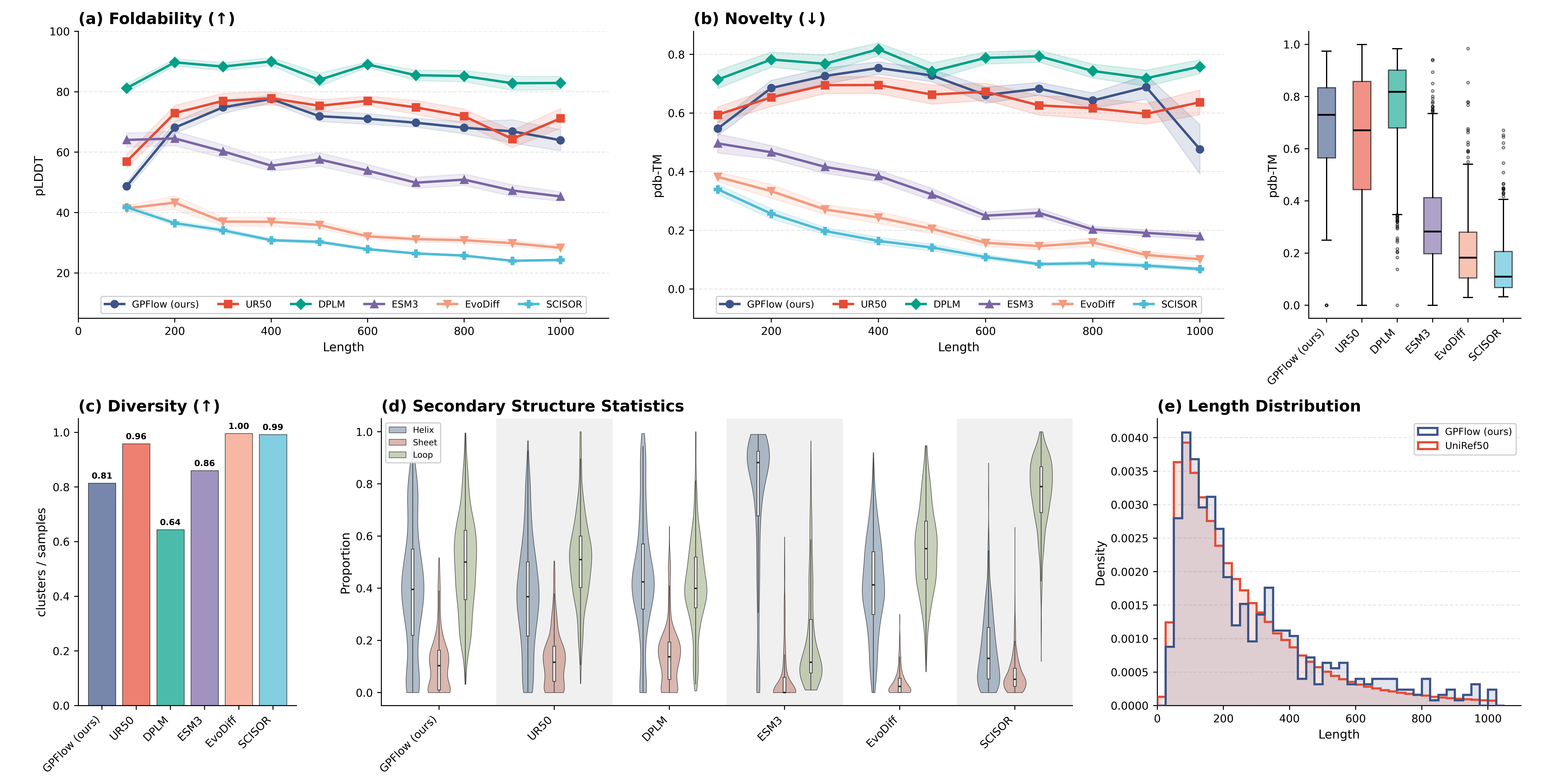}
    \vspace{-1.5em}
    \caption{Performance benchmark for unconditional protein sequence design. \textbf{(a)}~Foldability (pLDDT) across sequence length bins. \textbf{(b)}~Novelty measured by TM-score to the nearest PDB structure. \textbf{(c)}~Diversity by FoldSeek clustering. \textbf{(d)}~Secondary structure distributions. \textbf{(e)}~Length distribution of \ourmodel samples generated without length conditioning. Error bands denote $\pm$1 SEM.}
    \label{fig:seq}
    \vspace{-1em}
\end{figure}

We instantiate \ourmodel to generate variable-length protein sequences, incorporating the amino acid sequence as the discrete length-dependent modality. 
We train \ourmodel on 41M UniRef50~\citep{suzek_uniref_2007} sequences of lengths 10-1024 with a 642M-parameter DiT~\citep{peebles_scalable_2023}. To improve long-sequence generation, we follow \citet{havasi2025edit} to adopt a localized probability path as detailed in Appendix~\ref{suppl:localpath}.
For inference-time correction, we additionally learn a deletion process that can be coupled with the insertion process during sampling. 
Three different paradigms of protein sequence generative models are compared. For fixed-length discrete diffusion over categorical amino-acid tokens, we evaluate EvoDiff-OADM~\citep{alamdari_protein_2023} and DPLM~\citep{wang_diffusion_2024}. For masked language models with iterative decoding, we evaluate ESM3~\citep{hayes_simulating_2025} restricted to its sequence track. For variable-length generation, we evaluate SCISOR~\citep{baron_diffusion_2025}, a deletion-based discrete model, as a special case of EditFlow~\cite{havasi2025edit}. While other baselines are trained on UniRef50, ESM3 is trained on a substantially larger corpus that subsumes UniRef50 and is included for reference. 
To evaluate the generated samples, we fold the generated sequences with ESMFold~\citep{lin2023esmfold} and quantify: 
(a) \textbf{Foldability} via mean pLDDT;
(b) \textbf{Novelty} via \texttt{pdb-TM}, the maximum TM-score to any PDB structure found by FoldSeek search ($\text{TM}<0.5$);
(c) \textbf{Diversity} via FoldSeek clustering at $\text{TM}\ge 0.5$, reported as the proportion of unique clusters among all generations;
(d) \textbf{Secondary structure} composition via DSSP (helix/strand/coil fractions)~\citep{kabsch1983dssp};
and 
(e) \textbf{Length distribution}, compared to UniRef50.

Figure~\ref{fig:seq}(a-d) evaluates structural statistics when assessed by ESMFold. Across all lengths, \ourmodel is the closest match to UniRef50 in mean pLDDT, faithfully reproducing the structural characteristics rather than optimizing toward uniformly high-confidence folds.
DPLM achieves an abnormally high pLDDT compared to the training data, with a mean pLDDT difference of 14.40, whereas \ourmodel achieves only 3.17. DPLM has the lowest structural novelty and diversity among the models, illustrating a clear trade-off and off-data behavior, whereas \ourmodel maintains 17\% higher diversity while keeping novelty closest to UniRef50.
In contrast, ESM3, EvoDiff, and SCISOR exhibit consistently lower pLDDT together with lower \texttt{pdb-TM}.
Finally, \ourmodel best matches UniRef50 in secondary-structure composition, including a near-identical mean structural balance.
DPLM deviates from the UniRef50 profile in loop content, whereas ESM3, EvoDiff, and SCISOR exhibit large departures from the UniRef50 profile.

Figure~\ref{fig:seq}(e) illustrates the variable-length formulation: \ourmodel learns and samples from an emergent length prior that closely matches the training length. Fixed-length baselines define only $p_\theta(x|L)$ and therefore have no marginal length distribution without an additional sampler over $L$.
Selected generated sequences folded by ESMFold are presented in Figure~\ref{fig:seq_sample} with their lengths and pLDDTs.

\begin{figure}[htbp]
    \centering
    \includegraphics[width=.85\linewidth]{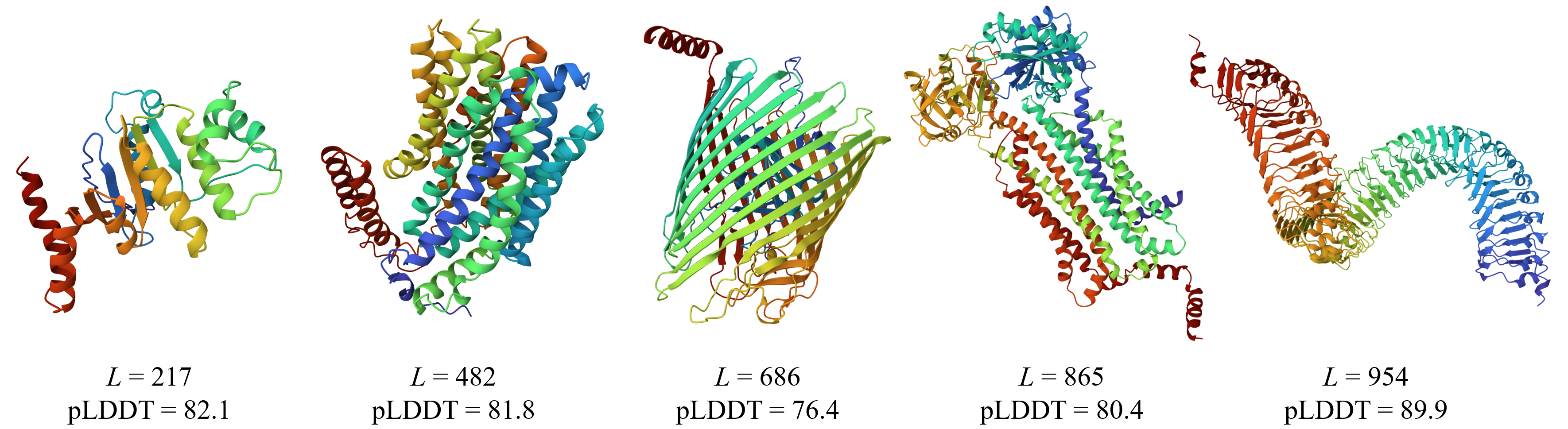}
    \vspace{-.5em}
    \caption{Representative unconditional sequence generations folded by ESMFold.}
    \label{fig:seq_sample}
\end{figure}

\subsection{Motif Scaffolding}\label{sec:struc_motif}
\input{tabs/motif}

For conditional generation, we instantiate \ourmodel for motif scaffolding, with the motif structure and/or sequence as the conditioning variable. Starting from isolated motif segments, our model gradually grows a protein scaffold through inserting and updating linker residues between motif segments, without relying on predefined contig templates. We consider both structure-based and sequence-based motif scaffolding tasks. The latter is deferred to Appendix~\ref{suppl:seq_motif}.

For structure-based motif scaffolding, we build our \ourmodel upon the Proteina variant, which injects motif structure and residue information as conditioning signals and fixes the motif portion during both training and sampling.
We train it from scratch on the same AFDB dataset and adopt the motif training augmentation used in Proteina.
The evaluation criteria follow~\citet{rfdiffusion}, which uses a combination of scRMSD, motifRMSD, pLDDT, and pAE thresholds detailed in Appendix~\ref{suppl:eval}. All successes are clustered via FoldSeek to count unique successes rather than raw successes. 

\begin{wrapfigure}{r}{.48\linewidth}
\vspace{-1.em}
    \centering
    \includegraphics[width=\linewidth]{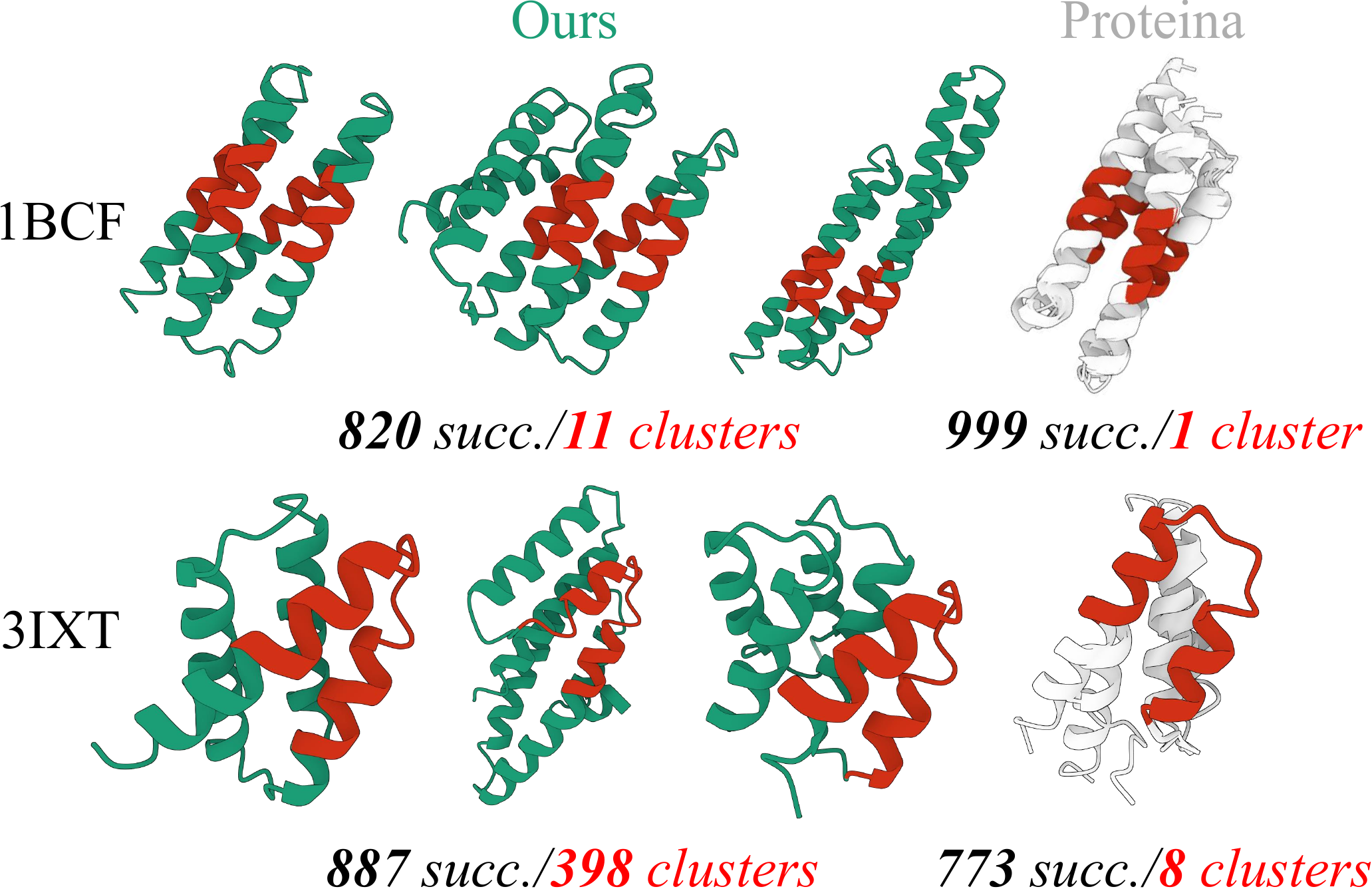}
    \vspace{-1.em}
    \caption{\ourmodel generates more diverse clusters for the same motif (highlighted in red).}
    \label{fig:motif_example}
 \vspace{-.5em}
\end{wrapfigure}
Table~\ref{tab:motif} lists the unique success counts across 16 motif tasks from the RFDiffusion benchmark~\citep{rfdiffusion}, where different contig templates corresponding to the same motif target are grouped into a single task for \ourmodel. 
Overall, \ourmodel ranks first on 10 out of 16 motif tasks.
For easier tasks like 1BCF and 3IXT, \ourmodel yields considerably more diverse generations, with variation in both secondary structures and scaffold lengths, whereas the base Proteina generations collapse into a single cluster. (Figure~\ref{fig:motif_example}). For more challenging tasks like 5WN9 and 5YUI, the raw success counts of \ourmodel also dominate.
Noticeably, \ourmodel can also generate successful scaffolds longer than 200 residues, whereas Proteina generations are restricted by the evaluated contig templates.
A finer-grained analysis of the generation lengths in Appendix~\ref{suppl:motif_length} further confirms the diversity of our generations across different length bins.
Our evaluations show that the learned length-varying process can produce successful scaffolds over a wider range than the predefined contig choices used by the fixed-length baselines.

\subsection{Peptide Co-Design}\label{sec:pep_design}
Finally, we extend \ourmodel to peptide sequence-structure co-design. Given a target receptor protein, the model jointly generates the all-atom structure and amino-acid sequence of a binding peptide, with the peptide length not predetermined. We use the dataset and benchmark from~\citet{li2024full}, in which the receptor structure and pocket positions are provided as conditioning context.
\ourmodel is adapted from the PepFlow~\cite{li2024full} architecture, in which per-residue rotation, translation, residue type, and side-chain torsion angles are learned jointly to model the all-atom peptide structure. Rotation and torsion angles are modeled using Riemannian flow matching, with an additional extension that learns inserted elements using the corresponding Riemannian norm. The residue types are modeled using a simplex-based continuous representation that embeds soft one-hot logits in Euclidean geometry.

\begin{wrapfigure}{r}{.54\textwidth}
\vspace{-1.em}
\centering
\includegraphics[width=\linewidth]{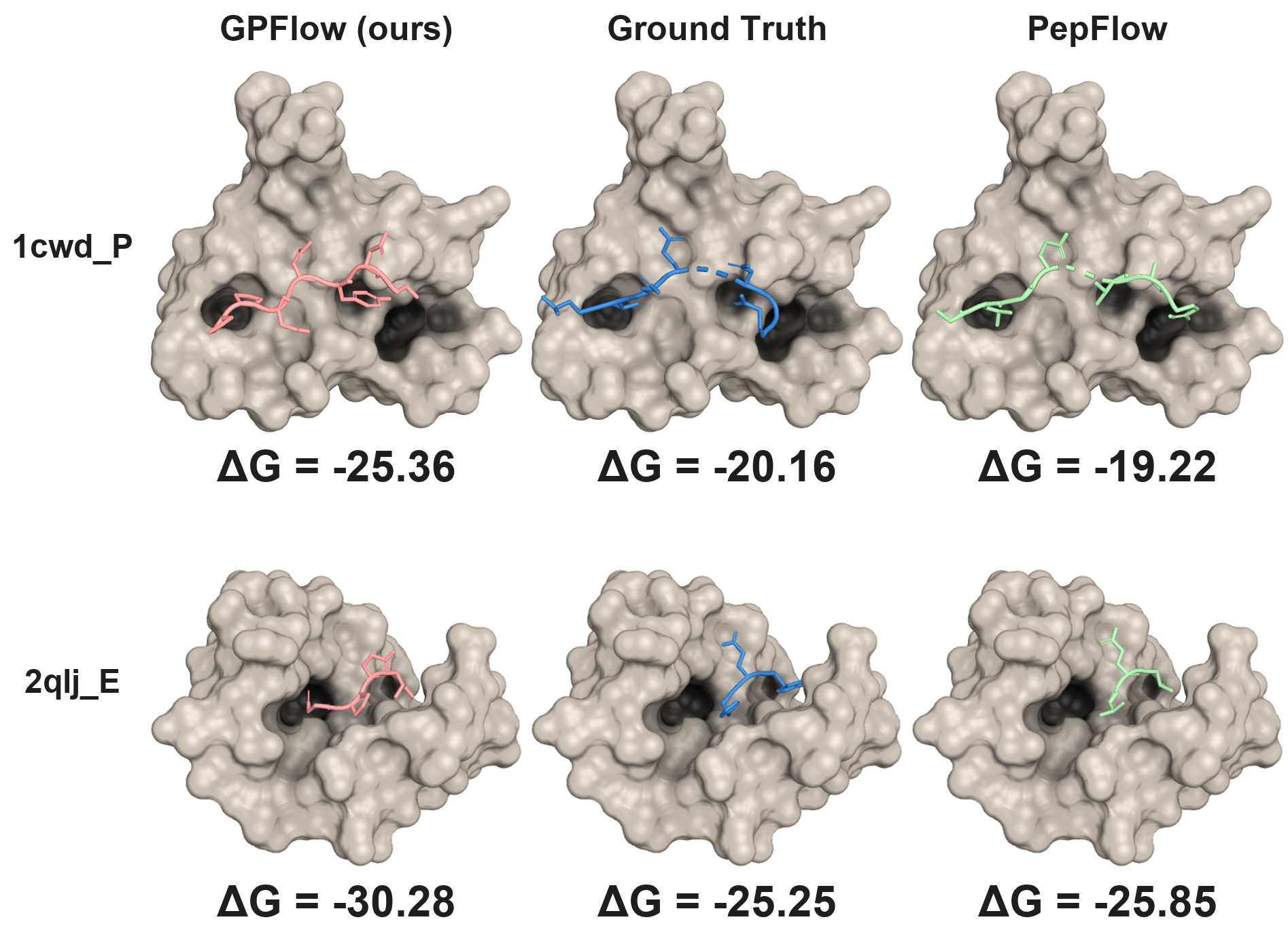}
\caption{Samples of \ourmodel-generated peptides compared to ground truth and PepFlow~\cite{li2024full}.}
\label{fig:pepdemo}
\vspace{-1.em}
\end{wrapfigure}
We compare against the three co-design baselines: RFdiffusion~\citep{rfdiffusion} composed with ProteinMPNN~\citep{dauparas2022robust} for sequence design, ProteinGenerator~\citep{lisanza2025multistate} that jointly samples backbone and sequence, and the base PepFlow model. We report eight generative metrics: amino-acid recovery (\textbf{AAR}), \textbf{RMSD}, secondary-structure ratio (\textbf{SSR}), binding-site recovery (\textbf{BSR}), \textbf{Affinity}, \textbf{Stability}, \textbf{Designability} and \textbf{Diversity}. The complete evaluation protocol and the length-matched/unmatched breakdown are provided in Appendix~\ref{suppl:peptide_metric}. As shown in Table~\ref{tab:codesign}, \ourmodel attains the best performance on 4 out of the 8 metrics and also surpasses the base PepFlow in terms of Designability and Diversity. 
Notably, \ourmodel is the only model that samples peptide length from a learned rate function rather than conditioning on the native length (Appendix~\ref{suppl:pep_length}). This matters for \textit{de novo} peptide design, where fixed-length baselines rely on a native-length oracle unavailable in practice.
Figure~\ref{fig:pepdemo} highlights two qualitative behaviors of \ourmodel with a better energy landscape. In 1cwd\_P, the processed ground truth contains a gap induced by the removal of a non-standard residue, and PepFlow inherits this gap through its native-indexed generation. In contrast, \ourmodel generates a continuous peptide across this region. In 2qlj\_E, the generated insertion extends more deeply into the binding site, suggesting greater sensitivity to the local pocket geometry.

\input{tabs/peptide}

%% file: tabs/uncond.tex
{
\centering
\caption{Unconditional protein structure design benchmark. The best result for each metric is in \textbf{bold}.}\label{tab:uncond}
\vspace{-.8em}
\resizebox{\linewidth}{!}{
\begin{tabular}{@{}lccccc@{}}
\toprule
\multirow{2}{*}{Model} & \multirow{2}{*}{\begin{tabular}[c]{@{}c@{}}Design-\\ ability $\uparrow$\end{tabular}} & \multirow{2}{*}{Diversity $\uparrow$} & \multicolumn{2}{c}{Novelty $\downarrow$} & \multirow{2}{*}{\begin{tabular}[c]{@{}c@{}}Struct\%\\ $\alpha$/$\beta$\end{tabular}} \\ \cmidrule(lr){4-5}
 &  &  & PDB & AFDB &  \\ \midrule
FrameDiff [\citenum{yim2023se}] & 65.4 & 0.39 & 0.73 & { 0.75} & 64.9/11.2 \\
FrameFlow [\citenum{yim2023fast}] & 88.6 & { 0.53} & \textbf{0.69} & \textbf{0.73} & 55.7/18.4 \\
RFDiffusion [\citenum{rfdiffusion}] & 94.4 & 0.46 & { 0.71} & 0.77 & 64.3/17.2 \\
Proteina\textsuperscript{*} (PDB) & 77.6 & 0.25 & 0.78 & 0.81 & 48.7/15.2 \\
Proteina\textsuperscript{\textdagger} (AFDB) & 90.6 & \textbf{0.57} & 0.77 & 0.84 & 68.8/6.3 \\ \midrule
\ourmodel (PDB) & \textbf{96.1} & 0.26 & 0.76 & 0.80 & 61.2/13.8 \\
\ourmodel (AFDB) & { 94.5} & 0.43 & 0.75 & 0.82 & 72.4/5.4 \\ \bottomrule
\end{tabular}
}
}
\scriptsize{\textsuperscript{*}A 60M Proteina is retrained on PDB.}\\
\scriptsize{\textsuperscript{\textdagger}Official Proteina 60M checkpoint for inference.}

%% file: tabs/motif.tex
\begin{table}[ht]
\centering
\caption{Number of unique successes (1000 generations for each task) on the RFDiffusion benchmark~\citep{rfdiffusion}. Superscripts denote the number of motif segments. Motif tasks with multiple lengths are indicated in parentheses. Best results are shown in \textbf{bold}.}\label{tab:motif}
\small
\rowcolors{2}{gray!15}{white}
\resizebox{\linewidth}{!}{
\begin{tabular}{l|ccccc} \toprule
Task Name (contig ver.) & \ourmodel (Ours)                  & Proteina [\citenum{geffner2025proteina}]                & Genie2 [\citenum{lin2024out}]         & RFDiffusion [\citenum{rfdiffusion}]    & FrameFlow [\citenum{yim2023fast}]             \\ \midrule
Unique success (avg. / total) & \textbf{117.4 / 1878} & 94.9 / 1519             & 64.8 / 1037    & 41.5 / 664     & 51.1 / 817             \\ \midrule
1BCF$^4$                  & \textbf{11}           & 1                       & 1              & 1              & 1                      \\
1PRW$^2$                  & \textbf{42}            & 1                       & 1              & 1              & 1                      \\
1QJG$^3$                  & 1          & 3                       & 5              & 1              & \textbf{18}                     \\
1YCR$^1$                  & \textbf{413}          & 249                     & 134            & 7              & 149                    \\
2KL8$^2$                  & \textbf{3}           & 1                       & 1              & 1              & 1                      \\
3IXT$^1$                  & \textbf{398}          & 8                       & 14             & 3              & 8                      \\
4JHW$^2$                  & 0           & 0                       & 0              & 0              & 0                      \\
4ZYP$^1$                  & \textbf{144}          & 11                      & 3              & 6              & 4                      \\
5IUS$^2$                  & \textbf{3}           & 1                       & 1              & 1              & 0                      \\
5TPN$^1$                  & \textbf{147}           & 4                       & 8              & 5              & 6                      \\
5TRV$^1$ (long / med / short)  & 136          & \textbf{179} / 22 / 1            & 97 / 23 / 3    & 23 / 10 / 1    & 77 / 21 / 1            \\
5WN9$^1$                  & \textbf{19}           & 2                       & 1              & 0              & 3                      \\
5YUI$^3$                  & \textbf{9}          & 5                       & 3              & 1              & 1                      \\
6E6R$^1$ (long / med / short)  & 268                   & \textbf{713} / 417 / 56 & 415 / 272 / 26 & 381 / 151 / 23 & 110 / 99 / 25          \\
6EXZ$^1$ (long / med / short)  & 263                   & 290 / 43 / 3            & 326 / 54 / 2     & 167 / 25 / 1   & \textbf{403} / 110 / 3 \\
7MRX$^1$ (128 / 85 / 60)       & 21          & 51 / 31 / 2             & 27 / 23 / 5    & \textbf{66} / 13 / 1    & 35 / 22 / 1           \\ \bottomrule
\end{tabular}
\vspace{-1em}
}
\end{table}

%% file: tabs/peptide.tex
\begin{table}[ht]
\centering
\caption{Conditional sequence-structure co-design on peptide design. Best results are shown in \textbf{bold}. \textsuperscript{\textdagger} are composite over the GT-length matched/unmatched samples; see Appendix~\ref{suppl:peptide_metric} for details.}\label{tab:codesign}
\resizebox{\linewidth}{!}{
\begin{tabular}{@{}lcccccccccc@{}}
\toprule
\multirow{2}{*}{Model} & \multicolumn{4}{c}{Geometry} & \multicolumn{2}{c}{Energy} & \multicolumn{2}{c}{Design} & \multicolumn{2}{c}{Length} \\
\cmidrule(lr){2-5} \cmidrule(lr){6-7} \cmidrule(lr){8-9} \cmidrule(lr){10-11}
 & AAR \% $\uparrow$ & RMSD \AA{} $\downarrow$ & SSR \% $\uparrow$ & BSR \% $\uparrow$
      & Affinity \% $\uparrow$ & Stability \% $\uparrow$ & Designability \% $\uparrow$
      & Diversity $\uparrow$ & Mean & Requires GT? \\ \midrule
ProteinGenerator [\citenum{lisanza2025multistate}]   & 28.18 & 4.36 & 58.89 & 15.60 & 11.25 & \textbf{40.00}
                   & 72.66 & 
            \textbf{0.476} & 10.53 & \cmark \\
RFdiffusion+MPNN [\citenum{rfdiffusion}]   & 26.69 & 4.17 & 62.70 & 18.09 &  7.27 & 17.36
                   & \textbf{84.72} & 0.408 & 10.53 & \cmark \\
PepFlow [\citenum{li2024full}]          & 46.35 & 1.99 & 78.98 & \textbf{89.40} & 11.74 &  6.46 & 37.27 & 0.390 & 10.53 & \cmark  \\
\midrule
\ourmodel   & \textbf{53.71}\textsuperscript{\textdagger} & \textbf{1.39}\textsuperscript{\textdagger} & \textbf{91.08}\textsuperscript{\textdagger}
                   & \text{86.04} & \textbf{21.03} &  6.27 & 48.19
                   & \text{0.443} & 9.86 & \xmark  \\ 
\bottomrule                  
\end{tabular}
}
\end{table}

%% file: secs/5_conclusion.tex
\section{Conclusion \& Limitations}\label{sec:limit}
In this work, we propose \ourmodel, a unified framework for variable-length generative modeling applicable to diverse protein design scenarios. \ourmodel uses a generalized Poisson process coupled to within-length dynamics and covers multimodal settings with continuous, discrete, and mixed modalities. Simulation-free rate learning based on exact negative log-likelihood minimization is proven via posterior marginalization, with an additional generator-based KL bound on the training dynamics. 
Our comprehensive experimental setup for \ourmodel provides concrete evidence of its flexibility in unconditional and conditional generation without requiring a prescribed target length and demonstrates its superior performance in designability and distributional fitness across all tasks.

One limitation of \ourmodel is its sensitivity to scheduler choices, especially in multimodal setups where different modalities may interact. Additional sampling techniques such as $\tau$-leaping (Appendix~\ref{suppl:tau_leap}) and localized paths (Appendix~\ref{suppl:localpath}) may also introduce subtle trade-offs. 
We are actively exploring empirical training and sampling techniques to develop a more comprehensive unified variable-length generative framework for biological domains.

%% file: suppl/A_proof.tex
\section{Theoretical Details}\label{suppl:proofs}
In this section, we provide proofs for the major theorems established in the main context. We also discuss the technical details of interpreting \ourmodel as a generator and its connections to existing models such as EditFlow.

\subsection{Proof for Theorem~\ref{thm:forward_rate} (Conditional Rate)}\label{suppl:proof_rate}
\getkeytheorem{forwardrate}

\begin{proof}
Consider $Y_t=L-X_t$ as a pure-death process with death rate $\mu_t(Y_t)=Y_t h_t$. Let $p_t(k):=\Pr(Y_t=k)$. The Kolmogorov forward equation reads:
\begin{align}
    \dot p_t(k)&=\mu_t(k+1) p_t(k+1) - \mu_t(k) p_t(k)
    = h_t \left[ (k+1) p_t(k+1) - k p_t(k) \right].
\end{align}
Consider the probability generating function (PGF) $G(z,t):=\mathbb{E}[z^{Y_t}]=\sum_{k=0}^Lp_t(k)z^k$. We have
\begin{align}
    \frac{\partial G}{\partial t} &= \sum_{k=0}^L \dot p_t(k) z^k = h_t \sum_{k=0}^L \left[ (k+1) p_t(k+1) - k p_t(k) \right] z^k\\
    &= h_t \left( \frac{\partial G}{\partial z} - z \frac{\partial G}{\partial z} \right) = h_t(1-z) \frac{\partial G}{\partial z}.
\end{align}
One can verify that the function 
\begin{equation}
    G(z, t) = \left( \kappa_t + z(1-\kappa_t) \right)^L
\end{equation} 
satisfies the above PDE with the initial condition $Y_0=L$. As $G(z, t)$ is the standard PGF for the binomial distribution $B(L, 1-\kappa_t)$, we have $Y_t\sim B(L, 1-\kappa_t)$ and $X_t\sim B(L,\kappa_t)$ for $t<\tau_\kappa$. As $t\to\tau_\kappa^-$, $\kappa_t\to1^-$ and $\mathbb E[Y_t]=L(1-\kappa_t)\to0$.

Moreover, $Y_t=L-X_t$ is nonnegative and nonincreasing along every pure-birth path, so its pathwise limit exists; the vanishing expectation implies that this limit is zero almost surely. Hence $X_t\to L$ almost surely. Defining $L$ as absorbing and setting the rate to zero for $t\geq\tau_\kappa$ yields the stated result without evaluating the hazard after saturation.  
\end{proof}

The general definition of the scheduler above allows us to use additional scheduler families without compromising theoretical validity. In particular, the early-insertion schedule $\kappa_t=\min\{1,t/\tau\}$ has $h_t=1/(\tau-t)$ for $t<\tau$; at $t=\tau$, the process has reached $L$ almost surely, after which the state is absorbing, and the rate is defined to be zero. Thus, the expression $\dot\kappa_t/(1-\kappa_t)$ is never evaluated in the saturated regime. In the losses, the target insertion contribution is also set to zero for $t\geq\tau_\kappa$.
Such a scheduler was already used in prior work, such as TDDM~\cite{campbell2023trans}, but it was never rigorously formulated.

\subsection{Proof for Theorem~\ref{thm:marginal} (Marginal Rate)}\label{suppl:proof_marginal}
\getkeytheorem{marginal}

\begin{proof}
The Kolmogorov forward equation reads:
\begin{equation}
    \dot p_t(k|y) = \lambda_t(k-1|y)p_t(k-1|y)-\lambda_t(k|y)p_t(k|y).
\end{equation}
For the marginal $p_t(k) := \sum_y q(y) p_t(k|y)$, differentiating with respect to time, we obtain:
\begin{equation}
    \dot p_t(k) = \sum_y q(y) \dot p_t(k|y).
\end{equation}
Substituting the Kolmogorov equation, we get
\begin{equation}
    \dot p_t(k) = \underbrace{\sum_y q(y)\lambda_t(k-1|y) p_t(k-1|y)}_{\text{event}}
    -\underbrace{\sum_y q(y) \lambda_t(k|y) p_t(k|y)}_{\text{survival}}.
    \label{eqn:kolmogorov}
\end{equation}
For every $k$ with $p_t(k)>0$, Bayes' rule gives the posterior probability stated in Theorem~\ref{thm:marginal}:
\begin{equation}
    p(X_1=y|X_t=k)=\frac{p_t(k|y)q(y)}{p_t(k)}.
\end{equation}
Therefore, the marginal rate conditioned on the explicit current state $X_t=k$ is
\begin{equation}
    \lambda_t^*(k)
    :=\mathbb{E}_{X_1\sim p(\cdot|X_t=k)}[\lambda_t(k|X_1)]
    =\sum_y \frac{p_t(k|y)q(y)}{p_t(k)}\lambda_t(k|y).
\end{equation}
Therefore,
\begin{equation}
    \lambda^*_t(k) p_t(k) = \sum_y q(y) p_t(k|y) \lambda_t(k|y),
\end{equation}
which is the survival term in Eq.~\ref{eqn:kolmogorov}. Similarly, we have
\begin{equation}
    \lambda^*_t(k-1) p_t(k-1) = \sum_y q(y) p_t(k-1|y) \lambda_t(k-1|y),
\end{equation}
which is the event term in Eq.~\ref{eqn:kolmogorov}. Combining both results:
\begin{equation}
    \dot p_t(k) = \lambda^*_t(k-1) p_t(k-1) - \lambda^*_t(k) p_t(k),\label{eqn:length_continuity}
\end{equation}
which is precisely the Kolmogorov forward equation for a generalized Poisson process driven by the rate $\lambda^*_t(X_t)$.
Since $p_1(k|y)=\mathds{1}\{k=y\}$, the terminal marginal is $p_1(k)=q(k)$.
\end{proof}

\subsection{Proposition~\ref{prop:pp_nll} (Poisson NLL)}\label{suppl:proof_nll}
\begin{proposition}[store=ppnll]\label{prop:pp_nll}
For a target generalized Poisson process with rate function $\lambda^*_t$, consider a realization on $[0,1]$ with event times $\{t_k\}_{k=1}^{L}$, where $0<t_k\le 1$. Its negative log-likelihood (NLL) under an estimated rate $\lambda_t$ is:
\begin{equation}
    \mathcal{L}_\text{GP}= \int_0^1\lambda_t\diff t-\sum_{k=1}^L\log \lambda_{t_k}.\label{eqn:nll_naive}
\end{equation}
\end{proposition}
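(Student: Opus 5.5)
We will compute the likelihood of an observed trajectory directly from the defining infinitesimal transition probabilities of the generalized Poisson process with rate $\lambda_t$ (shorthand for $\lambda_t(X_t)$). Conditionally on the path, the rate is a deterministic function of time between events, because $X_t$ is piecewise constant: it equals $k-1$ on $[t_{k-1},t_k)$, with $t_0:=0$ and $t_{L+1}:=1$. A realization on $[0,1]$ is therefore fully described by the ordered event times $0<t_1<\dots<t_L\le1$. Its likelihood is the joint density of these times together with the event that no further jump occurs on $(t_L,1]$.

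\textbf{Step 1 (survival factor).} Starting at a fixed state at time $s$, we show that the probability of no event on $[s,u]$ equals $\exp(-\int_s^u\lambda_r\diff r)$. Let $S(u)$ denote this probability. The Markov property and $\Pr(X_{u+h}=X_u)=1-\lambda_u h+o(h)$ give
\[
S(u+h)=S(u)\bigl(1-\lambda_u h+o(h)\bigr),
\]
hence $\dot S=-\lambda_u S$ with $S(s)=1$. Integrating this ODE yields the claim. This is the only analytic step. It requires $\lambda$ to be locally integrable, with the $o(h)$ uniform in $u$, which we take as the standing regularity assumption.

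\textbf{Step 2 (event density).} Conditionally on surviving to $t_k$ in state $k-1$, the probability of a jump in $[t_k,t_k+\diff t)$ is $\lambda_{t_k}\diff t+o(\diff t)$. Jumps are only $+1$, so the post-jump state is determined and no extra mark density appears.

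\textbf{Step 3 (chaining).} We chain these factors with the strong Markov property at each event time, telescoping the integrals over $[t_{k-1},t_k)$ and $[t_L,1]$. The path density with respect to Lebesgue measure on the ordered times is then
\[
\prod_{k=1}^L\lambda_{t_k}\exp\Bigl(-\int_0^1\lambda_t\diff t\Bigr).
\]
Taking the negative logarithm gives Eq.~\ref{eqn:nll_naive}.

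The main obstacle is being careful about what ``likelihood'' means. The natural reference measure is the Janossy/Poisson-process measure on finite ordered subsets of $(0,1]$, so the density is defined only up to a constant independent of $\lambda$. We will state this explicitly. We will also note that the case $t_L=1$ is a null event, and that under the scheduler of Theorem~\ref{thm:forward_rate} the absorbing regime $t\ge\tau_\kappa$ simply contributes zero to both terms. The role of the target rate $\lambda^*_t$ is only to generate the realization; it does not enter the formula, which is evaluated under the estimated rate.
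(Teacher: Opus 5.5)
Your proposal is correct and is essentially the paper's argument. The paper writes the path likelihood as the infinitesimal product $\prod_t(1-\lambda_t\diff t)\cdot\prod_{k}\lambda_{t_k}\diff t$, drops the $\diff t$ reference factors, and reads off $-\log\prod_t(1-\lambda_t\diff t)=\int_0^1\lambda_t\diff t$; this is your survival/event decomposition, and your survival ODE and strong-Markov chaining make that heuristic product rigorous. One small caveat on your closing aside: for $t\ge\tau_\kappa$ the event term is empty, but the survival contribution $\int_{\tau_\kappa}^1\lambda_t\diff t$ vanishes only if the estimated rate is itself set to zero there (as the paper does in its losses), not automatically.
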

\begin{proof}
At each time step $t$, the probability of an event is $\lambda_t\diff t$, whereas the probability of survival is $1-\lambda_t\diff t$. Therefore, we have
\begin{align}
    \mathcal{L}_\text{GP}&=-\log\left(\prod_{t\in{\text{no event}}} (1-\lambda_t\mathrm{d}t)\cdot\prod_{k=1}^L\lambda_{t_k}\mathrm{d}t\right)\\
    &=-\log\left(\prod_{t=0}^1 (1-\lambda_t\mathrm{d}t)\cdot\prod_{k=1}^L\lambda_{t_k}\right)\\
    &=\int_0^1\lambda_t\mathrm{d}t-\sum_{k=1}^L\log \lambda_{t_k},
\end{align}
which concludes the proof.
\end{proof}

The identity used in Section~\ref{sec:length_only} to rewrite the event term of the Poisson NLL follows from the integration (smoothing) formula for the stochastic intensity of a point process~\citep{bremaud1981point}. Let $N_t$ be a simple point process with event times $\{t_k\}$, adapted to a filtration $\{\mathcal{F}_t\}$, that admits an $\mathcal{F}_t$-stochastic intensity $\lambda^*_t$, i.e., $N_t-\int_0^t\lambda^*_s\diff s$ is an $\mathcal{F}_t$-martingale. Then, for every nonnegative $\mathcal{F}_t$-predictable process $H_t$:
\begin{equation}
    \mathbb{E}\left[\sum_k H_{t_k}\right]=\mathbb{E}\left[\int_0^1 H_t\mathrm{d}N_t\right]=\mathbb{E}\left[\int_0^1 H_t\lambda^*_t\diff t\right],
\end{equation}
a property that in fact characterizes the stochastic intensity $\lambda^*_t$; the deterministic-integrand case recovers Campbell's theorem~\citep{daley2003introduction}. By Theorem~\ref{thm:marginal}, the marginal rate $\lambda^*_t$ is such a stochastic intensity (the process conditioned on $X_1=L$ is an inhomogeneous pure-birth process with predictable compensator~\citep{bremaud1975extension}). Choosing the predictable integrand $H_t=-\log\lambda_t(X_{t^-})$ with the learned rate $\lambda_t$ then yields:
\begin{equation}
    \mathbb{E}_{\{t_k\},L}\left[-\sum_k\log\lambda_{t_k}\right]=\mathbb{E}\left[-\int_0^1\lambda^*_t\log\lambda_t\diff t\right]=\mathbb{E}_t[-\lambda^*_t\log\lambda_t],
\end{equation}
which is the identity in the main text. In this way, the NLL loss objective is fully simulation-free.

\subsection{Proof for Theorem~\ref{thm:multimodal} (Multimodal \ourmodel)}\label{suppl:proof_multimodal}

We first describe a measure-theoretic characterization of the variable-length state space $\mathsf Y$ that ensures that defining a probability path is mathematically well-posed.
Let $(\mathsf S,\mathcal S,\nu)$ be a component space with a $\sigma$-finite reference measure. We use Lebesgue measure for Euclidean components, counting measure for categorical components, Riemannian volume for manifold-valued components, and the corresponding product measure for mixed components. Define
\begin{equation}
    \mathsf Y:=\bigsqcup_{k\geq0}\{k\}\times\mathsf S^k,
    \qquad
    \mu:=\sum_{k\geq0}\delta_k\otimes\nu^{\otimes k}.
\end{equation}
Thus, a probability law on variable-length objects can be represented by a density or mass function with respect to the single reference measure $\mu$.

For $y=(k,s)\in\mathsf Y$, let $\mathsf M(y)$ be the insertion-variable space and let $I(y,m)\in\{k+1\}\times\mathsf S^{k+1}$ be the measurable insertion map. In the order-agnostic case, $m$ contains only the new component value; in the ordered case, it also contains an insertion slot.
For a clean target $Y_1=y_1$, let $\rho_t(\diff m|y,y_1)$ be a probability kernel on $\mathsf M(y)$. The associated conditional jump measure is
\begin{equation}
    Q_t(A|y,y_1)
    :=\lambda_t(y|y_1)
      \int_{\mathsf M(y)}\mathds{1}\{I(y,m)\in A\}
      \rho_t(\diff m|y,y_1).
    \label{eqn:insertion_jump_cond}
\end{equation}
This equation makes the coupling explicit: an insertion value is drawn from $\rho_t(\cdot|y,y_1)$ only when an event governed by $\lambda_t(y|y_1)$ occurs. The total mass of $Q_t(\cdot|y,y_1)$ is exactly $\lambda_t(y|y_1)$.

\getkeytheorem{multimodal}
\begin{proof}
This proof follows the outline in transdimensional jump diffusion~\citep{campbell2023trans}, while extending it to allow the inserted component to be continuous, discrete, Riemannian, or mixed. 
Let $\mathcal B_t^{y_1}$ denote the conditional within-length generator. For every bounded test function $f$ in the domain of the generator, the conditional path satisfies the weak forward equation:
\begin{equation}
\frac{\diff}{\diff t}\int_{\mathsf Y}f(y)p_t(\diff y|y_1)
=\int_{\mathsf Y}\left[\mathcal B_t^{y_1} f(y)
+\int_{\mathsf Y}(f(y')-f(y))Q_t(\diff y'|y,y_1)\right]p_t(\diff y|y_1).
\label{eqn:conditional_weak_forward}
\end{equation}
For brevity in the proof, denote the posterior law written explicitly in Theorem~\ref{thm:multimodal} by
\begin{equation}
    \pi_t(\diff y_1|y):=p(\diff y_1|Y_t=y).
\end{equation}
Equivalently, it is characterized by the disintegration identity $q(\diff y_1)p_t(\diff y|y_1)=p_t(\diff y)\pi_t(\diff y_1|y)$.
Thus, the three posterior expectations in the theorem are:
\begin{align}
    \mathcal B_t^*f(y)
    &=\int\mathcal B_t^{y_1} f(y)\pi_t(\diff y_1|y),\\
    \lambda_t^*(y)
    &=\int\lambda_t(y|y_1)\pi_t(\diff y_1|y),\\
    \rho_t^*(\diff m|y)
    &=\frac{\int\lambda_t(y|y_1)\rho_t(\diff m|y,y_1)\pi_t(\diff y_1|y)}
    {\int\lambda_t(y|y_1)\pi_t(\diff y_1|y)}.
\end{align}
Integrating Eq.~\ref{eqn:conditional_weak_forward} over $q(\diff y_1)$ and applying the disintegration identity then yields the marginal jump measure
\begin{equation}
\begin{aligned}
    Q_t^*(A|y)
    :=\int Q_t(A|y,y_1)\pi_t(\diff y_1|y)
    =\lambda_t^*(y)\int_{\mathsf M(y)}\mathds{1}\{I(y,m)\in A\}\rho_t^*(\diff m|y),
\end{aligned}
\label{eqn:insertion_jump_marginal}
\end{equation}
where $\lambda_t^*$ and $\rho_t^*$ are precisely those in Theorem~\ref{thm:multimodal}. Therefore,
\begin{equation}
\frac{\diff}{\diff t}\int f(y)p_t(\diff y)
=\int\left[\mathcal B_t^*f(y)+\int(f(y')-f(y))Q_t^*(\diff y'|y)\right]p_t(\diff y),
\end{equation}
which is the weak forward equation of the claimed marginal process. Since $p_1(\cdot|y_1)=\delta_{y_1}$ for $q$-almost every $y_1$, its endpoint is $p_1=q$.
\end{proof}

For intuition, the corresponding forward equation can be written, as an identity of measures, as
\begin{equation}
    \partial_t p_t=\mathcal B_t^{*\dagger}p_t-\lambda_t^*p_t
    +\int_{\mathsf Y}p_t(\diff \bar y)Q_t^*(\cdot|\bar y).
    \label{eqn:multimodal_continuity}
\end{equation}
The second term is outgoing probability mass, whereas the third term is inflow from predecessor states; only the outgoing insertion measure $Q_t^*=\lambda_t^*\rho_t^*$ belongs to the generator at the current state. When $\mathsf S$ is continuous, $\mathcal B_t^{*\dagger}p_t=-\nabla\cdot(v_t^*p_t)$ (or its diffusion/Riemannian analog), and integrals over inserted values are ordinary integrals. When $\mathsf S$ is discrete, the same display is a master equation, all integrals over $\mathsf S$ become sums, and $\mathcal B_t^*$ is a within-length CTMC generator. Hence, the proof applies to both cases without treating a categorical component as if it had a Euclidean density.

\subsection{Theorem~\ref{thm:order} (Order-Preserving \ourmodel)}\label{suppl:proof_order}
In the order-preserving specialization, $\mathsf M(y)=\{0,\dots,k\}\times\mathsf S$ and $m=(i,a)$. The general insertion map becomes $I_i(y,a):=I(y,m)=I(y,(i,a))$, where $I_i$ places component $a$ in slot $i$. Thus, the slot is part of the insertion variable whose rate-weighted distribution is marginalized; the insertion map itself remains deterministic. A detailed construction is provided in Appendix~\ref{suppl:order_preserve}.

\begin{theorem}[store=order]\label{thm:order}
At a state $Y_t=y=(k,s)$, let the insertion variable be $m=(i,a)$, where $i\in\{0,\dots,k\}$ is an insertion slot and $a\in\mathsf S$ is the new component. Using the posterior law $p(\diff y_1|Y_t=y)$ from Theorem~\ref{thm:multimodal}, define for each slot
\begin{align}
    (\lambda_t^i)^*(y)
    &:=\mathbb E_{Y_1\sim p(\cdot|Y_t=y)}[\lambda_t^i(y|Y_1)],\\
    (\rho_t^i)^*(\diff a|y)
    &:=\frac{\mathbb E_{Y_1\sim p(\cdot|Y_t=y)}
    [\lambda_t^i(y|Y_1)\rho_t^i(\diff a|y,Y_1)]}
    {\mathbb E_{Y_1\sim p(\cdot|Y_t=y)}[\lambda_t^i(y|Y_1)]}.
\end{align}
Together with the marginal within-length generator $\mathcal B_t^*$ from Theorem~\ref{thm:multimodal}, the insertion jump measure
\begin{equation}
    Q_t^*(\diff y'|y)=\sum_{i=0}^k(\lambda_t^i)^*(y)
    \int_{\mathsf S}\delta_{I_i(y,a)}(\diff y')(\rho_t^i)^*(\diff a|y)
\end{equation}
generates the ordered target distribution $Y_1=(X_1,S_1)\sim q$, where $S_1=(S_1^1,\dots,S_1^{X_1})$. As before, $(\rho_t^i)^*$ may be chosen arbitrarily if $(\lambda_t^i)^*(y)=0$.
\end{theorem}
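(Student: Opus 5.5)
The plan is to reduce Theorem~\ref{thm:order} to Theorem~\ref{thm:multimodal} by treating the slot as part of the insertion variable, exactly as described just before the statement. Take $\mathsf M(y)=\{0,\dots,k\}\times\mathsf S$ with the reference measure counting measure on slots times $\nu$. Take the deterministic measurable insertion map $I(y,(i,a))=I_i(y,a)$.

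First, I show that each conditional ordered process fits the single-kernel form of Eq.~\ref{eqn:insertion_jump_cond}. For a fixed clean target $y_1$, the conditional process has per-slot rates $\lambda_t^i(y|y_1)$ and per-slot distributions $\rho_t^i(\diff a|y,y_1)$. Define the total rate $\lambda_t(y|y_1):=\sum_i\lambda_t^i(y|y_1)$. Define the joint kernel $\rho_t(\{i\}\times\diff a|y,y_1):=\lambda_t^i(y|y_1)\rho_t^i(\diff a|y,y_1)/\lambda_t(y|y_1)$, chosen arbitrarily when the total rate is zero. Then $\lambda_t\rho_t$ pushed forward by $I$ equals $\sum_i\lambda_t^i\int\delta_{I_i(y,a)}\rho_t^i(\diff a)$, so the conditional jump measure is unchanged. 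Since $p_1(\cdot|y_1)=\delta_{y_1}$ with $y_1$ an ordered tuple, Theorem~\ref{thm:multimodal} applies verbatim. It gives that $\mathcal B_t^*$ together with $Q_t^*=\lambda_t^*\,I_\#\rho_t^*$ generates $p_t$ with $p_1=q$, where $q$ is now a law on ordered tuples.

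Second, I identify this $Q_t^*$ with the per-slot expression in the statement. Compute $\lambda_t^*\rho_t^*(\{i\}\times\diff a|y)=\mathbb E_{Y_1\sim p(\cdot|Y_t=y)}[\lambda_t^i(y|Y_1)\rho_t^i(\diff a|y,Y_1)]$, by linearity and Fubini over the posterior. Restricting to slot $i$, its total mass is $(\lambda_t^i)^*(y)$ and its normalization is $(\rho_t^i)^*(\diff a|y)$. Summing over $i$ and pushing forward by $I_i$ then yields exactly the displayed $Q_t^*$. Whenever $(\lambda_t^i)^*(y)=0$, the slot-$i$ term vanishes regardless of $(\rho_t^i)^*$, which justifies the arbitrary choice.

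The main obstacle is ensuring that the ordered state space is handled correctly. In the ordered setting, $\mathsf Y$ must be viewed as ordered tuples with no symmetrization. Different slots may produce the same $y'$, for example when inserting an identical value adjacent to an equal component. This is harmless, because $Q_t^*$ is defined as a pushforward and multiplicities simply add, but I would note it explicitly. The remaining point is measurability of $(i,a)\mapsto I_i(y,a)$, which is immediate since each $I_i$ is a coordinate permutation/embedding into $\mathsf S^{k+1}$.
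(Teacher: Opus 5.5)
Your proposal is correct and takes essentially the paper's route: treat the slot as part of the insertion variable $m=(i,a)$ with $I(y,(i,a))=I_i(y,a)$, use the joint kernel $\rho_t(i,\diff a|y,Y_1)=\lambda_t^i\rho_t^i/\lambda_t$, invoke Theorem~\ref{thm:multimodal}, and identify $\lambda_t^*\rho_t^*$ slot by slot. That last step spells out the ``posterior marginalization of each slot-specific insertion measure'' that the paper only asserts. The paper additionally checks, for its concrete thinning path, that the slot rates $|\Omega_i|h_t$ sum to $(L-k)h_t$, matching Theorem~\ref{thm:forward_rate}; you instead take the endpoint condition $p_1(\cdot|y_1)=\delta_{y_1}$ as a hypothesis inherited from Theorem~\ref{thm:multimodal}, which is enough for the statement as posed.
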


\begin{proof}
The order-preserving process is the specialization of Theorem~\ref{thm:multimodal} in which the insertion variable is $m=(i,a)$ and $I(y,(i,a))=I_i(y,a)$. Its outgoing insertion jump measure is therefore:
\begin{equation}
    Q_t^*(\diff y'|y)=\sum_{i=0}^{k}(\lambda_t^i)^*(y)
    \int_{\mathsf S}\delta_{I_i(y,a)}(\diff y')(\rho_t^i)^*(\diff a|y).
\end{equation}
For a clean target of length $L$ and current length $k$, the thinning path partitions the $L-k$ remaining target components into insertion bins $\Omega_0,\dots,\Omega_k$ (see Appendix~\ref{suppl:order_preserve} for construction). The conditional slot rate is $|\Omega_i|h_t$, hence the total conditional rate is:
\begin{equation}
    \sum_{i=0}^k\lambda_t^i(k|L)=(L-k)h_t,
\end{equation}
which recovers the conditional rate in Theorem~\ref{thm:forward_rate}. Posterior marginalization of each slot-specific insertion measure gives exactly the rates and insertion kernels in Theorem~\ref{thm:order}. Therefore, the terminal distribution is the ordered target distribution $q$.
\end{proof}
The general weak-forward argument from Appendix~\ref{suppl:proof_multimodal} then yields
\begin{equation}
    \partial_t p_t=\mathcal B_t^{*\dagger}p_t
    -p_t\sum_{i=0}^k(\lambda_t^i)^*
    +\int_{\mathsf Y}p_t(\diff\bar y)Q_t^*(\cdot|\bar y),
    \label{eqn:order_continuity}
\end{equation}
and the insertion maps $I_i$ preserve the clean-target ordering by construction. 

\input{suppl/proof_kl_bound}

\subsection{Generator for \ourmodel}\label{suppl:generator}
\input{tabs/generator}

The continuity equations in Eq.~\ref{eqn:length_continuity}, \ref{eqn:multimodal_continuity}, and \ref{eqn:order_continuity} describe the evolution of the three \ourmodel variants. Table~\ref{tab:generator} lists the corresponding \emph{outgoing} generators. It is important to distinguish the jump measure from its appearance in the forward equation. At a current state $y$, the generator contains one outgoing insertion measure:
\begin{equation}
    Q_t(\diff y'|y)=\lambda_t(y)
    \int\delta_{I(y,m)}(\diff y')\rho_t(\diff m|y).
    \label{eqn:gpflow_jump_measure}
\end{equation}
Its total mass is $\lambda_t(y)$. Consequently, the adjoint forward equation contains an outgoing term $-\lambda_t(y)p_t(y)$ and an incoming term obtained by integrating $p_t(\bar y)Q_t(\diff y|\bar y)$ over predecessor states $\bar y$. The predecessor inflow is not a second jump measure at $y$. For length-only \ourmodel, Eq.~\ref{eqn:gpflow_jump_measure} reduces to $Q_t(\diff k'|k)=\lambda_t(k)\delta_{k+1}(\diff k')$.

This also clarifies the relation to a general generator-matching formulation. A general jump measure may move arbitrary parts of an existing continuous state. In \ourmodel, its support is restricted to length-increasing successors $I(y,m)$: the jump adds a new component sampled from $\rho_t$, while existing components evolve between jumps through the within-length generator $\mathcal B_t$. Thus, $\lambda_t$ and $\rho_t$ are always coupled as the rate and normalized insertion kernel of the same event.

Let $Q_t=\lambda_t\rho_t$ and $\tilde Q_t=\tilde\lambda_t\tilde\rho_t$, suppressing the deterministic insertion map for clarity. Because $\rho_t$ and $\tilde\rho_t$ are probability kernels, the generalized KL divergence between the positive jump measures decomposes exactly as:
\begin{align}
    \mathcal D_\text{KL}(Q_t\|\tilde Q_t)
    &=\tilde\lambda_t-\lambda_t
      +\lambda_t\log\frac{\lambda_t}{\tilde\lambda_t}
      +\lambda_t\int\rho_t(\diff m)\log\frac{\diff\rho_t}{\diff\tilde\rho_t}(m)\\
    &=\mathcal D_\text{KL}(\lambda_t\|\tilde\lambda_t)
      +\lambda_t\mathcal D_\text{KL}(\rho_t\|\tilde\rho_t).
    \label{eqn:insertion_kl_decomposition}
\end{align}
There is therefore no double counting: the Poisson rate NLL supplies the first term, and the insertion-kernel NLL supplies the second. With exact NLL parameterizations, their sum is the jump-measure term in Theorem~\ref{thm:kl_bound}. The mean-regression approximation used for continuous insertion values is a practical surrogate.

\subsection{Connection to EditFlow}\label{suppl:editflow}
In this section, we demonstrate the connection between \ourmodel, when instantiated only on the discrete modality, and existing work such as EditFlow~\cite{havasi2025edit}. We first note that \ourmodel can adopt the ``deletion'' operation by extending the generalized Poisson process to an inhomogeneous birth-death process. For simplicity, we now consider EditFlow with only insertion and substitution. Recall that the EditFlow is a variable-length discrete flow matching model whose loss is defined as:
\begin{equation}
    \mathcal{L}_\text{edit}=\mathbb{E}_{t,S_t,X_t}\left[\sum_{k=1}^{X_t} u_t(\cdot|S_t)-\sum_{k=1}^{n_\text{edit}} \frac{\dot\kappa_t}{1-\kappa_t}\log u_t(S_1|S_t)\right],\label{eqn:edit_loss}
\end{equation}
where $n_\text{edit}\ge X_1-X_t$ is the number of edit operations, including insertions ($n_\text{ins}=X_1-X_t$) and substitutions ($n_\text{sub}=n_\text{edit}-n_\text{ins}$). EditFlow predicts the token-wise mixed-type ``vector field'' $u_\theta(S_t,t)$, which is parameterized as:
\begin{align}
    u^\text{ins}_\theta(S_t,t)&:=\lambda^\text{ins}_\theta(S_t,t)Q_\theta^\text{ins}(s|S_t,t), \\
    u^\text{sub}_\theta(S_t,t)&:=\lambda^\text{sub}_\theta(S_t,t)Q_\theta^\text{sub}(s,s'|S_t,t),
\end{align}
where $\lambda_\theta$ is the parameterized insertion/substitution rate, $Q_\theta^\text{ins}$ is the Markov chain transition matrix representation of the probability of inserting a new token $s$, and $Q_\theta^\text{sub}$ is the probability of substituting the old token $s'$ with a new token $s$. We can first decompose the loss into the insertion and substitution parts as:
\begin{equation}
    \mathcal{L}_\text{edit}=\mathcal{L}_\text{ins}+\mathcal{L}_\text{sub}
    =\;
    \begin{aligned}[t]
        &\mathbb{E}_{t,S_t,X_t}\left[\sum_{k=1}^{X_t} u^\text{ins}_t-\sum_{k=1}^{n_\text{ins}} \frac{\dot\kappa_t}{1-\kappa_t}\log u^\text{ins}_t(s_1)\right]\\
        &+\mathbb{E}_{t,S_t,X_t}\left[\sum_{k=1}^{X_t} u^\text{sub}_t-\sum_{k=1}^{n_\text{sub}} \frac{\dot\kappa_t}{1-\kappa_t}\log u^\text{sub}_t(s_1,s')\right].
    \end{aligned}
\end{equation}
We first note that for both $Q_\theta$, the distribution property requires $\sum_s Q(s|\cdot)=1$. In this way, the first term in Eq.~\ref{eqn:edit_loss} reduces to $\sum\lambda_\theta$, the total rate described in the order-preserving \ourmodel in Appendix~\ref{suppl:order_preserve}. Using $\log u_\theta=\log\lambda_\theta+\log Q_\theta$, we have:
\begin{equation}
    \mathcal{L}=\underbrace{\sum_{k=1}^{X_t}\lambda-\sum_{k=1}^n\frac{\dot\kappa_t}{1-\kappa_t}\log\lambda}_{\text{GP}} 
    -\underbrace{\sum_{k=1}^n\frac{\dot\kappa_t}{1-\kappa_t}\log Q}_{\text{reconstruction}}.\label{eqn:edit_expand}
\end{equation}
For the insertion loss, it immediately becomes clear that the first two terms together coincide with $\mathcal{L}_\text{GP}$ in Eq.~\ref{eqn:nll} as $n_\text{ins}=X_1-X_t$, and the last terms coincide with the cross-entropy reconstruction loss $\mathcal{L}_\text{rec}$ in Eq.~\ref{eqn:loss_dfm} using the formulation in~\citet{gat2024discrete}.
For the substitution loss, since substitutions do not alter sequence lengths, the rate component can be effectively coalesced into a single logit component $\hat Q_\theta$. Specifically, the survival of a substitution event is equivalent to a substitution with $s=s'$. In this way, we may assume the event always happens while compensating with a non-zero $\hat Q_\theta(s,s):=\lambda_\theta^\text{sub}>0$ to allow staying in the same state. Therefore, throwing away the unnecessary rate part, the substitution loss becomes:
\begin{equation}
    \mathcal{L}_\text{sub}=-\sum_{k=1}^n\frac{\dot\kappa_t}{1-\kappa_t}\log \hat Q,
\end{equation}
which is the standard loss for training a discrete flow matching model in~\citet{gat2024discrete}, serving as $\mathcal{L}_\text{FM}$ to refine the discrete modality in our \ourmodel formulation. Combining the losses together, we get $\mathcal{L}_\text{edit}=\mathcal{L}_\text{ins}+\mathcal{L}_\text{sub}=\mathcal{L}_\text{GP}+\mathcal{L}_\text{rec}+\mathcal{L}_\text{FM}$, which is but a special case of our \ourmodel loss in Eq.~\ref{eqn:loss} on a single discrete modality.

%% file: suppl/proof_kl_bound.tex
\subsection{Proof for Theorem~\ref{thm:kl_bound} (KL Bound)}\label{suppl:kl_bound}
\getkeytheorem{klbound}

\begin{proof}
We first discuss one sufficient set of regularity conditions. These conditions ensure that the Radon-Nikodym derivative and every expression below are well-defined. 

Both processes start from the same distribution $p_0$. For every $t\in[0,T]$, $p_t$ and $\tilde p_t$ are strictly positive densities with respect to a common $\sigma$-finite reference measure; they are differentiable in time and twice differentiable in every continuous coordinate. The densities, their derivatives, and all generator terms are integrable so that differentiation may be exchanged with integration, the generator identities may be applied, and the continuous-coordinate integration-by-parts boundary terms vanish. On every continuous component, the two processes have the same measurable diffusion coefficient $\sigma_t^2I$ with $\sigma_t>0$, and the time integral of $\mathbb E_{p_t}[\|u_t-\tilde u_t\|^2/\sigma_t^2]$ is finite. The jump measures are finite kernels, $Q_t(\cdot|x)$ is absolutely continuous with respect to $\tilde Q_t(\cdot|x)$ for $\diff t\,p_t(\diff x)$-almost every $(t,x)$, and the generalized KL term is measurable and integrable over time and $p_t$. If no continuous component is present, the regularity conditions are unnecessary.

The proof uses the generator identity together with completing the square for the continuous component and the nonnegativity of $z\log z-z+1$ for the jump component. For readability, we suppress the time subscript on $Q_t$ and $\tilde Q_t$ below.
Consider the time derivative of the KL divergence:
\begin{align}
    \frac{\mathrm d}{\mathrm dt} \mathcal D_{\rm KL}(p_t \| \tilde p_t) &
    = \frac{\mathrm d}{\mathrm dt} \int p_t(x) \log \frac{p_t(x)}{\tilde p_t(x)} \,\mathrm dx\\
    &= \int \dot p_t(x) \log \frac{p_t(x)}{\tilde p_t(x)} \,\mathrm dx + \underset{=0}{\underbrace{\int p_t(x) \frac{\dot p_t(x)}{p_t(x)} \,\mathrm dx}} - \int \frac{\mathrm d\tilde p_t}{\mathrm dt}(x) \frac{p_t(x)}{\tilde p_t(x)} \,\mathrm dx
    \\
    & = \mathbb{E}_{x\sim p_t} \left[ \mathcal{A}_t \left( \log \frac{p_t}{\tilde p_t} \right)(x) \right] - \mathbb{E}_{x\sim \tilde p_t} \left[ \tilde{\mathcal{A}}_t \left( \frac{p_t}{\tilde p_t} \right)(x) \right]\\
    &= \mathbb{E}_{x\sim p_t} \left[ \mathcal{A}_t \left( \log \frac{p_t}{\tilde p_t} \right)(x) - \frac{\tilde p_t(x)}{p_t(x)} \tilde{\mathcal{A}}_t \left( \frac{p_t}{\tilde p_t} \right)(x) \right].
\end{align}

Below, for simplicity, we denote $f = p_t/\tilde p_t$. Then
\begin{align}
    &\frac{\mathrm d}{\mathrm dt} \mathcal D_{\rm KL}(p_t \| \tilde p_t) \\ 
    &= \frac{\mathrm d}{\mathrm dt} \int p_t(x) \log \frac{p_t(x)}{\tilde p_t(x)} \,\mathrm dx 
    = \mathbb{E}_{x\sim p_t} \left[ \mathcal{A}_t (\log f)(x) - \frac{1}{f(x)} \tilde{\mathcal{A}}_t f(x) \right] \\
    &= \begin{aligned}[t]
        \mathbb{E}_{x\sim p_t} \Bigg[ & u_t(x)\cdot\nabla\log f(x)+\frac{1}{2}\sigma_t^2 \nabla \cdot \nabla\log f(x)+\int [\log f(y)-\log f(x)]Q(\mathrm dy|x) \\
        & -\frac{1}{f(x)}\left( \tilde u_t(x)\cdot\nabla f(x)+\frac{1}{2}\sigma_t^2 \nabla \cdot \nabla f(x)+\int [f(y)-f(x)]\tilde Q(\mathrm dy|x) \right)\Bigg].
    \end{aligned}\label{eqn:kl_derivative_1}
\end{align}
Substitute $\nabla f(x)/f(x)=\nabla\log f(x)$ and $\nabla \cdot \nabla f(x)/f(x)=\nabla \cdot \nabla\log f(x) + \|\nabla \log f(x)\|^2$ in Eq.~\ref{eqn:kl_derivative_1} to get:
\begin{align}
    &\frac{\mathrm d}{\mathrm dt} \mathcal D_{\rm KL}(p_t \| \tilde p_t)\\
    & = \begin{aligned}[t]
        \mathbb{E}_{x\sim p_t} \Bigg[ & u_t(x)\cdot\nabla\log f(x)+\frac{1}{2}\sigma_t^2 \nabla \cdot \nabla\log f(x)+\int [\log f(y)-\log f(x)]Q(\mathrm dy|x) \\
        & - \tilde u_t(x)\cdot\frac{\nabla f(x)}{f(x)} - \frac{1}{2}\sigma_t^2 \frac{\nabla \cdot \nabla f(x)}{f(x)} - \int \left( \frac{f(y)}{f(x)}-1 \right)\tilde Q(\mathrm dy|x) \Bigg]
    \end{aligned} \\
    & = \begin{aligned}[t]
        \mathbb{E}_{x\sim p_t} \Bigg[ & (u_t(x)-\tilde u_t(x))\cdot\nabla\log f(x)-\frac{1}{2}\sigma_t^2\|\nabla \log f(x)\|^2 \\
        & +\int \left( \log\frac{f(y)}{f(x)} \right)Q(\mathrm dy|x) + \int \left( 1-\frac{f(y)}{f(x)} \right)\tilde Q(\mathrm dy|x) \Bigg].
    \end{aligned}
\end{align}

Note that
\begin{align}
    & (u_t(x)-\tilde u_t(x))\cdot\nabla\log f(x)-\frac{\sigma_t^2}{2}\|\nabla \log f(x)\|^2 \\
    & = \frac{\|u_t(x)-\tilde u_t(x)\|^2}{2\sigma_t^2}-\frac{\sigma_t^2}{2} \left\| \nabla \log f(x) - \frac{u_t(x)-\tilde u_t(x)}{\sigma_t^2} \right\|^2 \le \frac{\|u_t(x)-\tilde u_t(x)\|^2}{2\sigma_t^2},
\end{align}
For the jump terms, define $a(y):=f(y)/f(x)$ and $r(y):=\mathrm dQ(y|x)/\mathrm d\tilde Q(y|x)$. The pointwise inequality $r\log a+1-a\leq r\log r-r+1$ holds because the difference between the right- and left-hand sides is $a[(r/a)\log(r/a)-r/a+1]\geq0$. Consequently,
\begin{align}
    &\int \log\frac{f(y)}{f(x)}Q(\mathrm dy|x)
    +\int\left(1-\frac{f(y)}{f(x)}\right)\tilde Q(\mathrm dy|x)\\
    &\qquad\leq\int\left(r\log r-r+1\right)\tilde Q(\mathrm dy|x)
    =\mathcal D_\text{KL}(Q(\cdot|x)\|\tilde Q(\cdot|x)).
\end{align}
Combining the continuous and jump bounds, we conclude:
\begin{align}
    \frac{\mathrm d}{\mathrm dt} \mathcal D_{\rm KL}(p_t \| \tilde p_t) & \le \mathbb{E}_{x\sim p_t} \left[ \frac{\|u_t(x)-\tilde u_t(x)\|^2}{2\sigma_t^2} + \int \left( \log \frac{\mathrm dQ(y|x)}{\mathrm d\tilde Q(y|x)}-1 \right) Q(\mathrm dy|x) + \int \tilde Q(\mathrm dy|x) \right],
\end{align}
which completes the proof by integration. If no continuous component is present, the completion-of-the-square step and its resulting drift-diffusion term are absent, leaving only the jump-measure term.
\end{proof}

%% file: tabs/generator.tex
\begin{table}[ht]
\centering
\caption{Generators for each variant of \ourmodel.}\label{tab:generator}
\begin{tabular}{@{}lccc@{}}
\toprule
Model Variant & Length-Only & Order-Agnostic & Order-Preserving \\ \midrule
Continuity Equation & Eq.~\ref{eqn:length_continuity} & Eq.~\ref{eqn:multimodal_continuity} & Eq.~\ref{eqn:order_continuity} \\
Within-length $\mathcal B$ & $0$ & $v\cdot\nabla$ (or discrete CTMC) & $v\cdot\nabla$ (or discrete CTMC) \\
Outgoing jump $Q$ & $\lambda(k)\delta_{k+1}$ & $\lambda(y)\rho(\diff m|y)$ & $\sum_{i=0}^k\lambda^i(y)\rho^i(\diff a|y)$ \\ \bottomrule
\end{tabular}
\end{table}

%% file: suppl/B_algorithm.tex
\section{Algorithmic Details}\label{suppl:algorithm}

\begin{figure}[ht]
    \centering
    \includegraphics[width=\linewidth]{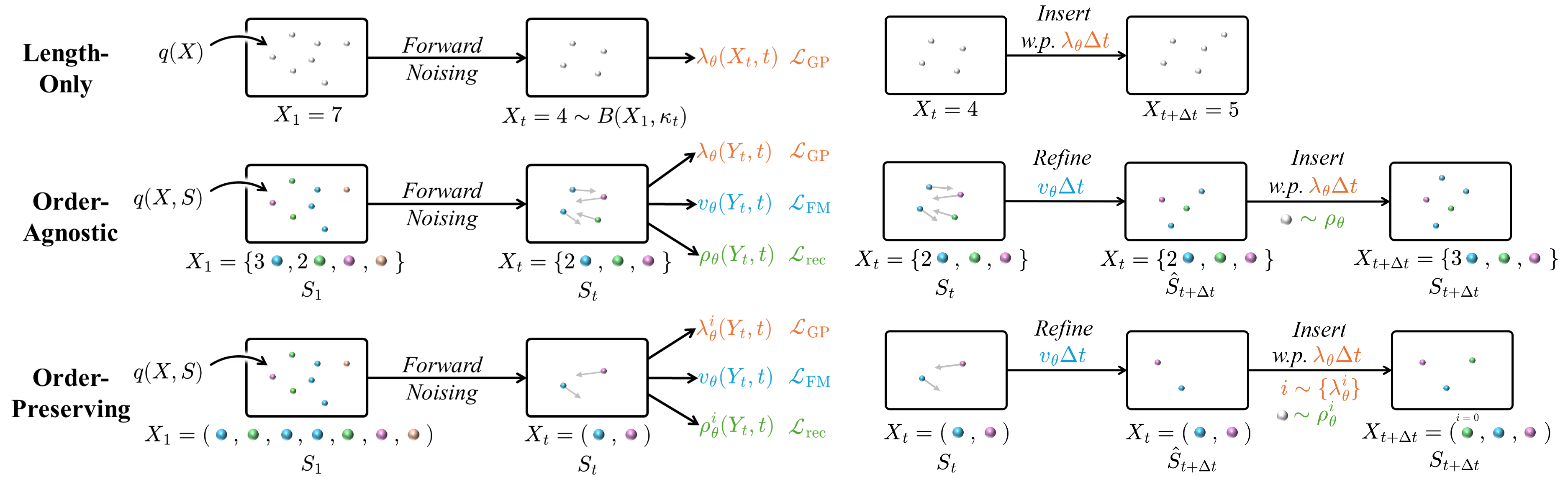}
    \vspace{-0.5em}
    \caption{Training procedure (\textbf{left}) and one sampling step (\textbf{right}) for three variants of \ourmodel: \textbf{Length-Only} (Section~\ref{sec:length_only}), \textbf{Multimodal Order-Agnostic} (Section~\ref{sec:order_agnostic}), and \textbf{Multimodal Order-Preserving} (Appendix~\ref{suppl:order_preserve}). 
    }
    \label{fig:train_sample}
\end{figure}

In this section, we provide additional details on the algorithms, model design, and general techniques we used to improve our \ourmodel. Explicit formulas for the losses are also provided.

\subsection{Order-Preserving Generalized Poisson Flow}\label{suppl:order_preserve}
For multimodal \ourmodel, a natural setting arises when the ordering of a sequence matters in practice, as in protein design, where amino acids have a natural ordering. In the main context, we treat all Poisson events as an unordered set and model the counting process, which is suitable for order-agnostic generative tasks such as molecule generation. We now discuss how to extend \ourmodel to ordered sequential data in a natural way.

For an ordered clean state $Y_1=(X_1,S_1)$ with $X_1=L$ and $S_1=(S^1_1,\dots,S_1^L)$, consider a current state $Y_t=y=(k,s)$ whose $k$ components correspond to retained target indices $r_1<\cdots<r_k$. Set $r_0=0$ and $r_{k+1}=L+1$. The $L-k$ unretained target indices are partitioned among the $k+1$ insertion slots by:
\begin{equation}
    \Omega_i:=\{\ell\in\{1,\dots,L\}\setminus\{r_1,\dots,r_k\}:r_i<\ell<r_{i+1}\},
    \quad 0\le i\le k.
\end{equation}
Thus, $\Omega_i$ contains exactly the upcoming components that belong between the $i$-th and $(i+1)$-st retained components, with $i=0$ and $i=k$ representing the beginning and end.
For each unretained index $\ell$, let $a_\ell\in\mathsf S$ denote its component value along the conditional path at time $t$.

Consistent with Appendix~\ref{suppl:proof_multimodal}, define the insertion-variable space at $y$ as:
\begin{equation}
    \mathsf M(y):=\{0,\dots,k\}\times\mathsf S,
    \quad m=(i,a),
\end{equation}
where $i$ is the insertion slot and $a$ is the new component value. For $s=(s^1,\dots,s^k)$, the general insertion map specializes to:
\begin{equation}
    I(y,(i,a))=:I_i(y,a)
    :=\left(k+1,(s^1,\dots,s^i,a,s^{i+1},\dots,s^k)\right),
    \label{eqn:ordered_insertion_map}
\end{equation}
with the natural empty-prefix or empty-suffix convention at $i=0$ or $i=k$.

For the conditional thinning path indexed by $Y_1$, slot $i$ has rate
\begin{equation}
    \lambda_t^i(y|Y_1)=|\Omega_i|h_t,
    \quad
    \lambda_t(y|Y_1)=\sum_{i=0}^k\lambda_t^i(y|Y_1)=(L-k)h_t,
\end{equation}
and $\rho_t^i(\diff a|y,Y_1)$ is the normalized sampling distribution over $\{a_\ell:\ell\in\Omega_i\}$. When $\lambda_t(y|Y_1)>0$, these slot-specific quantities define the joint insertion-variable distribution:
\begin{equation}
    \rho_t(i,\diff a|y,Y_1)
    :=\frac{\lambda_t^i(y|Y_1)}{\lambda_t(y|Y_1)}
    \rho_t^i(\diff a|y,Y_1).
\end{equation}
Pushing this distribution through Eq.~\ref{eqn:ordered_insertion_map} gives the conditional outgoing jump measure:
\begin{equation}
    Q_t(A|y,Y_1)
    =\sum_{i=0}^k\lambda_t^i(y|Y_1)
    \int_{\mathsf S}\mathds{1}\{I(y,(i,a))\in A\}\rho_t^i(\diff a|y,Y_1),
\end{equation}
which is the order-preserving specialization of Eq.~\ref{eqn:insertion_jump_cond}. Theorem~\ref{thm:order} posterior-marginalizes these slot-specific rates and distributions. 

During learned sampling, the same factorization first simulates an event with total rate $\lambda_\theta(y,t)=\sum_{i=0}^k\lambda_\theta^i(y,t)$, then chooses slot $i$ with probability $\lambda_\theta^i(y,t)/\lambda_\theta(y,t)$, samples $a\sim\rho_\theta^i(\cdot|y,t)$, and updates the state through $I(y,(i,a))=I_i(y,a)$. This process generates the target order-preserving distribution, as guaranteed by Theorem~\ref{thm:order}.
In this way, the generalized Poisson NLL loss in Eq.~\ref{eqn:nll} is adapted as the summation over $k+1$ bins:
\begin{equation}
    \mathcal{L}_\text{GP}=\mathbb{E}_{t,Y_1\sim q,Y_t\sim p_t(\cdot|Y_1)}\left[\sum_{i=0}^{X_t}\left(\lambda^i_t-h_t|\Omega_i|\log\lambda^i_t\right)\right],\label{eqn:adaptednll}
\end{equation}
and the reconstruction loss of the sampling distributions in Eq.~\ref{eqn:loss_rec} is similarly adapted as:
\begin{equation}
    \mathcal{L}_\text{rec}=\mathbb{E}_{t,Y_1\sim q,Y_t\sim p_t(\cdot|Y_1)}\left[-h_t\sum_{i=0}^{X_t}\sum_{\ell\in\Omega_i}\log\rho_t^i(a_\ell|Y_t)\right].\label{eqn:loss_rec_order}
\end{equation}
We outline the training and sampling procedures for the order-preserving \ourmodel in Algorithm~\ref{alg:train} and \ref{alg:sample}, respectively, and adopt it in our experimental setup for protein design. A pictorial comparison of the training and sampling stages of our \ourmodel variants is shown in Figure~\ref{fig:train_sample} across different scenarios.
We also note that the previous discussions for both order-agnostic and order-preserving \ourmodel can be easily generalized to an arbitrary number of length-dependent modalities $S=\{S^{(m)}\}_{1\le m\le M}=\{S^{(m),i}\}_{1\le m\le M,1\le i\le X}$. For simplicity and clarity of notation, we will omit the modality superscripts.

\begin{algorithm}[H]
\caption{Training Order-Preserving \ourmodel}\label{alg:train}
\begin{algorithmic}[1]
\WHILE{not converged}
    \STATE Sample $t\sim[0,1];Y_1=(X_1,S_1)\sim q(X,S)$.
    \STATE Sample $S_t\sim p_t(S)$ for each $S_t^i,1\le i\le X_1$.
    \STATE For each position $\ell$, delete $S_t^\ell$ with probability $1-\kappa_t$ to obtain $\hat{S}_t$ and $X_t=|\hat{S}_t|,Y_t=(X_t,\hat{S}_t)$.
    \STATE Predict the rates $\lambda^i_{\theta}(Y_t,t)$, the length-dependent modality vector field $v_{\theta}(Y_t,t)$, and the sampling distributions $\rho^i_{\theta}(Y_t,t),0\le i\le X_t$.
    \STATE Optimize the loss $\mathcal{L}$ in Eq.~\ref{eqn:loss}. 
\ENDWHILE
\end{algorithmic}
\end{algorithm}

\begin{algorithm}[H]
\caption{Sampling from Order-Preserving \ourmodel (Euler)}\label{alg:sample}
\begin{algorithmic}[1]
\STATE $Y_0=(X_0=0,S_0=\emptyset)$.
\FOR{$n\gets 1,2,\dots,N$}
    \STATE $t\gets t_n,\Delta t\gets t_{n+1}-t_n$, where $t_{N+1}:=1$.
    \STATE Predict the rates $\lambda^i_{\theta}(Y_t,t)$, the vector field $v_{\theta}(Y_t,t)$, and the sampling distributions $\rho_{\theta}^i(Y_t,t)$ for $0\le i \le X_t$.
    \STATE Advance the generative flow for each $S_t$ with $v_{\theta}$ to obtain $\hat S_{t+\Delta t}$.
    \STATE Set $\hat Y_{t+\Delta t}:=(X_t,\hat S_{t+\Delta t})$ and $\lambda_\theta:=\sum_{i=0}^{X_t}\lambda^i_\theta(Y_t,t)$.
    \STATE Sample an insertion event with probability $\lambda_\theta\Delta t$.
    \STATE If an event happens, choose slot $i$ with probability $\lambda_\theta^i/\lambda_\theta$, sample $a\sim\rho_\theta^i(\cdot|Y_t,t)$, and set $Y_{t+\Delta t}=I_i(\hat Y_{t+\Delta t},a)$; otherwise set $Y_{t+\Delta t}=\hat Y_{t+\Delta t}$.
\ENDFOR
\STATE \textbf{Return:} $Y_1=(X_1,S_1)$.
\end{algorithmic}
\end{algorithm}

\subsection{Training and Sampling Pseudocode}
We provide additional pseudo-code for training and sampling length-only (Algorithm~\ref{alg:train_length}, \ref{alg:sample_length}), order-agnostic (Algorithm~\ref{alg:train_set}), and conditional order-preserving (Algorithm~\ref{alg:train_cond}, \ref{alg:sample_cond}) \ourmodel. The three unconditional variants are summarized in Figure~\ref{fig:train_sample}, which highlights the differences in the training objectives and sampling predictions.

\begin{algorithm}[H]
\caption{Training Multimodal \ourmodel}\label{alg:train_set}
\begin{algorithmic}[1]
\WHILE{not converged}
    \STATE Sample $t\sim[0,1];Y_1=(X_1,S_1)\sim q(X,S)$.
    \STATE Sample $S_t\sim p_t(S)$ for each $S_t^i,1\le i\le X_1$.
    \STATE For each element $S_t^i\in S_t$, delete it with probability $1-\kappa_t$ to obtain $\hat{S}_t$ and $X_t=|\hat{S}_t|,Y_t=(X_t,\hat{S}_t)$.
    \STATE Predict the rate $\lambda_{\theta}(Y_t,t)$, the length-dependent modality vector field $v_{\theta}(Y_t,t)$, and the sampling distribution $\rho_{\theta}(Y_t,t)$.
    \STATE Optimize the loss $\mathcal{L}$ in Eq.~\ref{eqn:loss}. 
\ENDWHILE
\end{algorithmic}
\end{algorithm}

\begin{algorithm}[H]
\caption{Training Length-Only \ourmodel}\label{alg:train_length}
\begin{algorithmic}[1]
\WHILE{not converged}
    \STATE Sample $t\sim[0,1];X_1\sim q(X)$.
    \STATE Sample $X_t\sim B(X_1,\kappa_t)$.
    \STATE Predict the rate $\lambda_{\theta}(X_t,t)$ and optimize the loss $\mathcal{L}_\text{PP}$ in Eq.~\ref{eqn:nll}. 
\ENDWHILE
\end{algorithmic}
\end{algorithm}

\begin{algorithm}[H]
\caption{Sampling from Length-Only \ourmodel (Euler)}\label{alg:sample_length}
\begin{algorithmic}[1]
\STATE $X_0=0$.
\FOR{$i\gets 1,2,\dots,N$}
    \STATE $t\gets t_i,\Delta t\gets t_{i+1}-t_i$, where $t_{N+1}:=1$.
    \STATE Predict the rate $\lambda_{\theta}(X_t,t)$.
    \STATE Sample an event with a probability of $\lambda_\theta\Delta t$.
    \STATE If an event happens, $X_{t+\Delta t}=X_t+1$; else $X_{t+\Delta t}=X_t$.
\ENDFOR
\STATE \textbf{Return:} $X_1$.
\end{algorithmic}
\end{algorithm}

\begin{algorithm}[H]
\caption{Training Conditional Order-Preserving \ourmodel}\label{alg:train_cond}
\begin{algorithmic}[1]
\WHILE{not converged}
    \STATE Sample $t\sim[0,1];Y_1=(X_1,S_1)\sim q(X,S)$.
    \STATE Sample condition $Y_0=(X_c,S_c)\sim q_c(X,S|X_1,S_1)$ with clean values.
    \STATE Sample $S_t\sim p_t(S)$ for each non-conditioning position $i$, $S_t^i,1\le i\le X_1$.
    \STATE For each non-conditioning position $i$, delete $S_t^i$ with probability $1-\kappa_t$ to obtain $\hat{S}_t$ and $X_t=|\hat{S}_t|,Y_t=(X_t,\hat{S}_t)$.
    \STATE Predict the rates $\lambda^i_{\theta}(Y_t,t)$, the length-dependent modality vector field $v_{\theta}(Y_t,t)$, and the sampling distributions $\rho^i_{\theta}(Y_t,t),0\le i\le X_t$.
    \STATE Optimize the loss $\mathcal{L}$ in Eq.~\ref{eqn:loss}. 
\ENDWHILE
\end{algorithmic}
\end{algorithm}

\begin{algorithm}[H]
\caption{Sampling from Conditional Order-Preserving \ourmodel (Euler)}\label{alg:sample_cond}
\begin{algorithmic}[1]
\STATE \textbf{Input:} $Y_0=(X_c,S_c)$.
\FOR{$i\gets 1,2,\dots,N$}
    \STATE $t\gets t_i,\Delta t\gets t_{i+1}-t_i$, where $t_{N+1}:=1$.
    \STATE Predict the rates $\lambda^i_{\theta}(Y_t,t)$, the vector field $v_{\theta}(Y_t,t)$, and the sampling distributions $\rho_{\theta}^i(Y_t,t)$ for $0\le i \le X_t$.
    \STATE Advance the generative flow for each $S_t$ with $v_{\theta}$ to obtain $\hat S_{t+\Delta t}$ for each non-conditioning position $i$.
    \STATE Zero out the rates where insertions are not allowed (e.g., inside a contiguous condition sequence).
    \STATE Simulate the Poisson process by sampling an event on $X_{t+\Delta t}$ with a probability of $\lambda_\theta\Delta t$ where $\lambda_\theta=\sum_i\lambda^i_\theta$.
    \STATE If an event happens, choose the insertion position $i$ with probabilities proportional to $\{\lambda_i\}$ and insert the new feature value $s_i\sim\rho^i_{\theta}$ into $\hat S_{t+\Delta t}$ to obtain $S_{t+\Delta t}$.
    \STATE Update $Y_{t+\Delta t}=(X_{t+\Delta t},S_{t+\Delta t})$.
\ENDFOR
\STATE \textbf{Return:} $Y_1=(X_1,S_1)$.
\end{algorithmic}
\end{algorithm}

In all sampling algorithms, the sequence of discretization time steps $0\le t_1<t_2<\cdots<t_N<t_{N+1}=1$ is defined by an inference-time scheduler, and more advanced solvers (e.g., the second-order Heun solver) can be readily applied.

\subsection{Flow Parameterization}\label{suppl:flow_param}
\paragraph{Continuous Euclidean Modality}
Existing works have explored fixed-length diffusion or flow-based generative models for both continuous and discrete modalities, whose loss can be adapted as our $\mathcal{L}_\text{FM}$ for refining the existing length-dependent elements. For example, consider a flow matching model with the optimal-transport path~\cite{lipman2022flow} $s_t=(1-t)s_0+ts_1$, where $s_0\sim\mathcal{N}(0,\sigma_0^2)$. The flow matching loss is
\begin{equation}
    \mathcal{L}_\text{FM}=\mathbb{E}_{t,s_0\sim p_0,s_1\sim q}[\|v_\theta(s_t,t)-(s_1-s_0)\|^2],\label{eqn:loss_ot_fm}
\end{equation}
where $v_\theta(s_t,t)$ is the vector field predictor. We follow Proteina in using this loss to refine the continuous CA coordinates in our structure design model. 
For sampling from such a continuous vector field, a simple Euler update reads:
\begin{equation}
    s\gets s+v_\theta\Delta t.
\end{equation}
We may also employ more complex sampling algorithms that incorporate higher-order information or stochasticity (e.g., SDE sampling). For example, one sampling step in Proteina~\cite{geffner2025proteina} reads:
\begin{equation}
    s\gets s+(v_\theta+g_t s_\theta)\Delta t+\sqrt{2g_t\gamma\Delta t}\,\varepsilon.\label{eqn:sample_sde}
\end{equation}
where $s_\theta(s_t,t):=(tv_\theta(s_t,t)-s_t)/(1-t)$ is the score function, $g_t$ is the score scheduler, $0<\gamma<1$ is the noise scale, and $\varepsilon\sim \mathcal{N}(0,1)$. We also adopt this SDE sampling for the coordinates.

\paragraph{Continuous Riemannian Modality}
For Riemannian modalities such as rotations (which lie in the 3D rotation group $\mathrm{SO}(3)$) and torsion angles (which lie in the flat torus $\mathbb{T}$) in the peptide co-design task, the standard Riemannian flow matching loss~\cite{chen2024flow} is applied:
\begin{equation}
    \mathcal{L}_\text{FM}=\mathbb{E}_{t,s_0\sim p_0,s_1\sim q}[\|v_\theta(s_t,t)-u_t(s_t|s_1)\|_g^2],\label{eqn:loss_rfm}
\end{equation}
where geodesic interpolation is used as the noising process $s_t:=\exp_{s_0}(\kappa_t\log_{s_0}s_1)$ with a differentiable scheduler $\kappa_t$. The $\exp$ denotes the exponential map, $\log$ the logarithm map, and $\|\cdot\|_g$ the Riemannian norm. The conditional vector field $u_t$ can be calculated in closed form as
\begin{equation}
    u_t(s_t|s_1)=\frac{\dot\kappa_t}{1-\kappa_t}\log_{s_t}s_1.
\end{equation}

Sampling from Riemannian flow matching follows the Euler discretization as:
\begin{equation}
    s\gets \exp_s(v_\theta\Delta t).
\end{equation}

The mathematical expression for the Riemannian operations on $\mathrm{SO}(3)$ and $\mathbb{T}$ is known in closed form. We refer interested readers to, e.g., Appendix A of \citet{li2024full} for full details.

\paragraph{Discrete Modality}
For continuous-state discrete flow matching models~\cite{cheng2025alpha} that embed the discrete token into a continuous modality, the previous discussion of continuous-modality losses applies directly~\cite{hoogeboom2021argmax,cheng2024categorical,dunn2024mixed}. 
For example, the simplex-based \ourmodel variant for peptide design adopts the soft logits $s_1=K(2\delta_{c_1}-1)$, where $c_1$ is the corresponding amino acid type and $K=5$. $s_0$ is sampled from $\mathcal{N}(0, KI)$, and Euclidean geometry is adopted. Following~\citet{li2024full}, the cross-entropy loss is used instead of the vector field matching loss:
\begin{equation}
    \mathcal{L}_\text{FM}=\mathbb{E}_{t,s_0\sim p_0,s_1\sim q}[\mathrm{CE}(f_\theta(s_t,t),c_1)],
\end{equation}
where $f_\theta(s_t,t)$ is the logit predictor (probability denoiser).

For the discrete modality that relies on a continuous-time Markov chain, the intermediate noisy $s_t$ is a discrete token rather than continuous logits. The discrete flow matching loss~\cite{gat2024discrete,sahoo2024simple} reads:
\begin{equation}
    \mathcal{L}_\text{FM}=\mathbb{E}_{t,s_0\sim p_0,s_1\sim q}\left[\sum_{s_t\ne s_1}\mathrm{CE}(f_\theta(s_t,t),s_1)\right],
\end{equation}
where the noisy $s_t$ is sampled from some pre-defined probability path on the noisy logits $x_t$:
\begin{equation}
    s_t:=
\begin{cases}
    \sim \mathrm{Cat}(x_t), & \text{w.p. } 1-t,\\
    s_1, & \text{w.p. } t.
\end{cases}
\end{equation}
A simple yet effective practice follows mask diffusion~\cite{sahoo2024simple}: each noisy token starts with a special mask token $m$ with $s_0\equiv\delta_m$, and the FM loss is computed over all mask tokens. During sampling, a single mask diffusion update reads:
\begin{equation}
    s\gets
\begin{cases}
    \sim \mathrm{Cat}(f_\theta(s_t,t)), & \text{if } s_t=m \text{ and w.p. } \Delta t/(1-t),\\
    s_t, & \text{else}.
\end{cases}
\end{equation}
Notably, we may consider a special case in which the insertion element type is fixed to be the mask token, leaving no reconstruction loss $\mathcal{L}_\text{rec}=0$ for the discrete modality. In this case, the model completely relies on the non-zero mask diffusion loss $\mathcal{L}_\text{FM}$ to determine the final discrete type.

\subsection{Sampling Distribution Parameterization}\label{suppl:sample_param}
The marginal sampling distribution in Theorem~\ref{thm:multimodal} and \ref{thm:order} is a mixture distribution of the forward noisy length-dependent modality from which sampling is feasible (when conditioned on the clean target). Therefore, we can rely on another generative model for modeling such a distribution, whose general NLL loss form is described in Eq.~\ref{eqn:loss_rec} and \ref{eqn:loss_rec_order}.

\paragraph{Continuous Euclidean Modality}
We first consider the continuous modality case where directly regressing on the sampling marginal $\rho_t$ is often intractable. We may rely on another flow matching head or any generative model to approximate $\rho_t$ parametrically, using the current state $S_t$ and the noisy element to be inserted as inputs.
While such a nested flow approach is theoretically preferred, it requires simulating an additional full flow sampling for each Poisson event, thereby substantially slowing the sampling process. 
In practice, we found that using the MSE loss to regress the mean behavior of $S_t$ already performs well enough with the existing later denoising steps. The simplified loss can be written as:
\begin{equation}
    \mathcal{L}_\text{rec}=\mathbb{E}_{Y_1,Y_t}\left[h_t\sum_{k=1}^{X_1-X_t}(f_\theta(Y_t)-S_{t}^k)^2\right].\label{eqn:loss_rec_mean}
\end{equation}

\paragraph{Continuous Riemannian Modality}
The simplified reconstruction loss can be easily extended to Riemannian generative modeling (e.g., rotations and torsion angles in peptide design), which takes:
\begin{equation}
    \mathcal{L}_\text{rec}=\mathbb{E}_{Y_1,Y_t}\left[h_t\sum_{k=1}^{X_1-X_t}d_g^2(f_\theta(Y_t),S_{t}^k)\right],\label{eqn:loss_rec_rfm}
\end{equation}
where $d_g^2(x,y)=\|\log_x y\|_g^2$ is the squared geodesic distance.

\paragraph{Discrete Modality}
For Markov-chain-based discrete flow matching models, \citet{gat2024discrete} has established a connection between the discrete vector field and the probability flow over categorical data, where the sampling marginal can be obtained in closed form. Following this, the KL-divergence (cross-entropy loss) is used to train the discrete flow, and the reconstruction loss can be calculated as a summation of weighted cross-entropy losses as:
\begin{equation}
\mathcal{L}_\text{rec}=\mathbb{E}_{Y_1,Y_t}\left[h_t\sum_{k=1}^{X_1-X_t}\mathrm{CE}(f_\theta(Y_t),S_{t}^k)\right]\label{eqn:loss_dfm},
\end{equation}
where $f_\theta$ is the logit predictor. Thus, unlike the continuous case, the discrete case is always simulation-free, as the learned logits already encode the marginalized discrete sampling distribution. 

In contrast to the flow-oriented loss discussed in the previous subsection, we may consider another special case in which the element type is determined upon insertion, yielding no refinement loss with $\mathcal{L}_\text{FM}=0$. In this approach, the sampling distribution serves a dual role in determining the discrete types without further refinement. This parameterization and sampling scheme are adopted for our sequence design model. Nonetheless, one may also allow token refinement relying on the discrete diffusion~\cite{austin2021structured,sahoo2024simple} or discrete flow matching~\cite{gat2024discrete} loss $\mathcal{L}_\text{FM}$ to further refine the discrete modality.


For all time-dependent modalities, the training noise schedulers and inference time sampling schedulers for the flow and reconstruction parts can be either coupled with the rate scheduler or separate. We typically use different schedulers for the rate and length-dependent modality to allow greater flexibility during sampling. In this way, time steps across modalities are not shared during training, and separate time embeddings are fed into the model.

\subsection{\texorpdfstring{$\tau$}{τ}-Leaping} \label{suppl:tau_leap}
One practical limitation of existing rate-based variable-length generative models is their slower sampling speed compared to standard fixed-length flow sampling, because we must simulate the inhomogeneous Poisson process at each time step, which is more sensitive. For example, EditFlow~\cite{havasi2025edit} required 5{,}000-10{,}000 sampling steps to achieve reasonable performance, which is significantly larger than the average sequence length and the typical number of steps used in fixed-length diffusion or flow matching models.
To speed up the sampling process, we consider simulating multiple Poisson events within a small time step $\tau$ by leveraging the \emph{$\tau$-leaping} method~\cite{gillespie2001approximate}. Specifically, instead of sampling one event with a success probability of $\lambda \tau$, the increment is now sampled from a Poisson distribution with a rate of $\lambda \tau$:
\begin{equation}
    X_s-X_t\sim\mathcal{P}\left(\lambda(X_t,t)\tau\right), \quad\tau=s-t.\label{eqn:poisson_leaping}
\end{equation}
Note that we have $\Pr(X_s-X_t=0)=\exp{(-\lambda\tau)}$, which is close to $1-\lambda\tau$ when $\lambda\tau$ is small, recovering the standard simulation procedure. Nonetheless, when $\lambda\tau$ is large, the $\tau$-leaping method can significantly reduce the number of sampling steps while still maintaining a good approximation to the original process. As an example, the ground-truth marginal rate on the PDB dataset at $t=0$ is the average sequence length, which is approximately 166. With a standard 400-step sampling setup in Proteina, the initial rate is approximately 0.4, at which point the approximation will fail.

In addition, we also allow for a more general form of $\tau$-leaping, where simultaneous insertions can happen at multiple places independently according to their own rates and sampling distributions~\cite{li2007analysis}. Such a multi-place insertion better approximates the original process than the Bernoulli approximation. 
Nonetheless, we note that $\tau$-leaping relies on the assumption that a single event does not significantly affect predictions, which does not necessarily hold even for discrete diffusion with a fixed time step~\cite{ren2025fast}. In our scenario, where length is also variable, such an approximation error may be more pronounced. Despite this, we empirically find that both forms of $\tau$-leaping improve performance and efficiency in the 400-step generation setup.

%% file: suppl/C_exp.tex
\section{Experiment Details}\label{suppl:exp}
In this section, we detail the datasets, task-specific architectures, and evaluation pipelines. 
The two structure-based generation tasks are built on the Proteina base model (with different feature variants) and use the same datasets; the two sequence-design tasks share a separate backbone family, and peptide design is described in its own subsection. We therefore organize this section by dataset/model architecture.

Proteins are naturally ordered according to their amino acid sequence. We always use the order-preserving \ourmodel for generative modeling in all tasks. Table~\ref{tab:config} summarizes the per-task modeling choices, including the modality geometry, the flow and reconstruction losses, the insertion scheduler, and the base model architecture.
\input{tabs/config}

\subsection{Protein Structure Generation}
\subsubsection{Data Specification}\label{suppl:data}
The datasets we use to train the model include PDB~\cite{berman2000protein}, the experimentally validated protein structure database, and AFDB~\cite{jumper2021highly}, the AlphaFold2-predicted structure database that contains more structures.
For the PDB dataset, we use the Proteina preprocessing script to filter entries that have a single chain, contain no non-canonical amino acids, and have lengths between 50 and 256. The data are then partitioned using a 50\% sequence-similarity threshold to prevent information leakage within the same cluster, yielding 6822 distinct clusters, for a total of 27k training structures. During training, we follow Proteina's approach to uniformly sample one cluster and then a member of that cluster.

For AFDB, we follow Proteina to filter and cluster the original 21M AFDB structures using sequence-based MMseqs2~\cite{steinegger2017mmseqs2} and structure-based FoldSeek~\cite{van2024fast}, resulting in 713k structures. Unlike PDB, we use the standard uniform sampling strategy. As AFDB contains much longer proteins, each structure is dynamically cropped to a maximum sequence length of 512 to avoid memory issues during training.

\subsubsection{Model Specification}\label{suppl:model}
\paragraph{Architecture.}
We build each structure model on top of its corresponding 60M Proteina~\cite{geffner2025proteina} architecture and add heads to the final hidden representation to learn per-position rates and sampling distributions, thereby increasing the total number of trainable parameters to 65 M. In both unconditional generation and motif scaffolding, the only length-dependent variable is the continuous CA coordinates.

For motif scaffolding, we adapt Proteina's dedicated motif architecture variant, which includes additional information on the motif mask, coordinates, and pairwise distances. This architecture is not checkpoint-compatible with the unconditional Proteina model, so neither method transfers an unconditional checkpoint into the motif experiment. All residue indices are renumbered to avoid exposing their original positions. We additionally inject sequence embeddings for the motif residues (if available), with other non-motif residues set to unknown (X). We adopt a hard constraint in an inpainting fashion: the motif coordinates and types are never corrupted or modified and incur no loss during either training or sampling.

\paragraph{Training.}
During training, we generally follow the original setup and hyperparameters in Proteina, where the time step is sampled from $t\sim0.98\mathrm{Beta}(1.9,1.0)+0.02U[0,1]$. For the length modality, we adopt the early-completion principle used by TDDM~\citep{campbell2023trans} and set $\kappa_t:=\min\{1,t/0.6\}$. The length time $t\sim U[0,1]$ is sampled independently from the continuous time and is fed to the model as an additional input. As defined in Theorem~\ref{thm:forward_rate}, the process is absorbed once $\kappa_t=1$: for $t\geq0.6$, all target elements are present, the insertion rate and insertion-loss contribution are set to zero, and only the within-length refinement process remains active. The OT-path-based flow matching loss in Eq.~\ref{eqn:loss_ot_fm}, the NLL loss for the Poisson process in Eq.~\ref{eqn:nll}, and the mean reconstruction loss in Eq.~\ref{eqn:loss_rec_mean} are used with weights $w_\text{FM}=w_\text{rec}=1$. For motif scaffolding, we use the conditional generation setup, in which the motif CA coordinates remain unchanged and are never deleted during the forward noising process. The other losses are identical.

For unconditional generation, two separate 65M models are trained: one with a total batch size of 128 and 300k iterations on PDB, and another with a total batch size of 32 and 1.6M iterations on AFDB.
For motif scaffolding, both the 65M \ourmodel model and the corresponding Proteina baseline use the AFDB dataset and the same motif augmentation. Our model is trained for 1M iterations with a total batch size of 192.
We follow~\citet{lin2024out} to randomly sample 0-4 contiguous parts (contigs) as motifs during training. All models are trained with the Adam optimizer with a learning rate of $10^{-4}$. An exponential moving average (EMA) of the model parameters is maintained with a decay of 0.999 and used for all evaluations.
All models were trained on 8 NVIDIA A100 GPUs, taking approximately 2 days on PDB and 5 days on AFDB.

\paragraph{Sampling.}
During sampling, 400 sampling steps are used following the Proteina setup. For the continuous modality, we also follow Proteina to utilize SDE sampling described by Eq.~\ref{eqn:sample_sde} with $g_t=1/t$, $\gamma=0.35$, and an exponential inference scheduler. For the length modality, we use $\kappa_t:=\min\{1,t/0.3\}$ with a constant rate scaling factor of 0.7. Consequently, insertions are completed by $t=0.3$, and the remaining trajectory refines the fixed set of generated coordinates.

For motif scaffolding, motif contigs are provided as the model's initial input. Insertions are allowed between motif contigs and at the beginning and the end, but not within the contiguous motif. The motif parts are centered and remain unchanged during the refinement for the continuous coordinates. We track the lengths of each contig (motifs and scaffolds) to determine the new motif positions for the next input in the iterative sampling process.
The same sampling hyperparameters as in the unconditional setup are used, except for a rate scaling factor of 0.6 to compensate for the longer length distribution in AFDB.

We use the $\tau$-leaping technique described in Appendix~\ref{suppl:tau_leap} to enable high-quality sampling with the 400-step sampling setup. Simultaneous insertion at multiple places is allowed, and multiple insertions at the same place are also allowed, with the count sampled from the corresponding Poisson distribution. In contrast to Proteina, classifier-free guidance (CFG) and self-conditioning are not used.

\subsubsection{Evaluation Pipeline}\label{suppl:eval}
In our evaluations of structure quality and diversity, the following metrics are used:

\textbf{Designability.}
For each protein structure, we first use ProteinMPNN~\cite{dauparas2022robust} to design 8 candidate sequences, which are then folded by ESMFold~\cite{lin2023evolutionary} to give the refolded structures. The self-consistency root-mean-square deviation (scRMSD), defined as the RMSD between the generated and refolded structures, is used as the primary indicator of structural fidelity. A \emph{designable} protein structure in unconditional protein generation is defined as having a minimum scRMSD $<$ 2Å among the 8 refolded candidates. The scRMSD is always calculated on the CA coordinates due to Proteina's CA-only architecture. \emph{Designability} is defined as the proportion of designable protein structures in all generations. The designability score assesses the quality of structural generation, since natural protein structures and scaffolds should be designable.

\textbf{Scaffold Designability.}
For motif scaffolding, the following additional metrics are calculated:
\begin{itemize}[leftmargin=*]
    \item \textbf{motifRMSD}, which is the RMSD between the structure of the refolded motif region and the ground-truth motif structure specified as the condition. We extract the motif residues in each refolded structure to calculate motifRMSD. motifRMSD is calculated over the four backbone atoms (N, C, CA, O) instead of CA only.
    \item \textbf{pLDDT} (predicted local distance difference test score)~\citep{mariani2013lddt}, which estimates structural fidelity on a 0-100 scale given the residue sequence. pLDDT is calculated for the ProteinMPNN-redesigned sequences.
    \item \textbf{pAE} (predicted aligned error), which measures the alignment error of residue pairs between the generated and refolded structures.
\end{itemize}
Additionally, the amino acid sequences in the motif template are fixed and will not be redesigned by ProteinMPNN unless specified as unknown (X) in the template. This ensures that motif information, such as enzymatic residues, is preserved during sequence redesign.
Following~\citet{rfdiffusion}, a scaffold is considered \emph{successful} if at least one refolded structure (out of 8 candidates) satisfies scRMSD $\le$ 2Å, motifRMSD $\le$ 1Å, pLDDT $\ge$ 70, and pAE $\le$ 5Å. Successful designs are clustered as described below, and each resulting cluster counts as one \emph{unique success}.

\textbf{Diversity.}
\emph{Diversity} is defined as the proportion of designable clusters with respect to the total number of designable samples.
All predicted structures are clustered with FoldSeek~\citep{van2024fast} at a specified TM-score threshold~\citep{zhang2004tmscore}, TM-score is a length-independent measure of global fold similarity. We calculate the diversity using the following FoldSeek~\cite{van2024fast} command:
\begin{verbatim}
foldseek easy-cluster <path_samples> <path_tmp>/res <path_tmp> \
  --alignment-type 1 --cov-mode 0 --min-seq-id 0 \
  --tmscore-threshold <threshold>
\end{verbatim}
where the threshold is 0.5 for unconditional generation and 0.6 for motif scaffolding.

\textbf{Novelty.}
We assess the structural novelty score via exhaustive search against a structure database with FoldSeek. \emph{Novelty} is defined as the maximum TM-score across all hits for each generated structure, averaged across all generations.
We calculate the novelty score using the following FoldSeek command:
\begin{verbatim}
foldseek easy-search <path_samples> <path_db> out.tsv <path_tmp> \
  --alignment-type 1 --exhaustive-search --tmscore-threshold 0.0 \
  --max-seqs 10000000000 --format-output query,target,alntmscore,lddt
\end{verbatim}
where \verb|path_db| is the path prefix to either the full PDB structure database or the filtered 713k AFDB subset, resulting in the two score variants in Table~\ref{tab:uncond}. The output TSV file contains results from the exhaustive search against the designated database, which is then processed to extract the highest TM-score.

\textbf{Secondary structure composition.}
Per-residue three-state secondary structure is assigned with DSSP~\citep{kabsch1983dssp} via the Biotite library~\citep{kunzmann2018biotite}.
We report the average per-protein helix ($\alpha$) and strand ($\beta$) fractions in the generated structures, and the coil fraction is given by $1-\alpha-\beta$.

For unconditional protein design, we calculate all metrics over 500 generations per model. For \ourmodel, the 500 lengths are determined by the learned rate. Following the published fixed-length benchmark, each baseline receives 100 requests at each length in $\{50,100,150,200,250\}$. 
We note that sampling baseline lengths from the empirical PDB marginal would instead constitute an oracle-conditioned diagnostic because the fixed-length models do not learn $p(L)$. We therefore use the standard grid for the primary end-to-end comparison and report \ourmodel's binned designability separately in Appendix~\ref{suppl:result}.

For motif scaffolding, we compute the metrics on 1000 generated structures for each motif target, using FoldSeek's clustering utility to identify the number of unique successes described above.
For motif tasks 5TRV, 6E6R, 6EXZ, and 7MRX, the original RFDiffusion benchmark provides contig templates across 3 different length ranges. Each contig template specifies the minimum and maximum lengths of each linker segment and of the entire backbone. For fixed-length generative models like Proteina, given a contig template, the model first randomly samples the lengths of each linker segment to satisfy the contig constraints, then fixes the positions of these linker residues and redesigns their structures. In this way, the length distribution in Figure~\ref{fig:motif_length} exhibits 3 modes corresponding to the long, medium, and short length ranges.
For the baselines, we use the results from the best contig configuration on tasks 5TRV, 6E6R, 6EXZ, and 7MRX. In contrast, \ourmodel does not rely on predefined contig templates or length-range constraints. Instead, we sample 1000 backbones conditioned solely on the motif segments, yielding a continuous length distribution without the 3 distinct modes.

\subsubsection{Baseline Information}

For protein structure design, we compare \ourmodel against the following baselines:

\textbf{Proteina}~\cite{geffner2025proteina} is a CA-only structure design model based on flow matching. For unconditional generation on PDB, a 60M model is retrained on the same PDB dataset using the same training hyperparameters as our \ourmodel for 300k iterations, with a total batch size of 128. For AFDB, the official 60M unconditional Proteina checkpoint is used. For motif scaffolding, we use the separate 60M Proteina motif architecture with the same AFDB training set and motif augmentation as our motif \ourmodel. Because the unconditional and motif architectures expose different inputs, their checkpoints are not interchangeable; each experiment therefore uses its corresponding Proteina base model.
For all Proteina models, we use exactly the same sampling hyperparameters as \ourmodel: 400 sampling steps, a noise scale of 0.35, and an exponential scheduler. No self-conditioning is used, as suggested in the default sampling config.

\textbf{FrameDiff}~\cite{yim2023se} is a Riemannian diffusion model that learns a rotation (orientation) and a translation (position) for each amino acid in a protein. \textbf{FrameFlow}~\cite{yim2023fast} further extends it by incorporating Riemannian flow matching, with improved training stability and performance reported in the original work. The results are directly taken from the Proteina paper, which runs inference using the provided checkpoint following the recommended hyperparameters.

\textbf{RFDiffusion}~\cite{rfdiffusion} is a diffusion-based protein backbone generative model that leverages the structure prediction power of RoseTTAFold~\cite{baek2021accurate}. RFDiffusion also supports motif scaffolding by incorporating motifs as conditions. The unconditional generation results for RFDiffusion are taken directly from the Proteina paper, whereas the motif scaffolding results are taken directly from the original paper.

\textbf{Genie2}~\cite{lin2024out} is a diffusion-based generative model with a specialized multi-motif framework that designs co-occurring motifs. The motif scaffolding results are taken directly from the Proteina paper, which follows the recommended sampling hyperparameters in the original paper.

\subsection{Protein Sequence Generation}\label{suppl:cfg}
\subsubsection{Data Specification}\label{suppl:seq_data}
We train our sequence-generation \ourmodel on the UniRef50 database~\cite{suzek_uniref_2007}, containing 41{,}887{,}589 representative protein sequences obtained by clustering UniProt~\cite{the_uniprot_consortium_uniprot_2023} sequences at 50\% sequence identity.
We follow the data processing in EvoDiff~\cite{lin2023evolutionary}, which results in a dataset split into 41{,}546{,}293 training, 82{,}929 validation, and 48{,}941 test sequences. We further filter training sequences to retain only those with lengths between 10 and 1024 residues, yielding 40{,}471{,}430 training sequences with a mean length of 247.7 residues. Each sequence is tokenized at the single-residue level over a 21-character alphabet comprising the 20 standard amino acids plus an unknown token~X (\texttt{ACDEFGHIKLMNPQRSTVWYX}).
No subsampling of long sequences is applied; all sequences within the length range are used as-is. For sequence-based motif scaffolding, we finetune on a subset of UniRef50 filtered to sequences with AlphaFold2~\cite{jumper2021highly} pLDDT $\geq 95$ and lengths between 10 and 200, yielding 480,143 sequences.   

\subsubsection{Model Specification}\label{suppl:seq_model}
\paragraph{Architecture.}
We use the Diffusion Transformer (DiT)~\cite{peebles_scalable_2023} architecture for both the insertion and deletion models.
Each model consists of 14 transformer blocks with a hidden dimension of 1600, 25 attention heads, and an MLP expansion ratio of 6.0$\times$, totaling 642M parameters. Dropout is set to 0.1. Length conditioning is performed via a continuous sinusoidal length embedding over the range $[10, 1024]$.


\paragraph{Training.}
Both models are trained with AdamW~\citep{kingma_adam_2017} ($\beta_1 = 0.9$, $\beta_2 = 0.95$, no weight decay) and a learning rate of $3 \times 10^{-4}$ with 2{,}000 warmup steps followed by a cosine annealing schedule ($\eta_{\min} = 10^{-5}$). Gradient clipping is applied at a norm of 5.0. Dynamic batching is used with a budget of 80{,}000 tokens per batch with approximate random-length sampling. 
We use gradient accumulation over 4 micro-batches, yielding an effective batch size of 320{,}000 tokens. An EMA of the model parameters is maintained with a decay of 0.9999 and is used for all evaluations. Both the insertion and deletion models are trained for 150k gradient steps.

To enable classifier-free guidance during sampling, the length condition $c$ is randomly dropped with probability 0.2 during training, so that a single model learns both the conditional edit rate $u_t^\theta(x | x_t, c)$ and the unconditional rate $u_t^\theta(x | x_t)$. At sampling time, these are combined as described in Eq.~\ref{eqn:cfg_rate} and \ref{eqn:cfg_q}.
All training was carried out on 4 NVIDIA RTX PRO 6000 GPUs, taking approximately 3 days.

\paragraph{Sampling.}
Generation proceeds by simulating the learned CTMC from the empty sequence ($t=0$) to the data distribution ($t=1$) over a uniform time grid with step size $\tau = 1/T$ (Euler solver).
Each step independently samples the number of insertions at each position $i$ from a Poisson distribution with intensity $\lambda_{t,i}^{\mathrm{ins}}(X_t)\tau$, and draws inserted token identities from $Q_{t,i}^{\mathrm{ins}}(\cdot | X_t)$.
This first-order approximation corresponds to the linearization $1 - e^{-\lambda \tau} \approx \lambda\tau$.

To mitigate discretization errors that accumulate over these Euler steps, we employ the corrector procedure of \citet{havasi2025edit}, which leverages both the forward rate $u_t$ (insertion) and reverse rate $\tilde{u}_t$ (deletion). The combined rate $(1+\alpha)u_t + \alpha\tilde{u}_t$ preserves the marginal distribution $p_t$ for any corrector strength $\alpha \geq 0$. Note that when $\alpha = 0$, this reduces to the standard forward-only Euler sampler.
This allows the introduction of additional self-correction without altering the target distribution.
Concretely, at each time step $t_k$ with step size $h$:
\begin{enumerate}[leftmargin=1.5em,itemsep=2pt]
  \item \textbf{Insertion sub-step.}
    For each position $i$ of the current sequence $X_{t_k}$, we sample insertions from a Poisson distribution with intensity $(1+\alpha)\lambda_{t_k,i}^{\mathrm{ins}}(X_{t_k})\tau$.     Inserted token identities are drawn from $Q_{t_k,i}^{\mathrm{ins}}(\cdot | X_{t_k})$. This produces an intermediate state $Y$ that overshoots by $\alpha \tau$ in time.
  \item \textbf{Deletion sub-step.}
    For each position $i$ of $Y$, we sample whether to delete that position with probability $\alpha\tilde{\lambda}_{t_k',i}^{\mathrm{del}}(Y)\tau$, where $t_k' = t_k + (1+\alpha)\tau$ and $\tilde{\lambda}^{\mathrm{del}}$ is the learned reverse (deletion) rate. Deleted positions are removed, yielding $X_{t_{k+1}}$. This pulls back by $\alpha\tau$, so the net time advancement is $\tau$.
\end{enumerate}

We further apply classifier-free guidance (CFG) independently to both the rate magnitude $\lambda$ and the token distribution $Q$ for the insertion model and only to the latter for the deletion model. Following the formulation in \citet{havasi2025edit}, the guided quantities are
\begin{align}
  \tilde{\lambda}_{t,i}(x_t, c) &= \lambda_{t,i}(x_t | c)^{1+w}\,\lambda_{t,i}(x_t)^{-w}, \label{eqn:cfg_rate}\\
  \tilde{Q}_{t,i}(a | x_t, c) &\propto Q_{t,i}(a | x_t, c)^{1+w}\,Q_{t,i}(a | x_t)^{-w}, 
  \label{eqn:cfg_q}
\end{align}
where $w \geq 0$ is the guidance weight, $c$ is the target length, and the conditional and unconditional outputs are produced by the same model (with and without the length condition, respectively).
Eq.~\ref{eqn:cfg_rate} corresponds to log-linear extrapolation in rate space, $\log \tilde{\lambda} = (1+w)\log\lambda(\cdot | c) - w\log\lambda(\cdot)$, amplifying the conditional rate relative to the unconditional baseline. Eq.~\ref{eqn:cfg_q} applies log-linear extrapolation in token-distribution space, analogous to the rate guidance.
Together, guidance modulates both \emph{where} edits occur (via $\tilde{\lambda}$) and \emph{what} tokens are inserted (via $\tilde{Q}$).
Figure~\ref{fig:length_control} demonstrates the effectiveness of this length control mechanism.

For the results in the main text, we generate 500 sequences using $T = 5{,}000$ steps with corrector strength $\alpha = 1$ and guidance weight $w = 0.75$ for both the forward and reverse models.
For the length distribution evaluation (Figure~\ref{fig:seq}(e)), we generate a larger set of 1{,}000 sequences using $T = 10{,}000$ steps with the forward-only Euler sampler ($\alpha = 0$), nucleus sampling with $p = 0.95$, and temperature annealing from $\tau = 2.0$ to $\tau = 0.1$. 
For sequence-based motif scaffolding, generation starts from the motif tokens and proceeds for 5{,}000 steps with corrector strength $\alpha = 1$, with no classifier-free guidance or length conditioning applied.
   
\subsubsection{Evaluation Pipeline}\label{suppl:seq_eval}
We evaluate \ourmodel samples against ground-truth sequences in UniRef50 and samples generated by the baseline models.
For all baselines, we follow the DPLM~\citep{wang_diffusion_2024} evaluation protocol to generate 500 sequences in total: 50 sequences at each of the 10 fixed lengths $L \in \{100, 200, \ldots, 1000\}$.
UniRef50 reference sequences are obtained from 500 randomly sampled training sequences. We fold generated sequences into 3D structures with ESMFold~\citep{lin2023esmfold}. The predicted structures are then assessed with the metrics detailed below, following a similar protocol in structure design.

\textbf{Foldability} is measured by the sequence pLDDT score, as folded by ESMFold. We report the mean CA pLDDT across all residues of each structure, stratified by sequence length in bins of width 100 residues. \textbf{Novelty} is evaluated by performing an exhaustive structural search across the entire PDB database with FoldSeek. The maximum TM-score across all hits defines the pdb-TM, and structures with $\text{pdb-TM} < 0.5$ are classified as structurally novel. \textbf{Structural diversity} is reported as the ratio of distinct clusters to the total number of generated structures clustered by FoldSeek with a TM-score threshold of 0.5. \textbf{Secondary structure compositions} using DSSP are also reported to provide population-level structural similarity.

\subsubsection{Baseline Information}\label{suppl:seq_baselines}
We compare \ourmodel against four protein sequence generative models. For each baseline, we use the authors' official model checkpoints and follow the default sampling procedures. 

\textbf{EvoDiff}~\citep{alamdari_protein_2023} provides two discrete diffusion variants. EvoDiff-OADM uses a masking corruption process that replaces one residue with a mask token at each step until the sequence is fully masked. EvoDiff-D3PM uses a categorical transition-matrix corruption process that introduces substitution noise, so that the terminal distribution approaches a uniform amino-acid sequence. Sampling starts from a uniform amino-acid initialization and iteratively denoises. We report EvoDiff-OADM, the better-performing variant in the original evaluation. The model is trained on UniRef50 sequences 10-1024 residues long. Sequences longer than 1024 residues are subsampled into 1024-residue chunks. In our evaluation, we use the EvoDiff-OADM-640M checkpoint. 

\textbf{DPLM}~\citep{wang_diffusion_2024} is a discrete diffusion language model instantiated using an absorbing-mask corruption process. The corruption schedule replaces residues with a mask token and treats the mask as an absorbing noise state. In our evaluation, we use the DPLM-650M checkpoint trained on the same dataset as \citet{alamdari_protein_2023}. For sampling, we perform a 500-step discrete diffusion denoising process starting from a fully masked sequence, using temperature annealing from 2.0 to 0.1, with stochastic unmasking and top-$p=0.95$ filtering.

\textbf{ESM3}~\citep{hayes_simulating_2025} is a generative masked language model trained over multiple discrete token tracks that represent sequence, structure, and function. In our experiments, we evaluate only the sequence track, which is trained on a mixture of large-scale protein sequence databases including UniRef, MGnify, JGI, and OAS, together with sequences paired with structure datasets such as PDB, AlphaFoldDB, and ESMAtlas, and further augmented with inverse-folded sequences~\citep{hayes_simulating_2025}.
We generate sequences using 20-step iterative masked language model decoding with a cosine unmasking schedule and temperature annealing.

\textbf{SCISOR}~\citep{baron_diffusion_2025} is a discrete diffusion model specialized for deletions. Its forward noising process inserts random residues into natural protein sequences, and the learned reverse process removes residues to recover natural-like sequences. SCISOR is trained on the same dataset as \citet{alamdari_protein_2023}. For unconditional generation, we initialize with a random amino acid string and run a 10-step reverse deletion diffusion procedure at temperature 1.0.

\input{suppl/D_localpath}

\subsection{Peptide Co-Design}
\label{suppl:pep}

\subsubsection{Data Specification}
We use the dataset derived in PepFlow~\citep{li2024full}, which contains filtered PDB structure entries with peptide lengths ranging from 3 to 25. The full dataset contains 8,365 protein--peptide complex structures, which are clustered at 40\% peptide sequence identity, yielding 292 unique clusters. Ten clusters across different lengths are selected as the test set, comprising 158 structures. The binding pocket is given as additional conditioning information, following the practice in PepFlow.

Each data pair is a complex of a target receptor and its bound peptide. Following PepFlow, the binding pocket is provided as a fixed conditioning context, and the peptide is the target for generation. The peptide is modeled at the all-atom level: a backbone frame (rotation and translation), the residue type, and up to four sidechain torsion angles per residue. During training, uniform sampling over the training clusters is performed.

\subsubsection{Model Specification}
\paragraph{Architecture} We adapt PepFlow~\citep{li2024full}, an equivariant geometric GNN with heads to regress vector fields for all-atom peptide co-design. PepFlow represents all-atom structures with per-residue rotation and translation, amino acid logit, and up to four sidechain torsion angles. Built upon the IPA structure, PepFlow denoises these modalities together using (Riemannian) flow matching to co-design the peptide conditioned on the target protein structure. 
We enable variable-length generation with an order-preserving process: a rate head predicts how many residues to insert at each inter-residue gap, so the peptide length is generated rather than conditioned on the native length. Each inserted residue is predicted across all of PepFlow's modalities with additional heads when such an insertion happens.

\paragraph{Training} We optimize GPFlow with AdamW (learning rate $10^{-4}$, weight decay 0.01) at a total batch size of 48, keeping an EMA of the weights (decay 0.999) for evaluation. The model is trained across four NVIDIA RTX PRO 6000 GPUs for approximately two days. 
We adopt two techniques to improve the early rate prediction during generation. \textbf{Noisy context learning (NCL).} During training, the kept peptide context fed to the decoder is corrupted as $x_{\mathrm{ncl}} = w\,x_t + (1-w)\,\epsilon$, with $w\sim\mathcal{U}[0,1]$ and $\epsilon\sim\mathcal{N}(0,I)$, so the model learns to read structural guidance without relying on perfectly accurate context. \textbf{Scheduled sampling (SS).} With probability $0.5$, the self-conditioning channel is fed the decoder's own clean-data prediction $x_{\mathrm{pred}} = x_t + (1-t)\,v_\theta(x_t, t, z)$ in place of the ground truth, exposing the model to its own output.

\paragraph{Sampling} For each target, we generate 10 peptides over 250 sampling steps. The all-atom structure and sequence are produced with a low-temperature score-SDE over the continuous modalities (noise scale 0.05), while the peptide length is not supplied and emerges from the learned insertion rate.

\subsubsection{Evaluation Pipeline}\label{suppl:peptide_metric}
We use the metrics in PepFlow to evaluate the quality of generation across various modalities. For each baseline model, 10 samples are generated for each holdout target, and the metrics are averaged across all generations.

\paragraph{AAR} Amino Acid Recovery (AAR) quantifies the sequence identity between the generated and the native peptide. AAR is computed as the overlap ratio between the generated and the native sequence. A higher AAR indicates a closer match between the generated and native peptides in terms of amino acid composition, reflecting the accuracy of the generated sequence.

\paragraph{RMSD} To evaluate the generation quality of the peptides, the RMSD between the ground-truth and generated peptides is calculated from their CA coordinates using the Kabsch algorithm. A lower RMSD indicates a closer structural alignment between the generated and native peptides.

\paragraph{SSR} The Secondary Structure Ratio (SSR) assesses the similarity between the secondary structure of the generated peptide and the ground-truth peptide. This metric is computed by determining the ratio of identical entries in the secondary structure labels of the two peptides. The secondary structure labels are obtained using the DSSP software. A higher SSR indicates a closer match in the secondary structure between the generated and native peptides. 

\paragraph{BSR} Binding Site Recovery (BSR) gauges the interaction similarity between the generated peptide--target pair and the native pair. BSR assesses whether the generated peptide recognizes residues in the target protein similarly to the native peptide. Following PepFlow, a residue is considered to be in the binding site if its CB atom is within a 6 Å radius of any peptide residue. BSR is calculated as the overlap between the binding sites derived from the generated and native peptides. A higher BSR indicates a closer match between the generated and native peptide-protein interactions.

\paragraph{Affinity} The Affinity ratio represents the percentage of designed peptides exhibiting lower binding energy, indicating a higher binding affinity to the target protein compared to the native peptide. Following PepFlow, the binding energy is computed using the \textit{InterfaceAnalyzerMover} in PyRosetta~\citep{chaudhury2010pyrosetta}, after relaxing the complex and defining the interface between the peptide and target protein. A higher Affinity percentage indicates that the designed peptides exhibit enhanced binding affinity, suggesting potential improvements in their functional capabilities.

\paragraph{Stability} The Stability ratio is defined as the proportion of designed peptides that exhibit a lower energy score compared to the native complex. Following PepFlow, we utilize the \textit{FastRelax} method in PyRosetta, where each complex first undergoes relaxation, and the total score is evaluated using the REF2015 score function. The Stability metric is then calculated as the ratio of complexes with reduced energy scores, highlighting the designed peptides that contribute to the enhanced stability in the protein-peptide complex.

\paragraph{Designability} Similar to previous structure design tasks, Designability assesses whether a generated peptide structure corresponds to a sequence that can fold into a structure similar to itself. Specifically, we use ESMFold to predict the peptide structure and subsequently employ Rosetta \textit{FlexPep}~\citep{london2011rosetta,raveh2011rosetta} to dock the predicted structure into the target. The resulting docked structure is then compared with the native peptide structure, where a \emph{designable} peptide should exhibit less than 2 Å RMSD to the native structure. Designability is then reported as the ratio of designable peptides across all test targets.

\paragraph{Diversity} Similar to previous structure design tasks, Diversity is quantified by calculating all the pairwise TM-scores among the generated peptides for a given target using the original TM-align program~\citep{zhang2005tm}. The diversity metric is then defined as 1 minus the average TM-score. A higher diversity value indicates greater structural variation among the generated peptides, showcasing the extent of structural exploration in the design process.

\input{tabs/peptide_breakdown}
\paragraph{Adaptations to Variable-Length Design} Among the metrics above, AAR, RMSD, and SSR are computed per residue and therefore require a correspondence between the generated peptide of length $n$ and the native peptide of length $m$. When $n=m$, this is the identity correspondence $i\mapsto i$. Since \ourmodel is a variable-length model, the generated peptide length $n$ differs from $m$ on $67.9\%$ of samples, and no canonical identity correspondence exists.
We therefore align the two peptides with an order-preserving map. When $n=m$, we use the identity correspondence; otherwise, we compute a Needleman-Wunsch alignment with zero gap cost, which yields a monotone matching of $\min(n,m)$ residue pairs. We average each metric over the matched residues only and exclude the $|n-m|$ unmatched residues, so the three metrics do not penalize the length difference, which is instead reported as a separate length statistic. The AAR, RMSD, and SSR computed under this scheme are reported in Table~\ref{tab:composite-breakdown}.

\subsubsection{Baseline Information}
In addition to the base PepFlow model, for which we use the official PepFlow checkpoint for evaluation with 200 flow sampling steps, we also compare \ourmodel instantiation against two additional peptide co-design models or pipelines. We follow PepFlow's configurations for all three baselines for fair comparison.

\paragraph{RFDiffusion} The same RFDiffusion~\citep{rfdiffusion} model used in motif scaffolding is also used here for conditional peptide design. The official RFDiffusion implementation is used with 200 diffusion steps. ProteinMPNN~\citep{dauparas2022robust} is subsequently applied to redesign the amino acid types for evaluation.

\paragraph{ProteinGenerator} ProteinGenerator~\citep{lisanza2025multistate} is built on RFDiffusion and additionally supports co-designing protein backbone structures and sequences. The official inference scripts are used with 200 diffusion steps. For a fair comparison, no additional hotspot or DSSP information is fed into the model.

%% file: tabs/config.tex
\begin{table}[ht]
\centering
\caption{Per-task modeling choices for \ourmodel. All tasks use the order-preserving variant.}\label{tab:config}
\resizebox{\linewidth}{!}{
\small
\begin{tabular}{@{}lccccc@{}}
\toprule
Task & Length-dependent modality & $\mathcal{L}_\text{FM}$ & $\mathcal{L}_\text{rec}$ & Scheduler $\kappa_t$ & Backbone \\ \midrule
Unconditional structure & CA coordinates ($\mathbb{R}^3$) & OT-FM (Eq.~\ref{eqn:loss_ot_fm}) & mean MSE (Eq.~\ref{eqn:loss_rec_mean}) & $\min(1,t/0.6)$ & Proteina \\
Structure motif scaffolding & CA coordinates ($\mathbb{R}^3$) & OT-FM (Eq.~\ref{eqn:loss_ot_fm}) & mean MSE (Eq.~\ref{eqn:loss_rec_mean}) & $\min(1,t/0.6)$ & Proteina \\
Unconditional sequence & residue type (discrete) & none ($\mathcal{L}_\text{FM}{=}0$) & CE (Eq.~\ref{eqn:loss_dfm}) & $t^3$ (localized) & DiT \\
Sequence motif scaffolding & residue type (discrete) & none ($\mathcal{L}_\text{FM}{=}0$) & CE (Eq.~\ref{eqn:loss_dfm}) & $t^3$ (localized) & DiT \\
Peptide co-design & $\mathrm{SO}(3)$, $\mathbb{R}^3$, $\mathbb{T}$, type (simplex) & RFM \& CE & geodesic mean MSE & linear & PepFlow \\ \bottomrule
\end{tabular}
}
\end{table}

%% file: suppl/D_localpath.tex
\subsubsection{Localized Probability Paths}\label{suppl:localpath}
In Algorithm~\ref{alg:train}, step~4, each target position $i$ is independently included in $z_t$ with probability $\kappa_t$.
This corresponds to the standard factorized token-wise mixture path~\citep{gat2024discrete}, in which the inclusion of each position is an independent Bernoulli trial.
While simple, this factorized construction produces partial sequences $\hat{S}_t$ consisting of non-neighboring tokens interspersed with gaps.
For long sequences, the local context around each insertion point consists largely of gap positions, limiting the signal for predicting token identities. Adapting the construction introduced in~\citet{havasi2025edit}, we replace the factorized probability path with a localized construction that introduces spatial correlation: when a target position becomes present, its neighbors are encouraged to also become present, yielding local neighborhoods in $\hat{S}_t$.

\paragraph{Construction.}
Let $X_1 = y$ denote the target sequence length.
We introduce an auxiliary boolean indicator $\mathbf{m}_t \in \{\texttt{true}, \texttt{false}\}^{y}$, where $\mathbf{m}_t^j = \texttt{true}$ means target position $j$ is present in the partial sequence $\hat{S}_t$ at time $t$.
Rather than sampling each $\mathbf{m}_t^j$ independently, we construct $\mathbf{m}_t$ via an auxiliary space of Boolean variables $\mathbf{M} \in \{\texttt{true}, \texttt{false}\}^{y \times y}$, consisting of $y$ independent CTMC processes, one per row, all initialized at $\mathbf{M}_0 = \texttt{false}$.
The dynamics of row $i$ are:
\begin{equation}
\label{eq:local_ctmc}
u_t(\mathbf{M}^{i,j} | \mathbf{M}_t^i)
= \Bigl(\lambda_t^{\mathrm{indep}}\,\delta_{ij}
+ \mathds{1}_{[\mathbf{M}_t^{i,j-1} \vee \mathbf{M}_t^{i,j+1}]}\,\lambda^{\mathrm{prop}}\Bigr)
\bigl(\mathds{1}_{[\mathbf{M}^{i,j}]} - \delta_{\mathbf{M}^{i,j}}(\mathbf{M}_t^{i,j})\bigr).
\end{equation}
The diagonal entry $\mathbf{M}^{i,i}$ switches to \texttt{true} at the time-dependent rate $\lambda_t^{\mathrm{indep}} = \dot{\kappa}_t/(1-\kappa_t)$, independently of all other positions.
Once $\mathbf{M}^{i,i}$ is active, neighboring off-diagonal entries $\mathbf{M}^{i,j}$ ($j \neq i$) switch to \texttt{true} at a constant rate $\lambda^{\mathrm{prop}}$ whenever an adjacent entry ($\mathbf{M}_t^{i,j-1}$ or $\mathbf{M}_t^{i,j+1}$) is already \texttt{true}, propagating the \texttt{true} state outward from position $i$.
We use a cubic schedule $\kappa_t = t^3$ and set $\lambda^{\mathrm{prop}} = 3.0$.
The per-position indicator is recovered by taking the column-wise disjunction:
\begin{equation}
\label{eq:local_m_map}
\mathbf{m}_t^j = \mathbf{M}_t^{1,j} \vee \mathbf{M}_t^{2,j} \vee \cdots \vee \mathbf{M}_t^{y,j}.
\end{equation}
When $\lambda^{\mathrm{prop}} = 0$, each row activates only its diagonal entry, recovering the factorized probability path in which each $\mathbf{m}_t^j$ is independent.

\paragraph{Effective rate.}
When computing the training loss, the rate at which position $j$ transitions from absent to present (i.e., $\mathbf{m}_t^j$ switches to \texttt{true}) is no longer the uniform $\dot{\kappa}_t / (1 - \kappa_t)$ but an effective rate that depends on the local propagation state:
\begin{equation}
\label{eq:local_eff_rate}
\lambda_{j,t}^{\mathrm{eff}} = \lambda_t^{\mathrm{indep}}
+ \sum_{l=1}^{y} \mathds{1}\{\mathbf{M}_t^{l,j-1} \vee \mathbf{M}_t^{l,j+1}\}\,\lambda^{\mathrm{prop}}.
\end{equation}
Positions adjacent to already-present neighbors have a higher effective rate, while isolated positions receive only the independent rate.

\paragraph{Training loss.}
The localized path modifies both the Poisson NLL $\mathcal{L}_{\mathrm{PP}}$ (Eq.~\ref{eqn:adaptednll}) and the reconstruction loss $\mathcal{L}_{\mathrm{rec}}$ (Eq.~\ref{eqn:loss_rec_order}) by replacing the uniform weight $\dot{\kappa}_t/(1-\kappa_t)$ with the position-dependent effective rate.
The modified Poisson NLL is:
\begin{equation}
\label{eq:local_pp_loss}
\mathcal{L}_{\mathrm{GP}}^{\mathrm{loc}}
= \mathbb{E}_{t, Y_1}\!\left[
  \sum_{i=0}^{X_t} \lambda_t^i
  - \sum_{i=0}^{X_t} \biggl(\sum_{k \in \Omega_i} \lambda_{k,t}^{\mathrm{eff}}\biggr) \log \lambda_t^i
\right],
\end{equation} and the modified reconstruction loss is:
\begin{equation}
\label{eq:local_loss}
\mathcal{L}_{\mathrm{rec}}^{\mathrm{loc}}
= \mathbb{E}_{Y_1, Y_t}\!\left[
  -\sum_{i=0}^{X_t}\sum_{k \in \Omega_i}
  \lambda_{k,t}^{\mathrm{eff}}\,
  \log \rho_\theta^i(s_k | Y_t)
\right],
\end{equation}
where the sum runs over the $X_1 - X_t$ not-yet-present positions $k$ (grouped into bins $\Omega_i$), and $\rho_\theta^i$ is the sampling distribution at insertion position $i$. The effective rate $\lambda_{k,t}^{\mathrm{eff}}$ upweights reconstruction loss at positions adjacent to already-present residues, encouraging the model to extend locally consistent subsequences rather than predict isolated positions.

%% file: tabs/peptide_breakdown.tex
\begin{table}[ht]
\centering
\caption{Per-subset breakdown of \ourmodel's composite AAR/RMSD/SSR.}\label{tab:composite-breakdown}
\begin{tabular}{@{}lccc@{}}
\toprule
 & Length-matched & Length-unmatched & Composite \\ \midrule
AAR $\uparrow$ (\%)    & 57.67 & 51.83 & \text{53.71} \\
RMSD $\downarrow$ (\AA) & 0.94  & 1.60 & \text{1.39} \\
SSR $\uparrow$ (\%)    & 95.93  & 88.79 & \text{91.08} \\ \bottomrule
\end{tabular}
\end{table}

%% file: suppl/D_ablation.tex
\section{Ablation Studies}\label{suppl:ablation}
In this section, we ablate the key design choices of \ourmodel on the unconditional structure generation task using the PDB dataset, reporting the same metrics as in Table~\ref{tab:uncond}.

\subsection{Insertion Scheduler and Sampling Step Impact}
Table~\ref{tab:ablation} presents an ablation study of the length-insertion scheduler $\kappa_t$ and the number of sampling steps. Both the linear scheduler and the early-insertion scheduler ($\kappa_t:=\min(t/0.6,1)$, used in our main results) yield comparably high designability ($96.7$ and $96.1$, respectively), whereas the late-insertion scheduler ($\kappa_t:=t^2$) noticeably degrades designability to $91.6$. This suggests that growing the structure earlier in the generative trajectory leaves more steps for the continuous vector field to refine the inserted coordinates. For the number of sampling steps, increasing from $200$ to $400$ substantially improves designability ($89.3\to96.7$), while $800$ steps provide no further gain ($95.7$); we therefore adopt $400$ steps as a balance between generation quality and sampling cost.

\input{tabs/ablation}

\subsection{Designability-Diversity Trade-Off}\label{suppl:tradeoff}

\input{tabs/tradeoff}

Table~\ref{tab:tradeoff} reports the effect of the sampling noise scale $\gamma$ in the SDE sampler (Eq.~\ref{eqn:sample_sde}). A smaller $\gamma$ mimics low-temperature sampling, increasing designability at the cost of diversity (e.g., $\gamma=0.3$ attains $97.7$ designability but only $0.27$ diversity), whereas a larger $\gamma=0.5$ reverses this trade-off ($82.6$ designability, $0.45$ diversity). We adopt $\gamma=0.35$ as the default, which balances high designability with reasonable diversity. This trade-off also explains the generally lower diversity of the Proteina-family models reported in Table~\ref{tab:uncond}.

%% file: tabs/ablation.tex
\begin{table}[htbp]
\centering
\caption{Ablation studies of different \ourmodel variants on the PDB dataset.}\label{tab:ablation}
\begin{tabular}{@{}lccccc@{}}
\toprule
\multirow{2}{*}{\begin{tabular}[c]{@{}l@{}}\ourmodel\\ Variant\end{tabular}} & \multirow{2}{*}{\begin{tabular}[c]{@{}c@{}}Design-\\ ability $\uparrow$\end{tabular}} & \multirow{2}{*}{Diversity $\uparrow$} & \multicolumn{2}{c}{Novelty $\downarrow$} & \multirow{2}{*}{\begin{tabular}[c]{@{}c@{}}Struct\%\\ $\alpha$/$\beta$\end{tabular}} \\ \cmidrule(lr){4-5}
 &  &  & PDB & AFDB &  \\ \midrule
Base (400, Linear) & 96.7 & 0.30 & 0.77 & 0.81 & 56.3/13.1 \\ \midrule
early insertion & 96.1 & 0.26 & 0.76 & 0.80 & 61.2/13.8 \\
late insertion & 91.6 & 0.26 & 0.75 & 0.79 & 59.2/14.2 \\ \midrule
200 steps & 89.3 & 0.33 & 0.74 & 0.78 & 58.6/13.0 \\
800 steps & 95.7 & 0.22 & 0.76 & 0.80 & 60.1/14.3 \\ \bottomrule
\end{tabular}
\end{table}

%% file: tabs/tradeoff.tex
\begin{table}[htbp]
\centering
\caption{Ablation studies of different sampling-noise scales $\gamma$ for \ourmodel on the PDB dataset.}\label{tab:tradeoff}
\begin{tabular}{@{}lccccc@{}}
\toprule
\multirow{2}{*}{$\gamma$} & \multirow{2}{*}{\begin{tabular}[c]{@{}c@{}}Design-\\ ability $\uparrow$\end{tabular}} & \multirow{2}{*}{Diversity $\uparrow$} & \multicolumn{2}{c}{Novelty $\downarrow$} & \multirow{2}{*}{\begin{tabular}[c]{@{}c@{}}Struct\%\\ $\alpha$/$\beta$\end{tabular}} \\ \cmidrule(lr){4-5}
 &  &  & PDB & AFDB &  \\ \midrule
0.3 & 97.7 & 0.27 & 0.80 & 0.84 & 55.2/11.2 \\
0.35 (default) & 96.7 & 0.30 & 0.77 & 0.81 & 56.3/13.1 \\
0.5 & 82.6 & 0.45 & 0.73 & 0.78 & 56.7/12.9 \\ \bottomrule
\end{tabular}
\end{table}

%% file: suppl/E_result.tex
\section{Additional Results}\label{suppl:result}
In this section, we provide additional experimental results and visualizations of the \ourmodel generations to further validate its performance and generation quality.

\subsection{Unconditional Protein Generation}

\begin{figure}[ht]
    \centering
    \includegraphics[width=.8\linewidth]{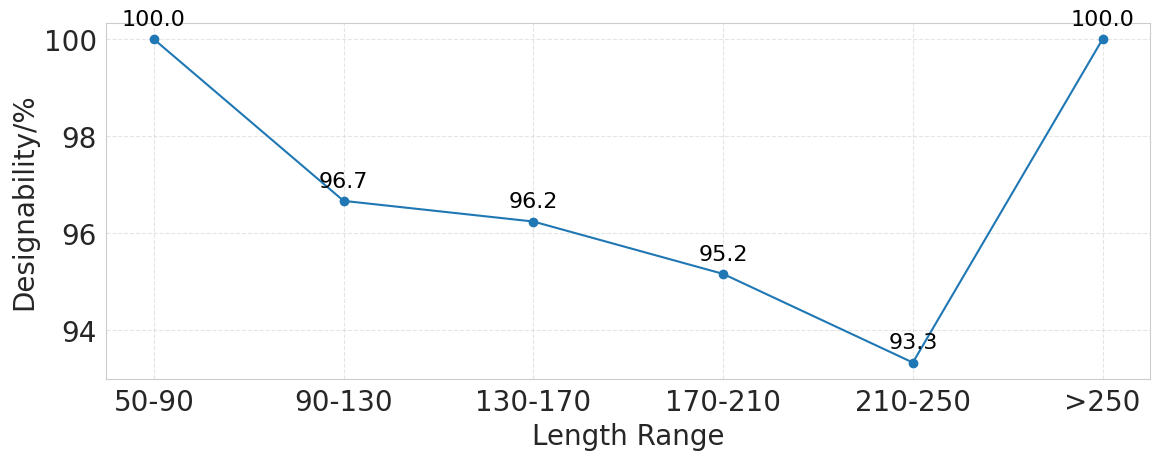}
    \caption{Designability per length range for generated structures.}
    \label{fig:length_design}
\end{figure}

For unconditional protein structure design, Figure~\ref{fig:length_design} reports \ourmodel's designability within each generated length bin. Designability remains above 93\% across all bins, indicating that the aggregate result is not driven solely by a narrow length range. Therefore, we treat it as strong evidence that the superior designability of \ourmodel is not due to length mismatch but rather to its capture of the joint distribution over length and continuous coordinates.



\subsection{Sequence-Based Length Control}
In protein design settings where the desired length is known \textit{a priori}, it would be useful to generate samples of a specified length. To this end, we evaluate the effectiveness of using classifier-free guidance (see Figure~\ref{fig:length_control}) for controlling protein length. Specifically, we generate 100 sequences at each target length from 100 to 1000 in increments of 100, using a guidance weight of $w=0.75$ and 5{,}000 Euler steps. As shown in Figure~\ref{fig:length_control}, the generated lengths closely match the specified targets across the full range, demonstrating that \ourmodel is capable of accurate and reliable protein length control. 
\begin{figure}[ht]
    \centering
    \includegraphics[width=\linewidth]{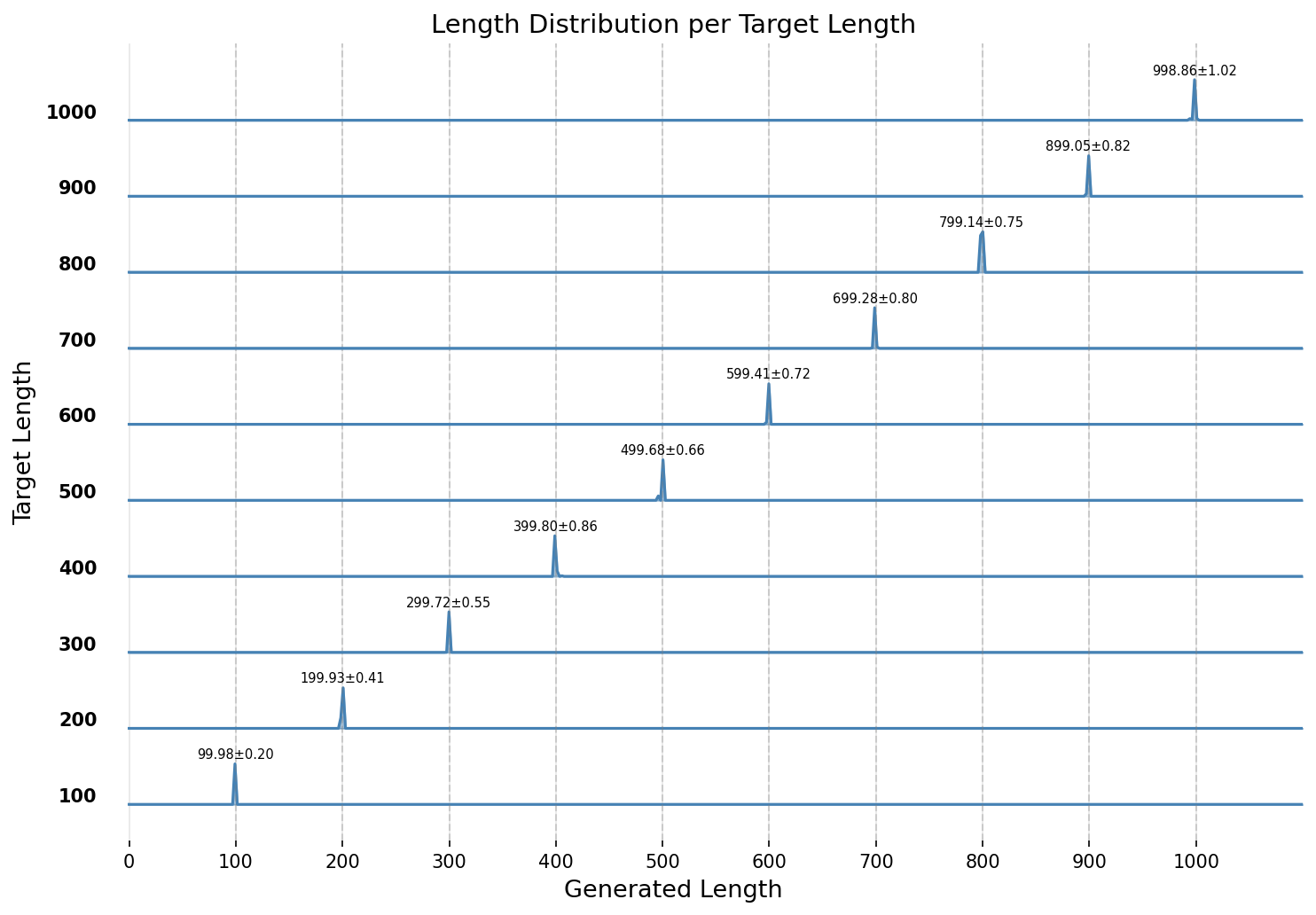}
    \caption{Evaluation of protein length control via classifier-free guidance. Each ridge shows the distribution of generated sequence lengths for a given target length, with the mean $\pm$ standard deviation annotated.}
    \label{fig:length_control}
\end{figure}

\subsection{Structure-Based Motif Scaffolding}\label{suppl:motif_length}

In Figure~\ref{fig:motif_length}, we showcase the length distribution across all 16 motif tasks generated by \ourmodel and Proteina. Compared to Proteina, our method yields a broader and smoother length distribution, reflecting its ability to adaptively generate backbones of appropriate lengths conditioned on the motif target, rather than relying on predefined contig templates.
Remarkably, \ourmodel can generate successful scaffolds longer than 200 residues, whereas Proteina generations are strictly limited by the contig templates, with most generations clustered around 75 residues.

We additionally demonstrate the per-task length distributions for representative tasks 1YCR, 3IXT, 5TPN, and 6E6R. For each task, the raw success lengths (red) and the average length within each unique success (blue) are plotted on the same scale in Figure~\ref{fig:motif_length_demo}. 
Because the sampling hyperparameters are identical across tasks, these results suggest that \ourmodel adapts its generated lengths to the motif input, spanning a wider range for 3IXT while concentrating on a shorter-length peak for 6E6R. The longer unique successes observed for 3IXT and 5TPN further indicate that the model can produce successful and diverse scaffolds across a broad length range.

\begin{figure}[ht]
    \centering
    \includegraphics[width=.8\linewidth]{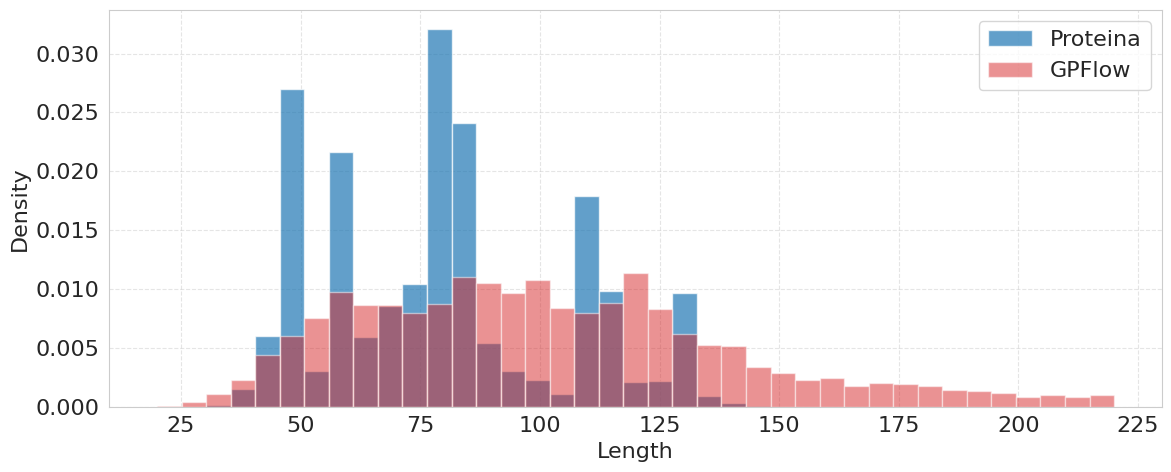}
    \vspace{-.5em}
    \caption{Aggregate protein length distribution across all 16 motif scaffolding tasks for \ourmodel and Proteina.}
    \label{fig:motif_length}
\end{figure}

\begin{figure}[ht]
    \centering
    \includegraphics[width=\linewidth]{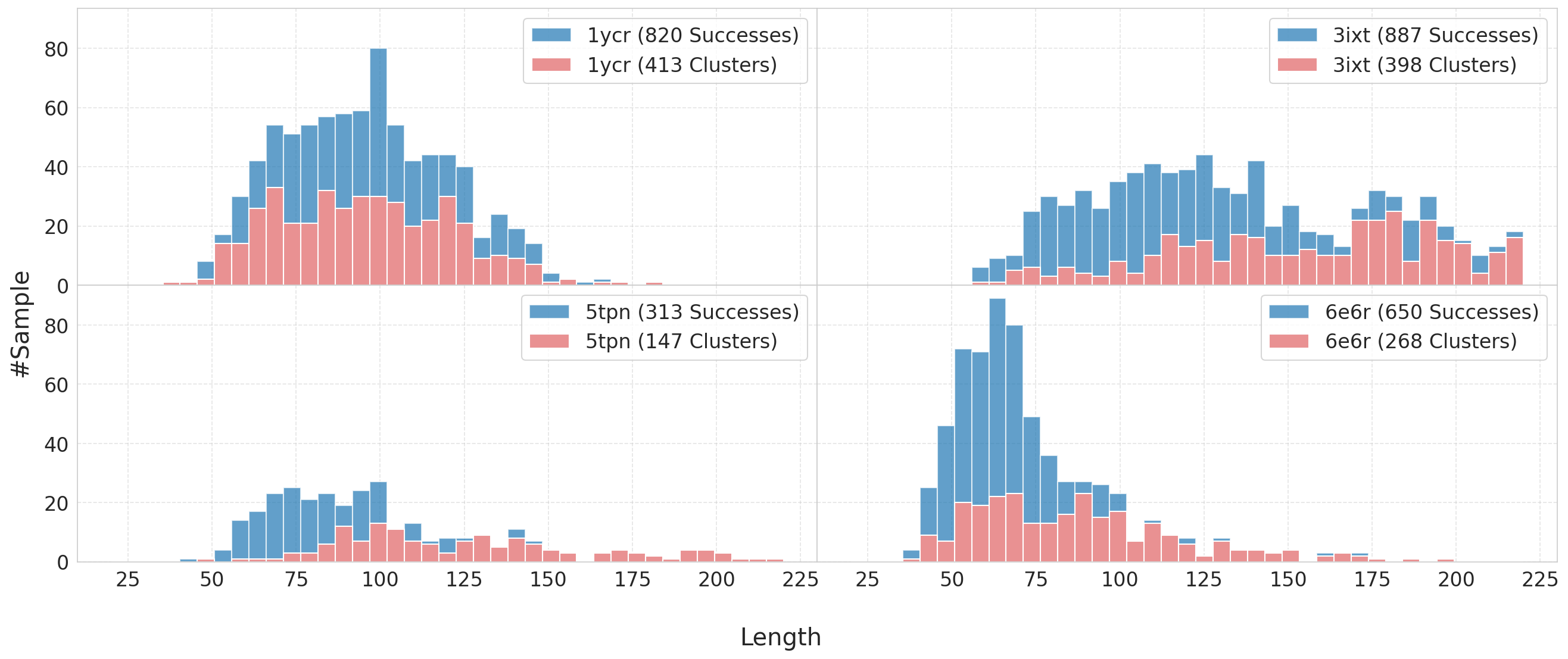}
    \vspace{-.5em}
    \caption{Protein length distributions for representative tasks for \ourmodel. Raw success lengths and the average length within each unique success are plotted on the same scale.}
    \label{fig:motif_length_demo}
\end{figure}



\subsection{Sequence-Based Motif Scaffolding}\label{suppl:seq_motif}
We adapt our 642M sequence-generation \ourmodel with the localized probability path described in Appendix~\ref{suppl:localpath} to perform sequence-based motif scaffolding. During inference, generation starts from the motif tokens at $t=0$, and the model grows a scaffold through progressive insertions with motif positions frozen. The learned deletion model serves as a corrector to refine the scaffold. The scaffold length emerges naturally from the learned rate function.

\input{tabs/motif_seq}

We compare against two sequence-based baselines: EvoDiff-OADM~\citep{alamdari_protein_2023} and DPLM~\citep{wang_diffusion_2024}, using their official checkpoints and inference protocols. A design is successful if CA motif RMSD $<$ 1Å and CA pLDDT $>$ 70 as evaluated by ESMFold~\citep{lin2023evolutionary}. We generate 100 candidates per task across the 17 motif problems adapted from the RFdiffusion benchmark~\citep{rfdiffusion} to the sequence setting by~\citet{alamdari_protein_2023}.

As shown in Table~\ref{tab:motif_scaffold_esmf}, \ourmodel passes 10 out of 17 tasks, compared to 5/17 for EvoDiff and 6/17 for DPLM. Notably, \ourmodel uniquely solves four tasks (5WN9, 2KL8, 4ZYP, 6VW1) where both baselines achieve zero success. On tasks where multiple methods succeed, \ourmodel shows substantial gains: 87\% on 3IXT versus 2\% (EvoDiff) and 23\% (DPLM); 59\% on 7MRX versus 0\% and 22\%. 



\subsection{Peptide Co-Design}\label{suppl:pep_length}
We compare the lengths of \ourmodel-generated samples with those of native peptides. In Figure~\ref{fig:peptide_length} (left), the generated and native length distributions agree closely. We further compare the generated length with the native length for each target (Figure~\ref{fig:peptide_length}, right). GPFlow recovers the lengths of short peptides nearly exactly; however, its conditional mean increasingly falls below the identity line as the native length increases, under-generating long peptides by approximately $1.6$ residues for native lengths above $12$. A possible contributing factor is the limited conditioning signal: GPFlow observes only the binding pocket and is not given the target peptide length. Thus, for longer peptides, residues extending beyond the pocket interface may be less directly constrained, potentially making their lengths harder to recover accurately.
\begin{figure}[ht]
    \centering
    \includegraphics[width=\linewidth]{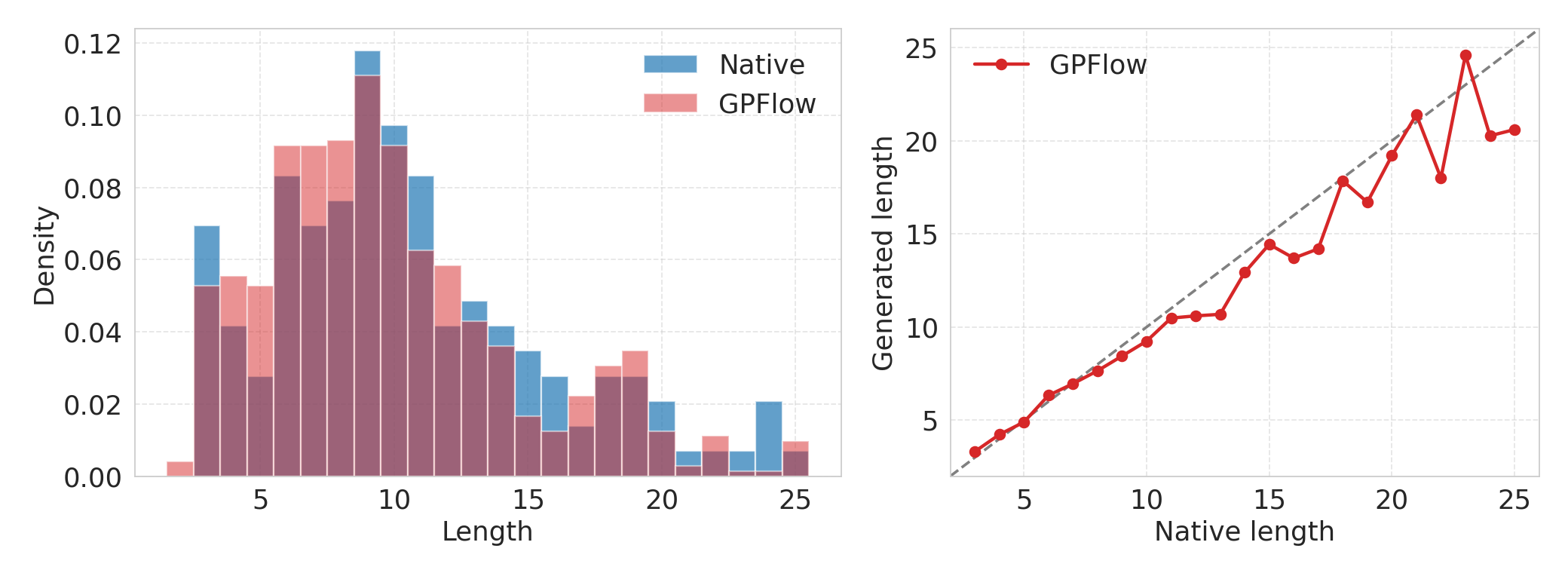}
    \vspace{-.5em}
    \caption{Peptide length. \textbf{Left}: length distributions of GPFlow-generated and native peptides. \textbf{Right}: mean generated length at each native length; the dashed line marks exact recovery.}
    \label{fig:peptide_length}
\end{figure}

%% file: tabs/motif_seq.tex
\begin{table}[ht]
\centering
\caption{Sequence-based zero-shot motif scaffolding success rates (100 generations per task). Superscripts denote the number of motif segments. Best results are shown in \textbf{bold}.}
\label{tab:motif_scaffold_esmf}
\small
\rowcolors{2}{gray!15}{white}
\begin{tabular}{l|ccc} \toprule
    PDB & \ourmodel (Ours) & DPLM [\citenum{wang_diffusion_2024}] & EvoDiff [\citenum{alamdari_protein_2023}] \\
  \midrule
    1BCF$^4$ & \textbf{0.31} & 0.00 & \textbf{0.31} \\
    1PRW$^2$ & 0.05 & \textbf{0.86} & 0.31 \\
    1QJG$^3$ & 0.00 & 0.00 & 0.00 \\
    1YCR$^1$ & 0.24 & \textbf{0.42} & 0.01 \\
    2KL8$^1$ & \textbf{0.01} & 0.00 & 0.00 \\
    3IXT$^1$ & \textbf{0.87} & 0.23 & 0.02 \\
    4JHW$^2$ & 0.00 & 0.00 & 0.00 \\
    4ZYP$^1$ & \textbf{0.01} & 0.00 & 0.00 \\
    5IUS$^2$ & 0.00 & 0.00 & 0.00 \\
    5TPN$^1$ & 0.00 & 0.00 & 0.00 \\
    5TRV$^1$ & 0.00 & 0.00 & 0.00 \\
    5WN9$^1$ & \textbf{0.13} & 0.00 & 0.00 \\
    5YUI$^3$ & 0.00 & 0.00 & 0.00 \\
    6E6R$^1$ & 0.12 & \textbf{0.90} & 0.05 \\
    6EXZ$^1$ & 0.00 & \textbf{0.02} & 0.00 \\
    6VW1$^2$ & \textbf{0.05} & 0.00 & 0.00 \\
    7MRX$^1$ & \textbf{0.59} & 0.22 & 0.00 \\ \midrule
    Pass rate & \textbf{10/17} & 6/17 & 5/17 \\
    \bottomrule
\end{tabular}
\end{table}

%% file: secs/6_impact.tex
\section{Broader Impacts}\label{suppl:impact}
This work introduces Generalized Poisson Flow (\ourmodel), a framework for variable-length generation evaluated on protein design. Its connection between generalized Poisson processes and probability flows may also be relevant to other domains with variable-dimensional data, although such applications are not evaluated here.
In computational biology, \ourmodel addresses the dependence of structural feasibility on sequence length in \emph{de novo} protein design. Learning a distribution over backbone lengths may reduce manual length searches in motif-scaffolding applications relevant to vaccine and enzyme design.
As with any advanced protein design technology, there is a potential for misuse, such as the design of harmful biological agents or toxins. The generalizability of our framework implies that it could potentially be applied to biological systems with unknown or hazardous properties. We are dedicated to ensuring the responsible use of our model for societal benefit.